\documentclass{article}

\PassOptionsToPackage{numbers, compress}{natbib}

     \usepackage[final]{neurips_2023}

\usepackage[utf8]{inputenc} %
\usepackage[T1]{fontenc}    %
\usepackage{hyperref}       %
\usepackage{url}            %
\usepackage{booktabs}       %
\usepackage{amsfonts}       %
\usepackage{nicefrac}       %
\usepackage{microtype}      %
\usepackage{xcolor}         %

\usepackage{xargs} %

\usepackage{amsmath}
\usepackage{amsthm}
\usepackage{amssymb}
\usepackage{mathtools}

\usepackage[colorinlistoftodos,prependcaption,textsize=footnotesize]{todonotes}
\newcommandx{\georg}[2][1=]{\todo[linecolor=magenta,backgroundcolor=magenta!25,bordercolor=magenta,#1]{G: #2}}
\newcommandx{\axel}[2][1=]{\todo[linecolor=red,backgroundcolor=red!25,bordercolor=red,#1]{A: #2}}
\newcommandx{\fredrik}[2][1=]{\todo[linecolor=green,backgroundcolor=green!25,bordercolor=green,#1]{F: #2}}

\newcommand{\E}{\mathbb{E}}

\newcommand{\R}{\mathbb{R}}

\DeclareMathOperator{\ReLU}{\mathtt{ReLU}}

\DeclareMathOperator{\sign}{sign}

\DeclareMathOperator{\GL}{GL}

\newcommand{\myparagraph}[1]{\textbf{{#1}.}}

\usepackage{xfrac}

\usepackage{thmtools,thm-restate}

\theoremstyle{plain}
\newtheorem{theorem}{Theorem}[section]

\newtheorem{lemma}[theorem]{Lemma}
\newtheorem{conjecture}[theorem]{Conjecture}

\theoremstyle{definition}
\newtheorem{definition}[theorem]{Definition}

\theoremstyle{remark}

\newtheorem{example}[theorem]{Example}

\usepackage{enumerate}
\usepackage{svg}
\usepackage{wrapfig}
\usepackage{colortbl}
\usepackage{bigstrut}

\newlength\myheight
\newlength\mydepth
\settototalheight\myheight{Xygp}
\settodepth\mydepth{Xygp}

\makeatletter
\newcommand{\myitem}[1]{%
\item[#1]\protected@edef\@currentlabel{#1}%
}
\makeatother

\title{Investigating how ReLU-networks encode symmetries}

\author{%
  Georg Bökman \quad Fredrik Kahl  \\
  Chalmers University of Technology \\
  \texttt{\{bokman, fredrik.kahl\}@chalmers.se}
}

\begin{document}

\maketitle

\begin{abstract}
Many data symmetries can be described in terms of group equivariance and
 the most common way of encoding group equivariances in neural networks is by building linear layers that are group equivariant.
In this work we investigate whether equivariance of a network implies that all layers are equivariant.
On the theoretical side we find cases where equivariance implies layerwise equivariance, but also
demonstrate that this is not the case generally.
Nevertheless, we conjecture that CNNs that are trained to be equivariant will exhibit layerwise equivariance and explain how this conjecture is a weaker version of the recent permutation conjecture by Entezari et al.\ [2022].
We perform quantitative experiments with VGG-nets on CIFAR10 and qualitative experiments with ResNets on ImageNet to illustrate and support our theoretical findings. 
These experiments are not only of interest for understanding how group equivariance is encoded in ReLU-networks, but they also give a new perspective on Entezari et al.'s permutation conjecture as we find that it
is typically easier to merge a network with a group-transformed version of itself than merging two different networks.
\end{abstract}

\begin{figure}
    \centering
    \textbf{Net A} \hspace{.28\textwidth} \textbf{Net B}
    \\
    \includeinkscape[width=.07\textwidth]{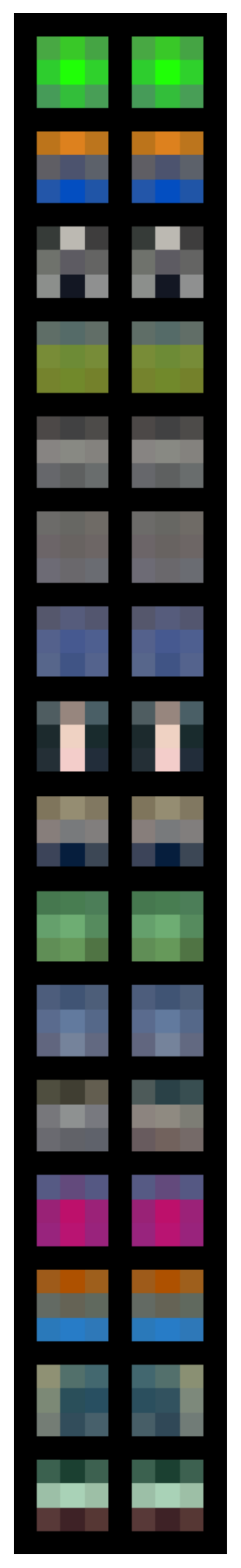_tex}
    \includeinkscape[width=.07\textwidth]{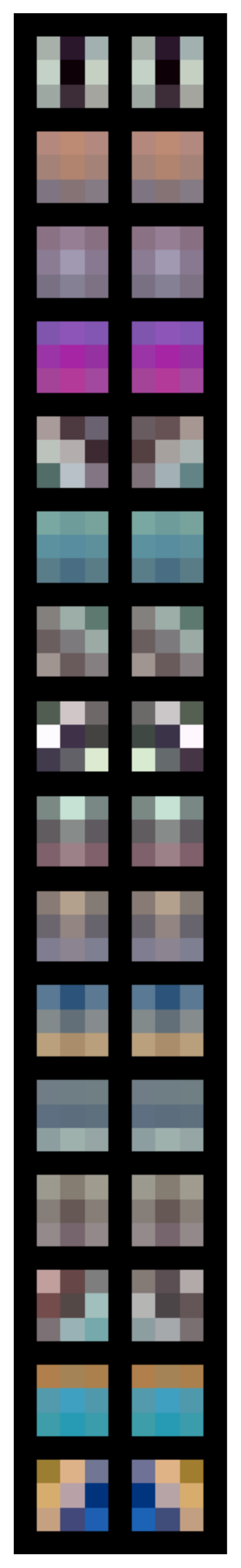_tex}
    \includeinkscape[width=.07\textwidth]{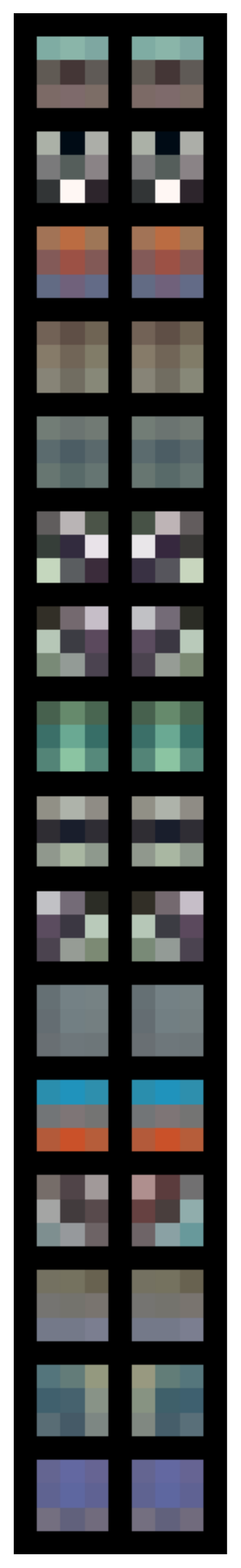_tex}
    \includeinkscape[width=.07\textwidth]{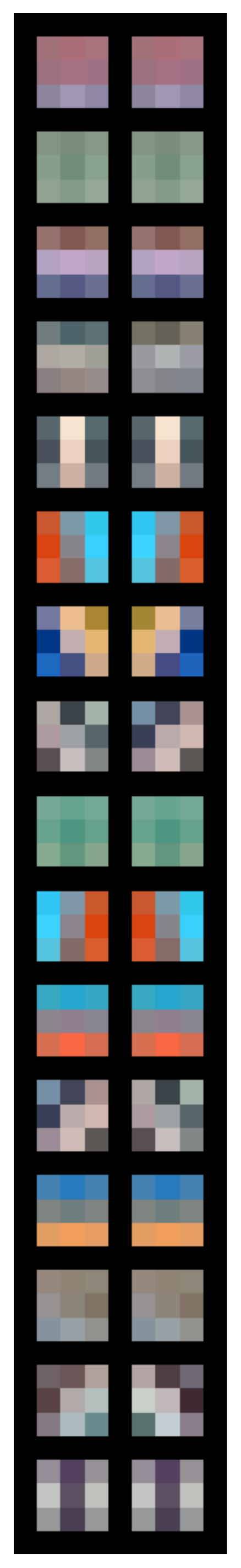_tex}
    \qquad \quad
    \includeinkscape[width=.07\textwidth]{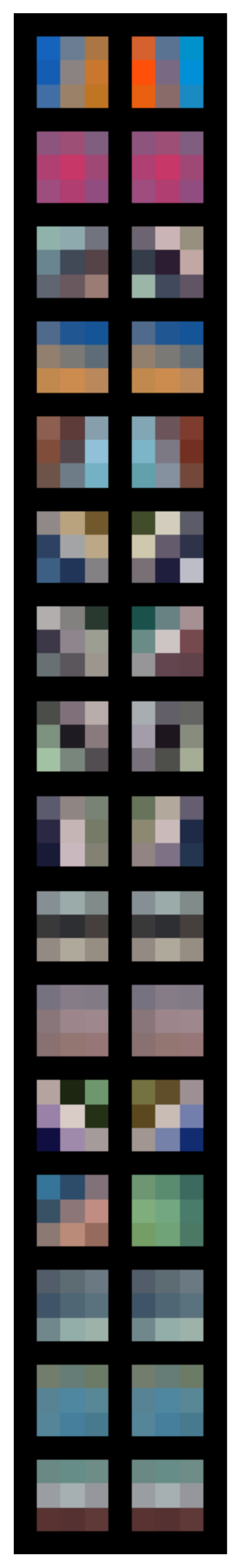_tex}
    \includeinkscape[width=.07\textwidth]{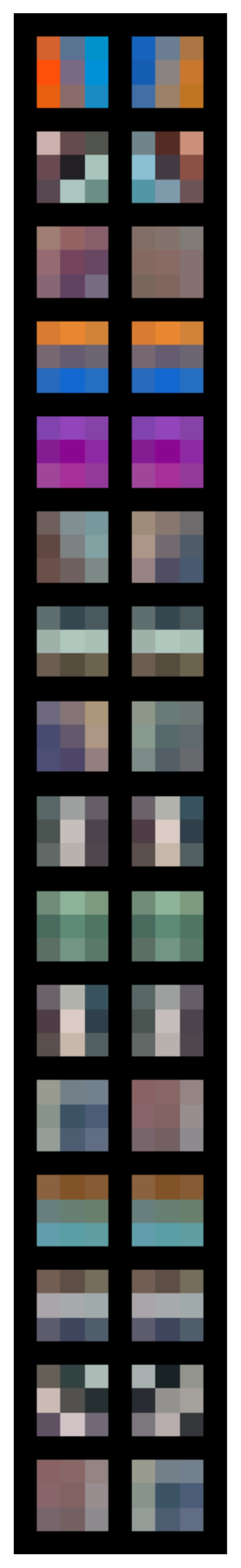_tex}
    \includeinkscape[width=.07\textwidth]{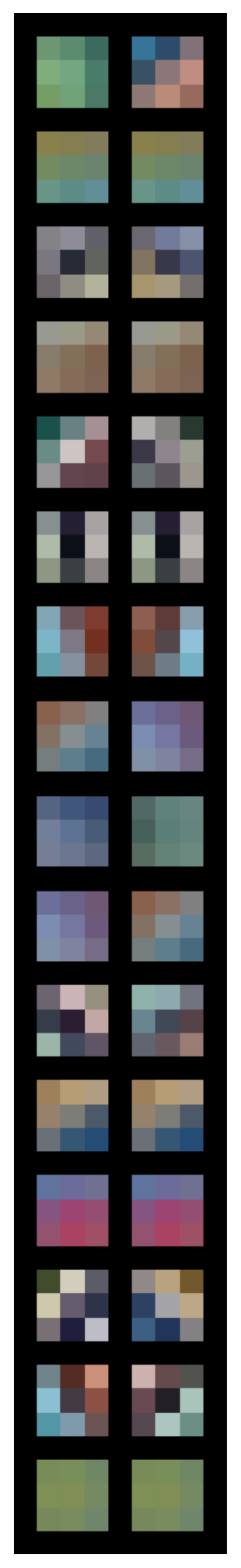_tex}
    \includeinkscape[width=.07\textwidth]{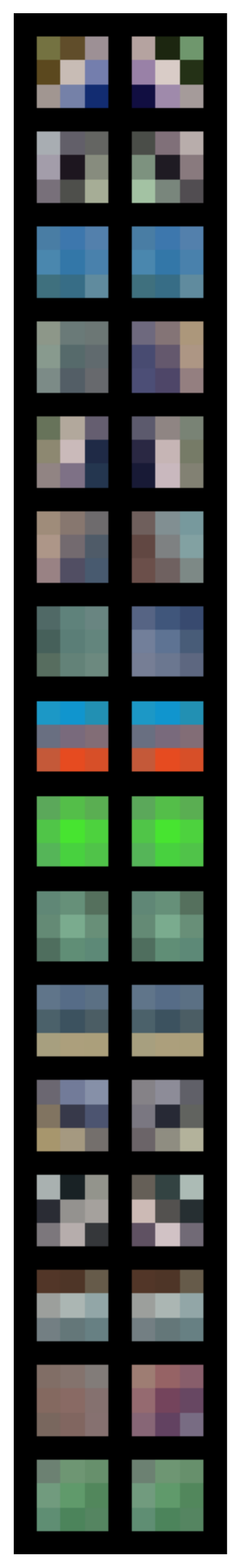_tex}
    \caption{
        Illustration of how a VGG11 encodes the horizontal flipping symmetry.
        The 64 filters in the first convolutional layer of two VGG11-nets trained on CIFAR10 are shown, where
        each filter is next to a filter in the same net which after horizontally flipping the filter results in a similar convolution output. %
        The order of the filters in the right columns is a permutation of the original order in the left columns.
        This permutation %
        is obtained with the method in Figure~\ref{fig:permutechannels}, i.e., the columns here correspond to \textbf{1.} and \textbf{4.} there.
        \textbf{Net A} is trained with an invariance loss to output the same logits for horizontally flipped images. It has learnt very close to a GCNN structure where each filter in the first layer is either horizontally symmetric or has a mirrored twin.
        \textbf{Net B} is trained with horizontal flipping data augmentation.
        This net is quite close to a GCNN structure, but the mirrored filters are less close to each other than in Net A.
        More details are given in Section~\ref{sec:experiments}.
    }
    \label{fig:vgg_conv_filters}
\end{figure}

\begin{figure}
\centering
\begin{minipage}{.25\textwidth}
    \centering
    \textbf{1.~~~~2.~~~~3.~~~~4.}
    \includegraphics[width=.7\textwidth]{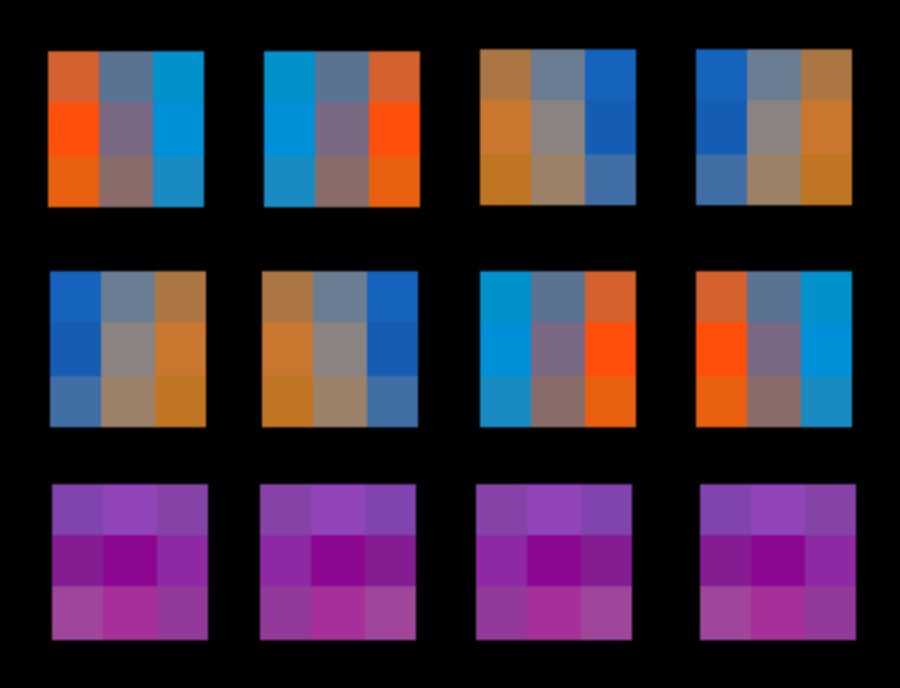}
\end{minipage}
\quad
\begin{minipage}{.70\textwidth}
    \caption{Permutation aligning horizontally flipped filters.
    \\
    \textbf{1.} The three original filters in a convolutional layer.%
    \\
    \textbf{2.} The three filters flipped horizontally.
    \\
    \textbf{3.} Permutation of the flipped filters to align them with the originals. This permutation is found using activation matching following \cite{li15convergent}.
    \\
    \textbf{4.} Flipping the filters back to their original form for illustration.
    }
    \label{fig:permutechannels}
\end{minipage}
\end{figure}

\nocite{equivariance-perspective}.

\section{Introduction}
Understanding the inner workings of deep neural networks is a key problem in machine learning and it has been investigated from many different perspectives, ranging from studying the loss landscape and its connection to generalization properties to understanding the learned representations of the networks. Such an understanding may lead to improved optimization techniques, better inductive biases of the network architecture and to more explainable and predictable results. In this paper, we focus on ReLU-networks---networks with activation function $\mathtt{ReLU}(x):=\max(0, x)$---and how they encode and learn data symmetries via equivariance.

Equivariances can be built into a neural network by design. The most classical example is of course the CNN where translational symmetry is obtained via convolutional layers. Stacking such equivariant layers in combination with pointwise $\mathtt{ReLU}$ activations results in an equivariant network. We will pursue a different research path and instead start with a neural network which has been trained to be equivariant and ask how the equivariance is encoded in the network. This approach in itself is not new and has been, for instance, experimentally explored in \cite{lenc2015understanding} where computational methods for quantifying layerwise equivariance are developed. We will shed new light on the problem by deriving new theoretical results when network equivariance implies layerwise equivariance. We will also give counterexamples for when this is not the case. Another insight is obtained via the recent conjecture by  \citet{entezariRolePermutationInvariance2022} which states that 
networks with the same architecture trained on the same data are often close to each other in weight space modulo permutation symmetries of the network weights.
Our new conjecture~\ref{conj:gcnn} which we derive from the conjecture of \citeauthor{entezariRolePermutationInvariance2022} states that most SGD CNN solutions will be close to group equivariant CNNs (GCNNs).

Our theoretical results and new conjecture are also validated with experiments. We focus exclusively on the horizontal flipping symmetry in image classification, for the following three reasons:
1.~Horizontal flipping is ubiquitously assumed to not change the class of an image and horizontal flipping data augmentation is in practice always used.
2.~The group of horizontal flips ($S_2$) is small and has simple representation theory.
3.~As we study $\ReLU$ as activation function, if a network is layerwise equivariant, then the representations acting on the feature spaces must be permutation representations as we shall prove later (cf.\ Section~\ref{sec:relu}).

The experiments show that networks trained on CIFAR10 and ImageNet with
horizontal flipping data augmentation are close to GCNNs and in particular when we train networks
with an invariance loss, they become very close to GCNNs.
This is illustrated in Figure~\ref{fig:vgg_conv_filters} and also discussed in Section~\ref{sec:experiments}.
As further support of our main conjecture, we find that the interpolation barrier is lower for merging a ResNet50 with a flipped version of itself than the barrier for merging two separately trained ResNet50's.
In summary, our main contribution is a new conjecture on how ReLU-networks encode symmetries which is supported by both theory and experiments.

\subsection{Related work}
\myparagraph{Group equivariant neural networks} The most common
approach to encoding group symmetries into neural networks is by
designing the network so that each layer is group equivariant
\citep{shawe-taylor,finzi-etal-icml-2021}.
This has been done for many groups, e.g., the permutation group \citep{zaheerDeepSets2017, maron2018invariant} and the 2D or 3D rotation group \citep{weiler3DSteerableCNNs2018a, tensorfield, fuchsSETransformers3D, zznet, villar_scalars_2021}.
Another possibility is to use symmetry regularization during training \citep{NEURIPS2022_dcd29769}.
Group equivariant nets have many possible applications, for instance estimating molecular properties \citep{schnet} or image feature matching \citep{rotation-invariant-matcher}.
For this paper, the most relevant group equivariant nets are so-called GCNNs acting on images, 
which are equivariant to Euclidean transformations \citep{cohenGroupEquivariantConvolutional2016, worrallHarmonicNetworksDeep2017a, bekkersRotoTranslationCovariantConvolutional2018a, weilerLearningSteerableFilters2018, weiler_e2_2019}.
In the experiments in Section~\ref{sec:experiments} we will investigate
how close to GCNNs ordinary CNNs are when they are trained to be equivariant---either by using an invariance loss or by using data augmentation.

\myparagraph{Measuring layerwise equivariance}
\cite{lenc2015understanding} measure layerwise equivariance by
fitting linear group representations in intermediate feature spaces of networks.
Our approach is similar but we restrict ourselves to permutation representations and consider fitting group representations in all layers simultaneously rather than looking at a single layer at a time.
We explain in Section~\ref{sec:relu} why it is enough to search for permutation representations, given that we have a network with $\ReLU$ activation functions.
\cite{bruintjes2023affects} measure layerwise equivariance by counting how many filters in each layer have group-transformed copies of themselves in the same layer. 
They find that often more layerwise equivariance means better performance.
Our approach explicitly looks for permutations of the filters in each layer that align the filters with group-transformed versions of themselves.
We are thus able to capture how close the whole net is to being a GCNN.
\cite{olah2020naturally} find evidence for layerwise equivariance using a qualitative approach of visualizing filters in each layer and finding which look like group-transformed versions of each other.
\cite{gruver2023the} measure local layerwise equivariance, i.e., layerwise robustness to small group transformations,
by computing derivatives of the output of a network w.r.t.\ group transformations of the input.
They interestingly find that transformers can be more translation equivariant than CNNs, due to aliasing effects in CNN downsampling layers.

\myparagraph{Networks modulo permutation symmetries}
\cite{li15convergent} demonstrated that two CNNs with the same architecture trained on the same data often learn similar features.
They did this by permuting the filters in one network to align with the filters in another network.
\cite{entezariRolePermutationInvariance2022} conjectured that it should be possible to permute the weights of one network to put it in the same loss-basin as the other network and that the networks after permutation should be linearly mode connected.
I.e., it should be possible to average (``merge'') the weights of the two
networks to obtain a new network with close to the same performance.
This conjecture has recently gained empirical support in particular
through \citep{ainsworthGitReBasinMerging2022a}, where several
good methods for finding permutations were proposed and
\citep{jordanREPAIRREnormalizingPermuted2022} where 
the performance of a merged network was improved by resetting batch norm statistics and batch statistics of individual neurons to alleviate what the authors call variance collapse.
In Section~\ref{sec:entezari} we give a new version of \cite{entezariRolePermutationInvariance2022}'s conjecture
by conjecturing that CNNs trained on group invariant data should be close to GCNNs.

\subsection{Limitations} %
\label{sec:limitations}

While we are able to show several relevant theoretical results,
we have not been able to give a conclusive answer in terms of necessary and sufficient conditions to when equivariance of
a network implies layerwise equivariance or that the network can be rewritten to be 
layerwise equivariant. An answer to this question would be valuable from a theoretical point of view.

In the experiments, we limit ourselves to looking at a single symmetry of images
-- horizontal flipping.
As in most prior work on finding weight space symmetries in trained nets, we only
search for permutations and not scaled permutations which would also be compatible with
the $\ReLU$-nonlinearity.
Our experimental results depend on a method of finding permutations between networks that is not perfect \cite{li15convergent}. Future improvements to permutation finding methods may increase the level of certainty which we can have about Conjectures~\ref{conj:entezari} and~\ref{conj:gcnn}.

\section{Layerwise equivariance}
\label{sec:layerwise-equiv}
We will assume that the reader has some familiarity with group theory and here only briefly review
a couple of important concepts.
Given a group $G$, a representation of $G$ is a group homomorphism $\rho: G\to\GL(V)$ from $G$
to the general linear group of some vector space $V$. E.g., if $V=\R^m$, then $\GL(V)$ consists of 
all invertible $m\times m$-matrices and $\rho$ assigns a matrix to each group element, so 
that the group multiplication of $G$ is encoded as matrix multiplication in $\rho(G)\subset\GL(V)$.

A function $f:V_0\to V_1$ is called \emph{equivariant} with respect to a group $G$ with representations
$\rho_0$ on $V_0$ and $\rho_1$ on $V_1$ if $f(\rho_0(g)x)=\rho_1(g)f(x)$ for all $g\in G$ and $x\in V_0$.
An important representation that exists for any group on any vector space is the trivial representation
where $\rho(g) = I$ for all $g$. If a function $f$ is equivariant w.r.t.\ $\rho_0$ and $\rho_1$ as above,
and $\rho_1$ is the trivial representation, we call $f$ \emph{invariant}.
A \emph{permutation representation} is a representation $\rho$ for which $\rho(g)$ is a permutation matrix for all $g\in G$.
We will also have use for the concept of a group invariant data distribution.
We say that a distribution $\mu$ on a vector space $V$ is $G$-invariant w.r.t. a representation $\rho$ on $V$,
if whenever $X$ is a random variable distributed according to $\mu$, then $\rho(g)X$ is also distributed according to $\mu$.

Let's for now\footnote{ 
The main results in this section are generalized to networks with affine layers in Appendix~\ref{app:biaslayers}.
}
consider a neural network $f:\R^{m_0}\to\R^{m_L}$ as a composition of linear layers $W_j:\R^{m_{j-1}}\to\R^{m_j}$ and activation functions $\sigma$:
\begin{equation}\label{eq:net}
f(x) = W_L\sigma(W_{L-1}\sigma(\cdots W_2\sigma(W_1 x)\cdots)).
\end{equation}
Assume that $f$ is equivariant with respect to a group $G$ with representations $\rho_0:G\to\mathrm{GL}(\R^{m_0})$ on the input and $\rho_{L}:G\to\mathrm{GL}(\R^{m_L})$ on the output, i.e.,
\begin{equation}\label{eq:net-equiv}
f(\rho_0(g)x) = \rho_L(g)f(x), \quad \text{for all $x\in\R^{m_0}, g\in G$}.
\end{equation}
A natural question to ask is what the equivariance of $f$ means for the layers it is composed of.
Do they all have to be equivariant? For a layer $W_j$ to be $G$-equivariant we require that there is
a representation $\rho_{j-1}$ on the input and a representation $\rho_{j}$ on the output of $W_j$ such that $\rho_j(g)W_j = W_j\rho_{j-1}(g)$ for all $g\in G$.
Note again that the only representations that are specified for $f$ to be equivariant are 
$\rho_0$ and $\rho_L$, so that all the other $\rho_j:G\to\mathrm{GL}(\R^{m_j})$ can be arbitrarily chosen.
If there exists a choice of $\rho_j$'s that makes each layer in $f$ equivariant (including the nonlinearities $\sigma$), we call $f$ \emph{layerwise equivariant}.
In general we could have that different representations act on
the input and output of the nonlinearities $\sigma$, but
as we explain in Section~\ref{sec:relu}, this cannot be the case
for the $\ReLU$-nonlinearity, which will be our main focus.
Hence we assume that the the same group representation acts on the input and output of $\sigma$ as above.

The following simple example shows that equivariance of $f$ does not imply layerwise equivariance.
\begin{example}
\label{ex:zero-net}
    Let $G=S_2=\{i, h\}$ be the permutation group on two indices, where $i$ is the identity permutation and $h$ the transposition of two indices.
    Consider a two-layer network $f: \R^2\to\R$,
    \[
        f(x) = W_2\ReLU(W_1 x),
    \]
    where $W_1 = \begin{pmatrix} 1 & 0 \end{pmatrix} $ and $W_2 = 0$.
    $f$ is invariant to permutation of the two coordinates of $x$ (indeed, $f$ is constant $0$).
    Thus, if we select $\rho_0(h) = \begin{psmallmatrix} 0 & 1 \\ 1 & 0 \end{psmallmatrix}$ and
    $\rho_2(h) = 1$, then $f$ is equivariant (note that a representation of $S_2$ is specified by giving an involutory $\rho(h)$ as we always have $\rho(i)=I$).
    However, there is no choice of a representation $\rho_1$ that makes
    $W_1$ equivariant since that would require $\rho_1(h) W_1 = W_1 \rho_0(h) = \begin{pmatrix} 0 & 1 \end{pmatrix}$, which is impossible (note that $\rho_1(h)$ is a scalar).
    The reader will however notice that we can define a network $\tilde f$, with
    $\tilde W_1=\begin{pmatrix} 0 & 0 \end{pmatrix}$, $\tilde W_2=0$
    for which we have $f(x)=\tilde f(x)$ and then $\tilde f$
    is layerwise equivariant when choosing $\rho_1(h) = 1$.
\end{example}

In the example just given it was easy to, given an equivariant $f$, find an
equivalent net $\tilde f$ which is layerwise equivariant.
For very small 2-layer networks we can prove that this will always be the case, see Proposition~\ref{prop:2d2layer_degenerate}.

Example~\ref{ex:zero-net} might seem somewhat unnatural, but we note that the existence of ``dead neurons'' (also known as ``dying $\ReLU$s'') with constant zero output is well known~\cite{CiCP-28-1671}.
Hence, such degeneracies could come into play and make the search for linear representations in trained nets more difficult.
We will however ignore this complication in the experiments in the present work.

From an intuitive point of view, equivariance of a neural network should mean that some sort of group action is present on the intermediate feature spaces as the network should not be able to ``forget'' about the equivariance in the middle of the net only to recover it at the end.
In order to make this intuition more precise we switch to a more abstract formulation in Appendix~\ref{app:equiv-comp}.
The main takeaway will be that it is indeed possible to define group actions
on modified forms of the intermediate feature spaces whenever the network is equivariant,
but this will not be very
practically useful as it changes the feature spaces from vector spaces to
arbitrary sets, making it impossible to define linear layers to/from these feature spaces.

An interesting question that was posed by \citet{elesedy21a}, is whether
non-layerwise equivariant nets can ever perform better at equivariant tasks than layerwise equivariant nets.
In Appendix~\ref{app:elesedy}, we give a positive answer by demonstrating that when the size of
the net is low, equivariance can hurt performance.
In particular we give the example of \emph{cooccurence of equivariant features} in \ref{app:cooccur-equiv}. 
This is a scenario in image
classification, where important features always occur in multiple orientations in every image.
It is intuitive that in such a case, it suffices for the network to recognize a feature in
a single orientation for it to be invariant \emph{on the given data}, but a network recognizing
features in only one orientation will not be layerwise equivariant.

\subsection{From general representations to permutation representations}
\label{sec:relu}
We will now first sidestep the issue of an equivariant network perhaps not being layerwise equivariant,
by simply assuming that the network is layerwise equivariant with a group representation acting
on every feature space (this will to some degree be empirically justified in Section~\ref{sec:experiments}).
Then we will present results on 2-layer networks, where we can in fact show that layerwise equivariance is implied by equivariance.

The choice of activation function determines which representations are at all possible.
In this section we lay out the details for the $\ReLU$-nonlinearity.
We will have use for the following lemma, which is essentially the well known property
of $\ReLU$ being ``positive homogeneous''.
\begin{restatable}{lemma}{relulemma}[{\citet[Lemma 3.1, Table 1]{godfrey2022symmetries}}]
\label{lem:relu}
   Let $A$ and $B$ be invertible matrices such that 
   $ \ReLU(Ax) = B\ReLU(x) $ for all $x$.
   Then $A=B=PD$ where $P$ is a permutation matrix and $D$ is a diagonal matrix with positive entries on the diagonal. 
\end{restatable}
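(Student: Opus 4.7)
The plan is to test the identity $\ReLU(Ax)=B\ReLU(x)$ on a short list of carefully chosen inputs and read off the structure of $A$ and $B$ from the resulting constraints.

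First I would plug in the standard basis vectors $e_i$. Since $e_i\ge 0$ componentwise we have $\ReLU(e_i)=e_i$, so the hypothesis gives $\ReLU(Ae_i)=Be_i$; reading this column by column shows that $B$ equals the entrywise $\ReLU$ of $A$. Next I would plug in $-e_i$, for which $\ReLU(-e_i)=0$ and hence $\ReLU(-Ae_i)=0$. This forces $Ae_i\ge 0$ for every $i$, i.e.\ $A$ has nonnegative entries. Combining the two observations, $A=\ReLU(A)=B$, and both are entrywise nonnegative.

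The second step is to pin down the sparsity pattern. I would plug in $x=e_i-e_j$ for $i\neq j$, where $\ReLU(x)=e_i$, so the right-hand side equals $Be_i=A_{\cdot i}$. The left-hand side is $\ReLU(A_{\cdot i}-A_{\cdot j})$, and comparing componentwise yields
\begin{equation*}
\max(A_{ki}-A_{kj},0)=A_{ki}\quad\text{for every row index }k.
\end{equation*}
When $A_{ki}>0$ this equation forces $A_{kj}=0$. Hence in every row of $A$ at most one entry can be strictly positive. Since $A$ is invertible, no row (and no column) can vanish, so by a pigeonhole count each row and each column contains exactly one positive entry, which is precisely to say $A=PD$ with $P$ a permutation matrix and $D$ a positive diagonal matrix. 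Together with $B=A$ this is the claim.

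The only delicate point I anticipate is establishing entrywise nonnegativity of $A$ before the $e_i-e_j$ argument; once that is in place the combinatorial deduction of the permutation-plus-diagonal structure is routine. No explicit use of invertibility is needed until the very last step, where it rules out zero rows and columns and upgrades ``at most one nonzero per row'' to ``exactly one nonzero per row and per column.''
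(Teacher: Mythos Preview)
Your proof is correct and follows essentially the same approach as the paper: first use $x=\pm e_k$ to show $A=B$ is entrywise nonnegative, then use a difference of basis vectors to force at most one nonzero per row, and finally invoke invertibility to upgrade to exactly one per row and column. The only cosmetic difference is that the paper tests $x=A_{ik}e_{k'}-A_{ik'}e_k$ (which makes the $i$th component of $Ax$ vanish exactly) while you use the simpler $x=e_i-e_j$ and read off the componentwise constraint $\max(A_{ki}-A_{kj},0)=A_{ki}$; both lead to the same conclusion with the same amount of work.
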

We provide an elementary proof in Appendix~\ref{app:proof-relu}.
It immediately follows from Lemma~\ref{lem:relu} that if $\ReLU$ is $G$-equivariant with respect to representations $\rho_0$ and $\rho_1$ on the input and output respectively, then
$\rho_0(g)=\rho_1(g)=P(g)D(g)$ for all $g\in G$.
I.e., the representations acting on the intermediate feature spaces in a layerwise equivariant
$\ReLU$-network are scaled permutation representations. This holds also if we add bias terms to the layers in \eqref{eq:net}.

We note that \citet{godfrey2022symmetries} consider more nonlinearities than $\ReLU$, and that their results on other nonlinearities could similarly be used to infer what input and output group representations are admissible for these other nonlinearities.
Also, \citet{shawe-taylor} derive in large generality what nonlinearities commute with which finite group representations, which is very related to our discussion. However, to apply the results from \cite{shawe-taylor} we would have to first prove that the representations acting on the inputs and outputs of the nonlinearity are the same.

For two-layer networks with invertible weight matrices we can show that equivariance implies layerwise equivariance with a scaled permutation representation acting on the feature space on which $\ReLU$ is applied.
The reader should note that the invertibility assumption is strong and
rules out cases such as Example~\ref{ex:zero-net}.

\begin{restatable}{proposition}{layerwisetwod}
\label{prop:2d2layer}
Consider the case of a two-layer network $f: \R^m\to\R^m$,
\[
    f(x) = W_2 \ReLU(W_1 x),
\]
where $\ReLU$ is applied point-wise. Assume that the matrices 
$W_1\in\R^{m\times m}$, $W_2\in\R^{m\times m}$ are non-singular.
Then $f$ is $G$-equivariant with $\rho_j:G\to\mathrm{GL}(\R^{m})$ for $j=0,2$ on the input and the output respectively if and only if
\[
\rho_0(g) = W_1^{-1}P(g)D(g)W_1 \quad \mbox{  and  } \quad
\rho_2(g) = W_2P(g)D(g)W_2^{-1},
\]
where $P(g)$ is a permutation and $D(g)$ a diagonal matrix with positive entries. Furthermore, the network $f$ is layerwise equivariant with $\rho_1(g)=P(g)D(g)$.
\end{restatable}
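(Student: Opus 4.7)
The plan is to reduce both directions to a direct application of the positive-homogeneity lemma (Lemma~\ref{lem:relu}), using the invertibility of $W_1$ and $W_2$ crucially.

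For the ``if'' direction, I would start from $f(\rho_0(g)x)$ and substitute the assumed form of $\rho_0(g)$. Then $W_1\rho_0(g) = P(g)D(g)W_1$, so $f(\rho_0(g)x) = W_2\ReLU\!\bigl(P(g)D(g)W_1 x\bigr)$. Since $P(g)D(g)$ is a permutation composed with a positive scaling, it commutes with $\ReLU$ pointwise, giving $W_2 P(g)D(g)\ReLU(W_1 x)$. Finally, inserting $W_2^{-1}W_2$ yields $\rho_2(g) f(x)$, as required.

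For the ``only if'' direction, I would start from the equivariance relation $W_2\ReLU(W_1\rho_0(g)x) = \rho_2(g) W_2\ReLU(W_1 x)$. Multiplying by $W_2^{-1}$ on the left and substituting $y=W_1 x$ (which is a bijection from $\R^m$ to $\R^m$ thanks to $W_1$ being non-singular), this becomes
\[
\ReLU\!\bigl(A(g)\,y\bigr) = B(g)\,\ReLU(y)\quad\text{for all }y\in\R^m,
\]
where $A(g) := W_1\rho_0(g)W_1^{-1}$ and $B(g) := W_2^{-1}\rho_2(g)W_2$ are both invertible. Lemma~\ref{lem:relu} then forces $A(g) = B(g) = P(g)D(g)$ with $P(g)$ a permutation matrix and $D(g)$ diagonal positive. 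Solving for $\rho_0(g)$ and $\rho_2(g)$ yields exactly the claimed formulas.

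For the layerwise equivariance claim, I would define $\rho_1(g) := P(g)D(g)$. This is a group homomorphism since $\rho_1(g) = W_1\rho_0(g)W_1^{-1}$ is a conjugate of the homomorphism $\rho_0$. Equivariance of $W_1$ follows directly from $\rho_1(g)W_1 = W_1\rho_0(g)$; equivariance of $W_2$ follows from $\rho_2(g)W_2 = W_2\rho_1(g)$; and equivariance of $\ReLU$ with $\rho_1$ acting on both sides is the positive-homogeneity property used in the ``if'' direction.

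I don't foresee a serious obstacle here, since invertibility of both $W_j$ lets us turn the network equation into the hypothesis of Lemma~\ref{lem:relu} with no loss of information. The only care needed is the bookkeeping that $P(g)D(g)$ depends on $g$ in a way compatible with being a representation, which is automatic because it equals the conjugate $W_1\rho_0(g)W_1^{-1}$ of a representation.
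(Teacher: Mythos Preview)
Your proposal is correct and follows essentially the same approach as the paper: use the invertibility of $W_1$ and $W_2$ to conjugate the equivariance condition into the identity $\ReLU(A(g)y)=B(g)\ReLU(y)$ and then invoke Lemma~\ref{lem:relu}. The paper's proof is simply a terser version of what you wrote, omitting the explicit ``if'' direction and the bookkeeping that $\rho_1$ is a representation.
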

\begin{proof}
Invertibility of $W_1$ and $W_2$ means that $f$ being equivariant w.r.t. $\rho_0$, $\rho_2$ is equivalent
to $\ReLU$ being equivariant w.r.t. $\rho_1(g)=W_1\rho_0(g)W_1^{-1}$ and $\tilde\rho_1(g)=W_2^{-1}\rho_2(g)W_2$.
The discussion after Lemma~\ref{lem:relu} now shows that $\rho_1(g)=\tilde\rho_1(g)=P(g)D(g)$ and the proposition follows.
\end{proof}

The set of group representations for which a two-layer ReLU-network can be $G$-equivariant is hence quite restricted. It is only representations that are similar (or conjugate) to scaled permutations that are feasible.
In the appendix, we also discuss the case of two-layer networks that are $G$-invariant
and show that they have to be layerwise equivariant with permutation representations in Proposition~\ref{prop:2d2layer2}.

\subsection{Permutation representations in CNNs---group convolutional neural networks}
\label{sec:gcnn}
As explained in Section~\ref{sec:relu}, for a $\ReLU$-network to
be layerwise equivariant, the representations must be scaled permutation representations.
We will now review how such representations can be used to
encode the horizontal flipping symmetry in CNNs.
This is a special case of group equivariant convolutional networks---GCNNs---which were introduced
by \cite{cohenGroupEquivariantConvolutional2016}.
An extensive reference is \citep{weiler_e2_2019}.
The reader familiar with \citep{cohenGroupEquivariantConvolutional2016,weiler_e2_2019} can skip this section,
here we will try to lay out in a condensed manner what the theory of horizontal flipping equivariant GCNNs looks like. 

Note first that horizontally flipping an image corresponds to
a certain permutation of the pixels.
Let's denote the action of horizontally flipping by $\tau$.
This is a permutation representation of the abstract group $S_2$.
If a convolutional layer $\Psi$ only contains horizontally
symmetric filters, then it is immediate that $\Psi(\tau(x)) = \tau(\Psi(x))$.
It is however possible to construct more general equivariant layers.

Let's assume that the representations acting on the input and output of $\Psi$ split into the spatial permutation $\tau$ and
a permutation $P$ of the channels.
$P_0$ on the input and $P_1$ on the output.
$P_0$ and $P_1$ need to have order maximum 2
to define representations of $S_2$.
One can show, using the so-called kernel constraint \citep{weiler3DSteerableCNNs2018a, weiler_e2_2019},
what form the convolution kernel $\psi$ of $\Psi$ needs to have to be equivariant.
The kernel constraint says that the convolution kernel $\psi\in\R^{c_1\times c_0\times k\times k}$ needs to satisfy
\begin{equation}
\label{eq:kernel_constraint}    
    \tau(\psi) = P_1 \psi P_0^T \left( = P_1 \psi P_0 \right),
\end{equation}
where $\tau$ acts on the spatial $k\times k$ part of $\psi$, $P_1$ on the $c_1$ output channels and $P_0$ on the $c_0$ input channels.
The last equality holds since $P_0$ is of order 2.
In short, permuting the channels of $\psi$ with $P_0$ and $P_1$ should be the same as horizontally flipping all filters.
We see this in action in Figure~\ref{fig:vgg_conv_filters}, even
for layers that are not explicitly constrained to satisfy \eqref{eq:kernel_constraint}.
An intuitive explanation of why \eqref{eq:kernel_constraint} should hold for equivariant layers
is that it guarantees that the same information will be captured by $\psi$ on input $x$ and flipped input
$\tau(x)$, since all horizontally flipped filters in $\tau(\psi)$ do exist in $\psi$.

Channels that are fixed under a channel permutation are called invariant and channels that 
are permuted are called regular, as they are part of the regular representation of $S_2$.
We point out three special cases---if $P_0=P_1=I$, then
$\psi$ has to be horizontally symmetric.
If $P_0=I$ and $P_1$ has no diagonal entries, then we get a lifting convolution and if $P_0$ and $P_1$ both have no diagonal entries then we get
a regular group convolution \citep{cohenGroupEquivariantConvolutional2016}.
In the following when referring to a \emph{regular} GCNN,
we mean the ``most generally equivariant'' case where $P$ has no diagonal entries.

\section{The permutation conjecture by Entezari et al.\ and its connection to GCNNs}
\label{sec:entezari}

Neural networks contain permutation symmetries in their weights,
meaning that given, e.g., a neural network of the form \eqref{eq:net},
with pointwise applied nonlinearity $\sigma$, 
we can arbitrarily permute the inputs and outputs of each layer
\[
W_1 \mapsto P_1 W_1,\quad W_j \mapsto P_j W_j P_{j-1}^T,\quad W_L \mapsto W_L P_{L-1}^T,
\]
and obtain a functionally equivalent net.
The reader should note the similarity to the kernel constraint \eqref{eq:kernel_constraint}.
It was conjectured by \cite{entezariRolePermutationInvariance2022}
that given two nets of the same type trained on the same data,
it should be possible to permute the weights of one net to
put it in the same loss-basin as the other net.
Recently, this conjecture has gained quite strong empirical support \citep{ainsworthGitReBasinMerging2022a, jordanREPAIRREnormalizingPermuted2022}.

When two nets are in the same loss-basin, they exhibit close to
linear mode connectivity,
meaning that the loss/accuracy
barrier on the linear interpolation between the
weights of the two nets will be close to zero.
Let the weights of two nets be given by $\theta_1$ and $\theta_2$ respectively.
In this paper we define the barrier of a performance metric $\zeta$ on the linear interpolation between the two nets by
\begin{equation} \label{eq:barrier}
    \frac{\frac{1}{2}(\zeta(\theta_1) + \zeta(\theta_2)) - \zeta\left(\frac{1}{2}\left(\theta_1 + \theta_2\right)\right)}{\frac{1}{2}(\zeta(\theta_1) + \zeta(\theta_2))}.
\end{equation}
Here $\zeta$ will most commonly be the test accuracy.
Previous works \citep{frankle2020lmc, entezariRolePermutationInvariance2022, ainsworthGitReBasinMerging2022a, jordanREPAIRREnormalizingPermuted2022} have defined
the barrier in slightly different ways.
In particular they have not included the denominator which makes it difficult to compare scores for models with varying performance.
We will refer to \eqref{eq:barrier} without the denominator as the absolute barrier.
Furthermore we evaluate the barrier only at a single interpolation point---halfway between $\theta_1$ and $\theta_2$---as compared to earlier work taking the maximum of barrier values when interpolating between $\theta_1$ and $\theta_2$.
We justify this by the fact that the largest barrier value
is practically almost always halfway between $\theta_1$ and $\theta_2$ (in fact \cite{entezariRolePermutationInvariance2022} also only evaluate the halfway point in their experiments).
The permutation conjecture can now be informally stated as follows.

\begin{conjecture}[{\citet[Sec.~3.2]{entezariRolePermutationInvariance2022}}]
\label{conj:entezari}
Most SGD solutions belong to a set $\mathcal{S}$ whose elements can be permuted in such a way that
there is no barrier on the linear interpolation between any two permuted elements in $\mathcal{S}$.
\end{conjecture}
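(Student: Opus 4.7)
The statement is an open conjecture due to Entezari et al.\ rather than a lemma I can expect to settle; nonetheless, the plan is to sketch the approach I would attempt. The starting point is to make the informal claim precise: fix a network architecture with hidden widths $m_1,\ldots,m_{L-1}$ and a data distribution, and let $\mu_\theta$ denote the push-forward of the initialization and SGD randomness onto trained parameters. Taking $\mathcal S$ to be the support of $\mu_\theta$ restricted to a sufficiently likely event, the goal is to produce, for any $\theta_1,\theta_2\in\mathcal S$, an explicit permutation $\pi$ of hidden units such that the absolute barrier associated with \eqref{eq:barrier} between $\theta_1$ and $\pi(\theta_2)$ tends to zero as each $m_j$ grows.

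The natural route is to build $\pi$ by layerwise activation matching in the spirit of \cite{li15convergent} and \cite{ainsworthGitReBasinMerging2022a}. First I would try to show that, under overparameterization, the empirical distribution of hidden-unit activations on the training data concentrates on a distribution depending only on the architecture and the data, not on the SGD trajectory; this is essentially the combinatorial content of Conjecture~\ref{conj:entezari}. Given such concentration, a greedy or optimal-transport matching $\pi$ pairs each neuron in $\theta_2$ with an $L^2$-close neuron in $\theta_1$ layer by layer. The remaining step is to establish that the midpoint $\tfrac12(\theta_1+\pi(\theta_2))$ computes, at each layer, a function that is pointwise close to the common layer function, so that by induction on depth the terminal logits are close and the barrier is small.

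The main obstacle is that this last step is exactly what the conjecture is asking and breaks down badly with $\ReLU$. Averaging two close weight vectors is innocuous for a linear layer, but after $\ReLU$ two units whose weights are near but not identical can have activation patterns that differ on a nontrivial slab of input space, and this mismatch compounds across depth. A rigorous argument therefore seems to require either (i) showing that in the overparameterized regime matched neurons produce the same activation pattern with high probability over the data, which needs strong concentration of the learned representation and not just of individual units; or (ii) invoking a REPAIR-style renormalization \citep{jordanREPAIRREnormalizingPermuted2022} and proving that its correction vanishes asymptotically. I expect this activation-pattern concentration is where any proof must either succeed or reveal that Conjecture~\ref{conj:entezari} is actually false at finite widths and must be replaced by a weaker layerwise statement of the sort this paper develops.
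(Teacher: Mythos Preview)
You have correctly identified the nature of the statement: Conjecture~\ref{conj:entezari} is \emph{not} proved in the paper. It is quoted verbatim (in informal form) from \citet{entezariRolePermutationInvariance2022} as an open conjecture, and the paper makes no attempt to establish it. Instead, the paper uses it heuristically---via the flipping argument summarized in Figure~\ref{fig:entezari_flip}---to motivate the weaker Conjecture~\ref{conj:gcnn}, and then gathers empirical evidence for both. So there is no ``paper's own proof'' for me to compare your proposal against.

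That said, your sketch is a reasonable outline of what a proof \emph{would} have to contain, and your identification of the core obstacle is accurate: the step from ``matched neurons are close in $L^2$ on the data'' to ``the midpoint network has small barrier'' is exactly where the difficulty lies, because $\ReLU$ does not commute with averaging and activation-pattern mismatches can compound with depth. Your two candidate routes---proving activation-pattern agreement with high probability in the overparameterized limit, or proving that a REPAIR-style correction vanishes asymptotically---are consistent with how the empirical literature (\cite{ainsworthGitReBasinMerging2022a, jordanREPAIRREnormalizingPermuted2022}) has approached the problem, though neither has been made rigorous. One refinement worth flagging: even the preliminary step, that the empirical distribution of hidden-unit functions concentrates on a trajectory-independent law, is itself a strong statement that is only understood in mean-field or NTK regimes, and those regimes are precisely where the barrier question becomes trivial or ill-posed. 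A genuine proof would likely need to operate in an intermediate regime that is not currently well understood.
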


Importantly, we note that when applied to CNNs, the conjecture 
should be interpreted in the sense that we only permute the channel
dimensions of the convolution weights, not the spatial dimensions.
We will now explain why applying Conjecture~\ref{conj:entezari} to
CNNs that are trained on invariant data distributions leads
to a conjecture stating that most SGD solutions are close to being
GCNNs.
For simplicity, we discuss the case where $G=\text{horizontal flips of the input image}$, but the argument works equally well for
vertical flips or $90$ degree rotations.
The argument is summarised in Figure~\ref{fig:entezari_flip}.

\begin{figure}
    \centering
    \begin{minipage}{.42\textwidth}
        \includeinkscape[width=.99\textwidth]{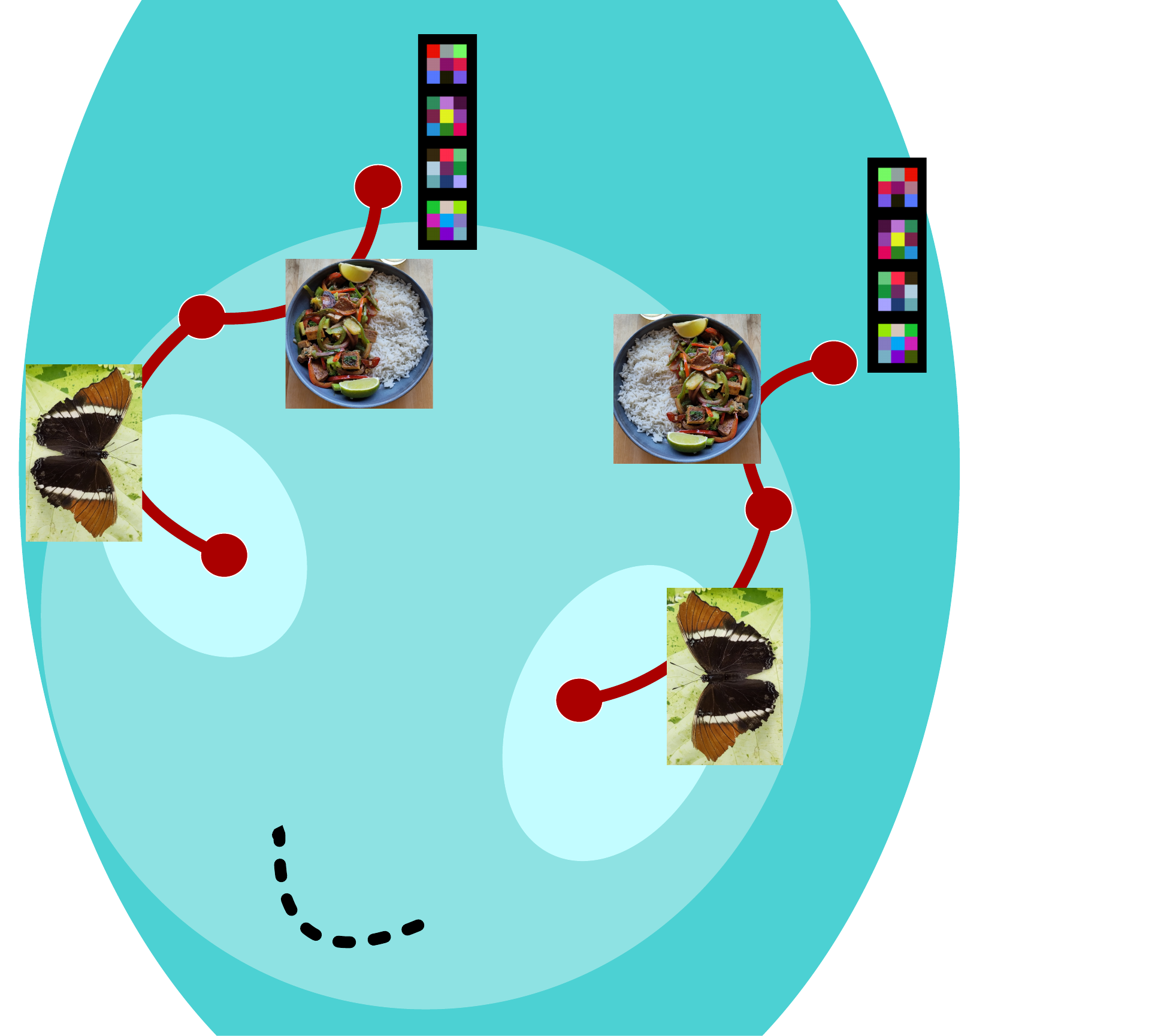_tex}
    \end{minipage}
    \hfill
    \raisebox{2.0em}{\begin{minipage}{.48\textwidth}
        \caption{
        For every CNN that is trained with horizontal flipping data augmentation,
        there is a corresponding equally likely CNN that was initialized with filters horizontally flipped and trained on horizontally flipped images.
        This CNN is the same as the original but with flipped filters, also after training.
        According to the permutation conjecture \citep{entezariRolePermutationInvariance2022},
        there should be a permutation of the channels of the flipped CNN that aligns it close to the
        original CNN.
        This implies that the CNN is close to a GCNN. Lighter blue means a spot in the parameter landscape with higher accuracy.
        }
        \label{fig:entezari_flip}
    \end{minipage}}
\end{figure}

We will consider an image classification task
and the common scenario where
images do not change class when they are horizontally flipped,
and where a flipped version of an image is
equally likely as the original
(in practice this is very frequently enforced using data augmentation).
Let $\tau$ be the horizontal flipping function on images.
We can also apply $\tau$ to feature maps in the CNN,
whereby we mean flipping the spatial horizontal dimension of the feature maps.

Initialize one CNN $\Phi$ and copy the initialization to a second CNN
$\hat \Phi$, but flip all filters of $\hat \Phi$ horizontally.
If we let $x_0, x_1, x_2, \ldots$ be the samples drawn during SGD training of $\Phi$, then
an equally likely drawing of samples for SGD training of $\hat \Phi$
is $\tau(x_0), \tau(x_1), \tau(x_2), \ldots$.
After training $\Phi$ using SGD and $\hat \Phi$ using the horizontally flipped version of the same SGD path, $\hat\Phi$ will still be a copy of $\Phi$ where all filters are flipped horizontally.\footnote{This is not strictly true of networks that contain stride-2 convolutions or pooling layers with padding, as they typically start at the left edge of the image and if the image has an even width won't reach to the right edge of the image, meaning that the operation is nonequivalent to performing it on a horizontally flipped input.
Still it will be approximately equivalent.
We will discuss the implications of this defect in Section~\ref{sec:resnet-225}. \label{foot:stride2}}
This means according to Conjecture~\ref{conj:entezari} that $\Phi$ can likely be converted close to $\hat\Phi$
by only permuting the channels in the network.

Let's consider the $j$'th convolution kernel $\psi\in\R^{c_j\times c_{j-1}\times k\times k}$ of $\Phi$.
There should exist permutations $P_j$ and $P_{j-1}$ such that
$ \tau(\psi) \approx P_j \psi P_{j-1}^T, $
where the permutations act on the channels of $\psi$ and $\tau$ on
the spatial $k\times k$ part.
But if we assume that the $P_j$'s are of order 2 so that they define representations of the horizontal flipping group, this is precisely the kernel constraint \eqref{eq:kernel_constraint},
which makes the CNN close to a GCNN!
It seems intuitive that the $P_j$'s are of order 2, i.e.
that if a certain channel should be permuted to another to make the filters as close to each other as possible then it should hold the other way around as well.
However, degeneracies such as multiple filters being close to equal can in practice hinder this intuition.
Nevertheless, we conclude with the following conjecture which in some sense is a
weaker version of Conjecture~\ref{conj:entezari},
as we deduced it from that one.
\begin{conjecture}
\label{conj:gcnn}
    Most SGD CNN-solutions on image data with a distribution
    that is invariant to horizontal flips of the images will
    be close to GCNNs.
\end{conjecture}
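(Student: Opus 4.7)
The plan is to follow the informal argument sketched in Figure~\ref{fig:entezari_flip} and turn it into a reduction of Conjecture~\ref{conj:gcnn} to Conjecture~\ref{conj:entezari}. Let $\Phi$ be a CNN obtained by running SGD on a dataset whose distribution is invariant under the horizontal flip $\tau$. I would couple $\Phi$ with a second CNN $\hat\Phi$ defined so that (i) its initialization is obtained from $\Phi$'s initialization by replacing every convolution filter $\psi$ by $\tau(\psi)$, and (ii) the sequence of minibatches used by SGD on $\hat\Phi$ is the $\tau$-image of the sequence used on $\Phi$. By the $\tau$-invariance of the data distribution, the latter minibatch sequence is equally likely as the original, and by the $\tau$-invariance in law of any standard initialization scheme, the initialization of $\hat\Phi$ is equally likely as that of $\Phi$. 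Hence $\hat\Phi$ is a priori just as likely an SGD solution as $\Phi$.

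Next I would verify, by induction on the training step, that the equivariance of convolution together with pointwise $\ReLU$ implies that $\hat\Phi$ remains a filterwise-flipped copy of $\Phi$ throughout training: the gradient update on $\hat\Phi$ at $\tau(x)$ is exactly the $\tau$-image of the update on $\Phi$ at $x$. The boundary-effect subtlety for stride-2 convolutions and padded pooling raised in footnote~\ref{foot:stride2} I would absorb into the ``approximate'' qualifier in the final conclusion, since that mismatch is negligibly small and does not affect the structural argument.

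Now I would invoke Conjecture~\ref{conj:entezari} applied to the pair $(\Phi,\hat\Phi)$: both are SGD solutions for the same training task, so there should exist channel permutations $P_0,\ldots,P_L$ putting $\hat\Phi$ in the loss basin of $\Phi$, i.e.\ for each convolution kernel $\psi_j\in\R^{c_j\times c_{j-1}\times k\times k}$ of $\Phi$ one has $P_j\,\tau(\psi_j)\,P_{j-1}^T \approx \psi_j$, equivalently $\tau(\psi_j) \approx P_j^T \psi_j P_{j-1}$. Reading this as the kernel constraint \eqref{eq:kernel_constraint} then requires the final step of showing that the $P_j$ can be chosen to be involutions, so that $\{I,P_j\}$ actually defines a representation of the horizontal-flip group $S_2$ on each intermediate feature space; once this is in hand, the definitions in Section~\ref{sec:gcnn} exhibit $\Phi$ as close to a GCNN.

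The main obstacle is precisely this last involutivity step. The intuition is that if $P_j$ optimally matches channels of $\hat\Phi$ to $\Phi$, then by the symmetry of the matching problem $P_j^{-1}$ should do so too, and one should be able to symmetrize to obtain an involution; but in the presence of near-degeneracies (several filters being approximately equal, or near-dead $\ReLU$ units as in Example~\ref{ex:zero-net}) the optimal channel matching is non-unique and the minimizer picked by an activation-matching procedure in the style of \cite{li15convergent} need not be of order two. I would therefore state the conclusion only conditionally on the matching permutations being involutions, and lean on the experiments in Section~\ref{sec:experiments} as empirical support for that hypothesis rather than as part of a formal derivation.
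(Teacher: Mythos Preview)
Your proposal is correct and follows essentially the same heuristic derivation as the paper: construct $\hat\Phi$ by flipping filters and training on the $\tau$-image of the minibatch stream, argue that $\hat\Phi$ is an equally likely SGD outcome and remains the filterwise-flipped copy of $\Phi$, invoke Conjecture~\ref{conj:entezari} to obtain channel permutations $P_j$ satisfying the approximate kernel constraint, and then isolate the involutivity of the $P_j$'s as the residual assumption, deferring to the experiments for support. The only differences are cosmetic---you spell out the induction on training steps and the distributional invariance of the initialization a bit more explicitly than the paper does---but the structure, the identified obstacle, and the handling of the stride-$2$ caveat all match the paper's own argument.
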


\myparagraph{A measure for closeness to being a GCNN}
To measure how close a CNN $\Phi$ is to being a GCNN we can
calculate the barrier, as defined in \eqref{eq:barrier},
of the linear interpolation between
$\Phi$ and a permutation of the flipped version of $\Phi$.
We call this barrier for the test accuracy the \emph{GCNN barrier}.

\section{Experiments}
\label{sec:experiments}

The aim of the experimental section is to investigate two related questions.
\begin{enumerate}
    \myitem{(Q1)} If a CNN is trained to be invariant to horizontal flips of input images, is it a GCNN?
    This question is related to the theoretical investigation
    of layerwise equivariance in
    Section~\ref{sec:layerwise-equiv}.
    \label{q:inv-gcnn}
    \myitem{(Q2)} If a CNN is trained on a horizontal flipping-invariant data distribution, will it be close to a GCNN?
    This is Conjecture~\ref{conj:gcnn}.
    \label{q:aug-gcnn}
\end{enumerate}

To answer these two questions we evaluate
the GCNN barrier for
CNNs trained on CIFAR10 and ImageNet.
We look at CNNs trained with horizontal flipping data augmentation
to answer \ref{q:aug-gcnn} and CNNs trained with an invariance loss
on the logits to answer \ref{q:inv-gcnn}.
In fact, for all training recipes horizontal flipping data augmentation is used.
The invariance loss applied during training of a CNN $\Phi$ is given by
$
\|\Phi(x) - \Phi(\tau(x))\|,
$
where $\tau$ horizontally flips $x$.
It is added to the standard cross-entropy classification loss.
To evaluate the invariance of a CNN $\Phi$ we will calculate the relative invariance error
$
\|\Phi(x) - \Phi(\tau(x))\|/(0.5\|\Phi(x)\| + 0.5\|\Phi(\tau(x))\|),
$
averaging over a subset of the training data.
Another way to obtain invariance to horizontal flips is to use some sort of
self-supervised learning approach. We will investigate self-supervised learning of ResNets in Section~\ref{sec:resnet}.

To align networks we will use activation matching \cite{li15convergent, ainsworthGitReBasinMerging2022a}.
In activation matching, the similarity between channels in feature maps of the same layer in
two different networks is measured over the training data and a permutation that
aligns the channels as well as possible between the networks according to this similarity is found.
Furthermore, we will use the REPAIR method by \cite{jordanREPAIRREnormalizingPermuted2022},
which consists of---after averaging the weights of two networks---reweighting each channel in the obtained network to have the average batch statistics of the original networks.
This reweighting can be merged into the weights of the network so that the original network structure
is preserved.
REPAIR is a method to compensate for the fact that when
two filters are not perfect copies of each other,
the variance of the output of their average will be lower than the variance of the output of the original filters.
We will also report results without REPAIR.

Experimental details can be found in Appendix~\ref{app:experiments}. We provide code for merging networks with their flipped selfs at \url{https://github.com/georg-bn/layerwise-equivariance}.

\subsection{VGG11 on CIFAR10}
\label{sec:vgg-cifar}

\begin{table}
  \centering
    \caption{Types of VGG11 nets considered. All except ``w/o aug'' are trained with horizontal flipping augmentation.}
    \label{tab:vgg-types}
    \footnotesize
    \begin{tabular}{lp{.72\textwidth}}
    \hline
    \textbf{Name} & \textbf{Description} \bigstrut \\
    \hline
        {CNN} & Ordinary VGG11 trained using cross-entropy loss (C.-E.). $9.23$M parameters. \bigstrut[t]\\
        \rowcolor[rgb]{ .92,  .92,  .92}{CNN w/o aug} & Ordinary VGG11 trained without horizontal flipping augmentation.\\
        {CNN + inv-loss} & Ordinary VGG11 trained using C.-E. and invariance loss (inv-loss).\\
        \rowcolor[rgb]{ .92,  .92,  .92}{CNN + late inv-loss} & Ordinary VGG11 trained using C.-E. and inv-loss after $20\%$ of the epochs. \\
        {GCNN} & A regular horizontal flipping GCNN trained using C.-E. $4.61$M parameters.\\
        \rowcolor[rgb]{ .92,  .92,  .92}{PGCNN} & A partial horizontal flipping GCNN trained using C.-E. The first two conv-layers are $G$-conv-layers and the rest are ordinary conv-layers. $9.19$M parameters.\\
        {PGCNN + late inv-loss} & The PGCNN trained using C.-E. and inv-loss after $20\%$ of the epochs. \\
    \hline
    \end{tabular}
    \vspace{.5em}
\end{table}

\begin{table}
  \centering
  \caption{Statistics for VGG11 nets trained on CIFAR10.}
  \label{tab:vgg-stats}%
  \footnotesize
    \begin{tabular}{llll}
    \hline
    \textbf{Name} & \textbf{Accuracy} & \textbf{Invariance Error} & \textbf{GCNN Barrier} \bigstrut\\
    \hline
    CNN   & $0.901 \pm 2.1\cdot 10^{-3}$ & $0.282 \pm 1.8\cdot 10^{-2}$ & $4.00\cdot 10^{-2}\pm 4.9\cdot 10^{-3}$ \bigstrut[t]\\
    \rowcolor[rgb]{ .92,  .92,  .92} CNN w/o aug   & $0.879 \pm 1.8\cdot 10^{-3}$ & $0.410 \pm 4.0\cdot 10^{-2}$ & $4.98\cdot 10^{-2}\pm 6.1\cdot 10^{-3}$ \\
    CNN + inv-loss & $0.892 \pm 2.3\cdot 10^{-3}$ & $0.0628 \pm 6.7\cdot 10^{-3}$ & $1.90\cdot 10^{-3}\pm 8.9\cdot 10^{-4}$ \\
    \rowcolor[rgb]{ .92,  .92,  .92} CNN + late inv-loss & $0.902 \pm 2.8\cdot 10^{-3}$ & $0.126 \pm 1.6\cdot 10^{-2}$ & $1.33\cdot 10^{-2}\pm 2.8\cdot 10^{-3}$ \\
    GCNN  & $0.899 \pm 2.5\cdot 10^{-3}$ & $1.34\cdot 10^{-6}\pm 1.5\cdot 10^{-7}$ & $8.92\cdot 10^{-4}\pm 1.6\cdot 10^{-3}$ \\
    \rowcolor[rgb]{ .92,  .92,  .92} PGCNN & $0.902 \pm3.7\cdot 10^{-3}$ & $0.274 \pm 2.0\cdot 10^{-2}$ & $3.09\cdot 10^{-2}\pm 7.8\cdot 10^{-3}$ \\
    PGCNN + late inv-loss & $0.915 \pm2.3\cdot 10^{-3}$ & $0.124 \pm 1.5\cdot 10^{-2}$ & $7.56\cdot 10^{-3}\pm 1.8\cdot 10^{-3}$ \bigstrut[b]\\
    \hline
    \end{tabular}%
\vspace{.5em}
\end{table}%

We train VGG11 nets \citep{simonyanVeryDeepConvolutional2015} on CIFAR10 \citep{krizhevskyLearningMultipleLayers}.
We will consider a couple of different versions of trained VGG11 nets, they are listed in Table~\ref{tab:vgg-types}.

We train 24 VGG11 nets for each model type and discard crashed runs and degenerate runs\footnote{By degenerate run we mean a net with accuracy 10\%. There was one GCNN and one CNN + late inv-loss for which this happened.} to obtain 18-24 good quality nets of each model type.
In Figure~\ref{fig:vgg_conv_filters} we visualize the filters of the first layer in two VGG11 nets.
More such visualizations are presented in Appendix~\ref{app:experiments}.
We summarise the statistics of the trained nets in Table~\ref{tab:vgg-stats}.
The most interesting findings are that the models with low invariance error also have low GCNN barrier, indicating that invariant models are layerwise equivariant.
Note that the invariance error and GCNN barrier should both be zero for a GCNN.
In order to have something to compare the numbers in Table~\ref{tab:vgg-stats} to,
we provide barrier levels for merging two different nets in Table~\ref{tab:vgg-twomodel-merge-stats} in the appendix.
Of note is that merging a model with a flipped version of itself is consistently easier
than merging two separate models.
In Figure~\ref{fig:dist_of_type} in the appendix, we show the distribution of permutation order for channels in different layers.
The matching method sometimes fails to correctly match GCNN channels (for which we know ground truth matches), but overall it does a good job.
In general, the unconstrained nets seem to learn a mixture between invariant and regular GCNN channels.
Preliminary attempts with training GCNN-VGG11s with such a mixture of channels did however not outperform regular GCNNs.

\subsection{ResNet50 on ImageNet}
\label{sec:resnet}

\begin{table}
  \caption{Results for ResNet50's trained using various methods on ImageNet. The model with an asterisk is trained by us. The last five are self-supervised methods. Flip-Accuracy is the accuracy of a net with all conv-filters horizontally flipped.
  Halfway Accuracy is the accuracy of a net that has merged the weights of the original net and the net of the net with flipped filters---after permuting the second net to align it to the first.
  For the accuracy and barrier values w/o REPAIR we still reset batch norm statistics after merging by running over a subset of the training data.}
  \label{tab:resnet-results}%
  \centering
  \footnotesize
    \begin{tabular}{b{.15\textwidth}b{.08\textwidth}b{.08\textwidth}b{.08\textwidth}b{.085\textwidth}b{.08\textwidth}b{.085\textwidth}b{.08\textwidth}}
    \hline
    \textbf{\scriptsize Training Method} & \textbf{\scriptsize Invariance Error} & \textbf{\scriptsize Accuracy} & \textbf{\scriptsize Flip-Accuracy} & \textbf{\scriptsize Halfway Accuracy, \mbox{no REPAIR}} & \textbf{\scriptsize Halfway Accuracy} & \textbf{\scriptsize GCNN Barrier, \mbox{no REPAIR}} & \textbf{\scriptsize GCNN Barrier} \bigstrut\\
    \hline
    \rowcolor[rgb]{ .92,  .92,  .92} Torchvision new & $0.284$ & $0.803$ & $0.803$ & $0.731$ & $0.759$ & $0.0893$ & $0.0545$ \\
    Torchvision old & $0.228$ & $0.761$ & $0.746$ & $0.718$ & $0.725$ & $0.0467$ & $0.0384$ \\
    \rowcolor[rgb]{ .92,  .92,  .92} *Torchvision old + inv-loss & $0.0695$ & $0.754$ & $0.745$ & $0.745$ & $0.745$ & $0.00636$ & $0.0054$ \\
    BYOL  & $0.292$ & $0.704$ & $0.712$ & $0.573$ & $0.624$ & $0.19$  & $0.118$ \\
    \rowcolor[rgb]{ .92,  .92,  .92} DINO  & $0.169$ & $0.753$ & $0.744$ & $0.611$ & $0.624$ & $0.184$ & $0.166$ \\
    Moco-v3 & $0.16$  & $0.746$ & $0.735$ & $0.64$  & $0.681$ & $0.136$ & $0.0805$ \\
    \rowcolor[rgb]{ .92,  .92,  .92} Simsiam & $0.174$ & $0.683$ & $0.667$ & $0.505$ & $0.593$ & $0.252$ & $0.121$ \\
    \hline
    \end{tabular}%
    \vspace{.5em}
\end{table}%

Next we look at the GCNN barrier for ResNet50 \citep{he2016resnet} trained on ImageNet \citep{imagenet_cvpr09}.
We consider a couple of different training recipes.
First of all two supervised methods---the latest Torchvision recipe \citep{torchvision-new} and the old Torchvision recipe \citep{torchvision-old} which is computationally cheaper.
Second, four self-supervised methods: BYOL \citep{grillBootstrapYourOwn2020}, DINO \citep{caron2021emerging}, Moco-v3 \citep{chenEmpiricalStudyTraining2021} and Simsiam \citep{chenExploringSimpleSiamese2021}.
The results are summarised in Table~\ref{tab:resnet-results}.
There are a couple of very interesting takeaways.
First, the GCNN barrier for the supervised methods is unexpectedly low.
When merging two separate ResNet50's trained on ImageNet, \cite{jordanREPAIRREnormalizingPermuted2022}
report an absolute barrier of $20$ percentage points, whereas we here see barriers of $4.4$ percentage points for the new recipe and $2.9$ percentage points for the old recipe.
This indicates that it easier to merge a net with a flipped version of itself than with a different net.
However, we also observe that for the self-supervised methods the barrier is quite high (although still lower than $20$ percentage points).
This is curious since they are in some sense trained to be invariant to aggressive data augmentation---including horizontal flips.
Finally we note that when training with an invariance loss, the GCNN barrier vanishes, meaning that
the obtained invariant net is close to being a GCNN.

The reader may have noticed that the ``Flip-Accuracy'', i.e., accuracy of nets with flipped filters, is markedly different than the accuracy for most nets in Table~\ref{tab:resnet-results}.
This is a defect stemming from using image size 224. We explain this and present a few results
with image size 225 in Appendix~\ref{sec:resnet-225}.

\section{Conclusions and future work}
We studied layerwise equivariance of ReLU-networks theoretically and experimentally.
Theoretically we found both positive and negative results in Section~\ref{sec:layerwise-equiv},
showing that layerwise equivariance is
in some cases guaranteed given equivariance but in general not.
In Section~\ref{sec:entezari}, we explained how \citeauthor{entezariRolePermutationInvariance2022}'s permutation conjecture~\ref{conj:entezari}
can be applied to a single CNN and the same CNN with horizontally flipped filters.
From this we extrapolated Conjecture~\ref{conj:gcnn} stating that
SGD CNN solutions are likely to be close to being GCNNs, also leading us
to propose a new measure for how close a CNN is to being a GCNN---the GCNN barrier.
In Section~\ref{sec:experiments} we saw quite strong empirical evidence for the fact that a ReLU-network that has been trained to be
equivariant will be layerwise equivariant.
We also found that it is easier to merge a ResNet with a flipped version of itself, compared to merging it with another net.
Thus, Conjecture~\ref{conj:gcnn} might be a worthwhile stepping stone for the community investigating Conjecture~\ref{conj:entezari}, in addition to being interesting in and of itself.
If a negative GCNN barrier is achievable, it would imply that we can do ``weight space test time data augmentation'' analogously to how merging two separate nets can enable weight space ensembling.

\subsection*{Acknowledgements}
This work was partially supported by the Wallenberg AI, Autonomous Systems and Software Program (WASP) funded by the Knut and Alice Wallenberg Foundation. The computations were enabled by resources provided by the National Academic Infrastructure for Supercomputing in Sweden (NAISS) at the Chalmers Centre for Computational Science and Engineering (C3SE) partially funded by the Swedish Research Council through grant agreement no. 2022-06725 and by the Berzelius resource provided by the Knut and Alice Wallenberg
Foundation at the National Supercomputer Centre.

\bibliography{biblio}

\begin{thebibliography}{43}
\providecommand{\natexlab}[1]{#1}
\providecommand{\url}[1]{\texttt{#1}}
\expandafter\ifx\csname urlstyle\endcsname\relax
  \providecommand{\doi}[1]{doi: #1}\else
  \providecommand{\doi}{doi: \begingroup \urlstyle{rm}\Url}\fi

\bibitem[Ainsworth et~al.(2023)Ainsworth, Hayase, and
  Srinivasa]{ainsworthGitReBasinMerging2022a}
Samuel Ainsworth, Jonathan Hayase, and Siddhartha Srinivasa.
\newblock Git re-basin: Merging models modulo permutation symmetries.
\newblock In \emph{The Eleventh International Conference on Learning
  Representations}, 2023.
\newblock URL \url{https://openreview.net/forum?id=CQsmMYmlP5T}.

\bibitem[Bekkers et~al.(2018)Bekkers, Lafarge, Veta, Eppenhof, Pluim, and
  Duits]{bekkersRotoTranslationCovariantConvolutional2018a}
Erik~J. Bekkers, Maxime~W. Lafarge, Mitko Veta, Koen A.~J. Eppenhof, Josien
  P.~W. Pluim, and Remco Duits.
\newblock Roto-{{Translation Covariant Convolutional Networks}} for {{Medical
  Image Analysis}}.
\newblock In Alejandro~F. Frangi, Julia~A. Schnabel, Christos Davatzikos,
  Carlos {Alberola-L{\'o}pez}, and Gabor Fichtinger, editors, \emph{Medical
  {{Image Computing}} and {{Computer Assisted Intervention}} \textendash{}
  {{MICCAI}} 2018}, Lecture {{Notes}} in {{Computer Science}}, pages 440--448,
  {Cham}, 2018. {Springer International Publishing}.
\newblock ISBN 978-3-030-00928-1.
\newblock \doi{10.1007/978-3-030-00928-1_50}.

\bibitem[B{\"o}kman and Kahl(2022)]{rotation-invariant-matcher}
Georg B{\"o}kman and Fredrik Kahl.
\newblock A case for using rotation invariant features in state of the art
  feature matchers.
\newblock In \emph{Proceedings of the IEEE/CVF Conference on Computer Vision
  and Pattern Recognition Workshops}, 2022.

\bibitem[B{\"o}kman et~al.(2022)B{\"o}kman, Kahl, and Flinth]{zznet}
Georg B{\"o}kman, Fredrik Kahl, and Axel Flinth.
\newblock Zz-net: A universal rotation equivariant architecture for 2d point
  clouds.
\newblock In \emph{Proceedings of the IEEE/CVF Conference on Computer Vision
  and Pattern Recognition}, pages 10976--10985, 2022.

\bibitem[Bruintjes et~al.(2023)Bruintjes, Motyka, and van
  Gemert]{bruintjes2023affects}
Robert-Jan Bruintjes, Tomasz Motyka, and Jan van Gemert.
\newblock What affects learned equivariance in deep image recognition models?
\newblock In \emph{Proceedings of the IEEE/CVF Conference on Computer Vision
  and Pattern Recognition (CVPR) Workshops}, pages 4838--4846, June 2023.

\bibitem[Brynte et~al.(2023)Brynte, B{\"o}kman, Flinth, and
  Kahl]{equivariance-perspective}
Lucas Brynte, Georg B{\"o}kman, Axel Flinth, and Fredrik Kahl.
\newblock Rigidity preserving image transformations and equivariance in
  perspective.
\newblock In \emph{Scandinavian Conference on Image Analysis}, 2023.

\bibitem[Caron et~al.(2021)Caron, Touvron, Misra, J\'egou, Mairal, Bojanowski,
  and Joulin]{caron2021emerging}
Mathilde Caron, Hugo Touvron, Ishan Misra, Herv\'e J\'egou, Julien Mairal,
  Piotr Bojanowski, and Armand Joulin.
\newblock Emerging properties in self-supervised vision transformers.
\newblock In \emph{Proceedings of the International Conference on Computer
  Vision (ICCV)}, 2021.

\bibitem[Chen and He(2021)]{chenExploringSimpleSiamese2021}
Xinlei Chen and Kaiming He.
\newblock Exploring {{Simple Siamese Representation Learning}}.
\newblock In \emph{2021 {{IEEE}}/{{CVF Conference}} on {{Computer Vision}} and
  {{Pattern Recognition}} ({{CVPR}})}, pages 15745--15753, June 2021.
\newblock \doi{10.1109/CVPR46437.2021.01549}.

\bibitem[Chen et~al.(2021)Chen, Xie, and He]{chenEmpiricalStudyTraining2021}
Xinlei Chen, Saining Xie, and Kaiming He.
\newblock An {{Empirical Study}} of {{Training Self-Supervised Vision
  Transformers}}.
\newblock In \emph{Proceedings of the {{IEEE}}/{{CVF International Conference}}
  on {{Computer Vision}}}, pages 9640--9649, 2021.

\bibitem[Cohen and Welling(2016)]{cohenGroupEquivariantConvolutional2016}
Taco Cohen and Max Welling.
\newblock Group equivariant convolutional networks.
\newblock In Maria~Florina Balcan and Kilian~Q. Weinberger, editors,
  \emph{Proceedings of The 33rd International Conference on Machine Learning},
  volume~48 of \emph{Proceedings of Machine Learning Research}, pages
  2990--2999, New York, New York, USA, 20--22 Jun 2016. PMLR.
\newblock URL \url{https://proceedings.mlr.press/v48/cohenc16.html}.

\bibitem[Deng et~al.(2009)Deng, Dong, Socher, Li, Li, and
  Fei-Fei]{imagenet_cvpr09}
J.~Deng, W.~Dong, R.~Socher, L.-J. Li, K.~Li, and L.~Fei-Fei.
\newblock {ImageNet: A Large-Scale Hierarchical Image Database}.
\newblock In \emph{CVPR09}, 2009.

\bibitem[Elesedy and Zaidi(2021)]{elesedy21a}
Bryn Elesedy and Sheheryar Zaidi.
\newblock Provably strict generalisation benefit for equivariant models.
\newblock In Marina Meila and Tong Zhang, editors, \emph{Proceedings of the
  38th International Conference on Machine Learning}, volume 139 of
  \emph{Proceedings of Machine Learning Research}, pages 2959--2969. PMLR,
  2021.
\newblock URL \url{https://proceedings.mlr.press/v139/elesedy21a.html}.

\bibitem[Entezari et~al.(2022)Entezari, Sedghi, Saukh, and
  Neyshabur]{entezariRolePermutationInvariance2022}
Rahim Entezari, Hanie Sedghi, Olga Saukh, and Behnam Neyshabur.
\newblock The role of permutation invariance in linear mode connectivity of
  neural networks.
\newblock In \emph{International Conference on Learning Representations}, 2022.
\newblock URL \url{https://openreview.net/forum?id=dNigytemkL}.

\bibitem[Finzi et~al.(2021)Finzi, Welling, and Wilson]{finzi-etal-icml-2021}
Marc Finzi, Max Welling, and Andrew~Gordon Wilson.
\newblock A practical method for constructing equivariant multilayer
  perceptrons for arbitrary matrix groups.
\newblock In \emph{International Conference on Machine Learning}, 2021.

\bibitem[Frankle et~al.(2020)Frankle, Dziugaite, Roy, and
  Carbin]{frankle2020lmc}
Jonathan Frankle, Gintare~Karolina Dziugaite, Daniel Roy, and Michael Carbin.
\newblock Linear mode connectivity and the lottery ticket hypothesis.
\newblock In Hal~Daumé III and Aarti Singh, editors, \emph{Proceedings of the
  37th International Conference on Machine Learning}, volume 119 of
  \emph{Proceedings of Machine Learning Research}, pages 3259--3269. PMLR,
  13--18 Jul 2020.
\newblock URL \url{https://proceedings.mlr.press/v119/frankle20a.html}.

\bibitem[Fuchs et~al.(2020)Fuchs, Worrall, Fischer, and
  Welling]{fuchsSETransformers3D}
Fabian Fuchs, Daniel Worrall, Volker Fischer, and Max Welling.
\newblock {{SE}}(3)-{{Transformers}}: {{3D Roto-Translation Equivariant
  Attention Networks}}.
\newblock In \emph{Advances in {{Neural Information Processing Systems}}},
  volume~33, pages 1970--1981. {Curran Associates, Inc.}, 2020.

\bibitem[Geirhos et~al.(2019)Geirhos, Rubisch, Michaelis, Bethge, Wichmann, and
  Brendel]{geirhos2018imagenettrained}
Robert Geirhos, Patricia Rubisch, Claudio Michaelis, Matthias Bethge, Felix~A.
  Wichmann, and Wieland Brendel.
\newblock Imagenet-trained {CNN}s are biased towards texture; increasing shape
  bias improves accuracy and robustness.
\newblock In \emph{International Conference on Learning Representations}, 2019.
\newblock URL \url{https://openreview.net/forum?id=Bygh9j09KX}.

\bibitem[Godfrey et~al.(2022)Godfrey, Brown, Emerson, and
  Kvinge]{godfrey2022symmetries}
Charles Godfrey, Davis Brown, Tegan Emerson, and Henry Kvinge.
\newblock On the symmetries of deep learning models and their internal
  representations.
\newblock In S.~Koyejo, S.~Mohamed, A.~Agarwal, D.~Belgrave, K.~Cho, and A.~Oh,
  editors, \emph{Advances in Neural Information Processing Systems}, volume~35,
  pages 11893--11905. Curran Associates, Inc., 2022.
\newblock URL
  \url{https://proceedings.neurips.cc/paper_files/paper/2022/file/4df3510ad02a86d69dc32388d91606f8-Paper-Conference.pdf}.

\bibitem[Grill et~al.(2020)Grill, Strub, Altch{\'e}, Tallec, Richemond,
  Buchatskaya, Doersch, Avila~Pires, Guo, Gheshlaghi~Azar, Piot, {kavukcuoglu},
  Munos, and Valko]{grillBootstrapYourOwn2020}
Jean-Bastien Grill, Florian Strub, Florent Altch{\'e}, Corentin Tallec, Pierre
  Richemond, Elena Buchatskaya, Carl Doersch, Bernardo Avila~Pires, Zhaohan
  Guo, Mohammad Gheshlaghi~Azar, Bilal Piot, koray {kavukcuoglu}, Remi Munos,
  and Michal Valko.
\newblock Bootstrap {{Your Own Latent}} - {{A New Approach}} to
  {{Self-Supervised Learning}}.
\newblock In \emph{Advances in {{Neural Information Processing Systems}}},
  volume~33, pages 21271--21284. {Curran Associates, Inc.}, 2020.

\bibitem[Gruver et~al.(2023)Gruver, Finzi, Goldblum, and Wilson]{gruver2023the}
Nate Gruver, Marc~Anton Finzi, Micah Goldblum, and Andrew~Gordon Wilson.
\newblock The lie derivative for measuring learned equivariance.
\newblock In \emph{The Eleventh International Conference on Learning
  Representations}, 2023.
\newblock URL \url{https://openreview.net/forum?id=JL7Va5Vy15J}.

\bibitem[He et~al.(2016)He, Zhang, Ren, and Sun]{he2016resnet}
Kaiming He, Xiangyu Zhang, Shaoqing Ren, and Jian Sun.
\newblock Deep residual learning for image recognition.
\newblock In \emph{2016 IEEE Conference on Computer Vision and Pattern
  Recognition (CVPR)}, pages 770--778, 2016.
\newblock \doi{10.1109/CVPR.2016.90}.

\bibitem[Jordan et~al.(2023)Jordan, Sedghi, Saukh, Entezari, and
  Neyshabur]{jordanREPAIRREnormalizingPermuted2022}
Keller Jordan, Hanie Sedghi, Olga Saukh, Rahim Entezari, and Behnam Neyshabur.
\newblock {REPAIR}: {RE}normalizing permuted activations for interpolation
  repair.
\newblock In \emph{The Eleventh International Conference on Learning
  Representations}, 2023.
\newblock URL \url{https://openreview.net/forum?id=gU5sJ6ZggcX}.

\bibitem[Krizhevsky(2009)]{krizhevskyLearningMultipleLayers}
Alex Krizhevsky.
\newblock Learning {{Multiple Layers}} of {{Features}} from {{Tiny Images}},
  2009.

\bibitem[Lenc and Vedaldi(2015)]{lenc2015understanding}
Karel Lenc and Andrea Vedaldi.
\newblock Understanding image representations by measuring their equivariance
  and equivalence.
\newblock In \emph{Proceedings of the IEEE conference on computer vision and
  pattern recognition}, pages 991--999, 2015.

\bibitem[Li et~al.(2015)Li, Yosinski, Clune, Lipson, and
  Hopcroft]{li15convergent}
Yixuan Li, Jason Yosinski, Jeff Clune, Hod Lipson, and John Hopcroft.
\newblock Convergent learning: Do different neural networks learn the same
  representations?
\newblock In Dmitry Storcheus, Afshin Rostamizadeh, and Sanjiv Kumar, editors,
  \emph{Proceedings of the 1st International Workshop on Feature Extraction:
  Modern Questions and Challenges at NIPS 2015}, volume~44 of \emph{Proceedings
  of Machine Learning Research}, pages 196--212, Montreal, Canada, 11 Dec 2015.
  PMLR.
\newblock URL \url{https://proceedings.mlr.press/v44/li15convergent.html}.

\bibitem[Lu et~al.(2020)Lu, Shin, Su, and Em~Karniadakis]{CiCP-28-1671}
Lu~Lu, Yeonjong Shin, Yanhui Su, and George Em~Karniadakis.
\newblock Dying relu and initialization: Theory and numerical examples.
\newblock \emph{Communications in Computational Physics}, 28\penalty0
  (5):\penalty0 1671--1706, 2020.
\newblock ISSN 1991-7120.
\newblock \doi{https://doi.org/10.4208/cicp.OA-2020-0165}.

\bibitem[Maron et~al.(2019)Maron, Ben-Hamu, Shamir, and
  Lipman]{maron2018invariant}
Haggai Maron, Heli Ben-Hamu, Nadav Shamir, and Yaron Lipman.
\newblock Invariant and equivariant graph networks.
\newblock In \emph{International Conference on Learning Representations}, 2019.

\bibitem[Mohamed et~al.(2020)Mohamed, Cesa, Cohen, and
  Welling]{mohamed2020data}
Mirgahney Mohamed, Gabriele Cesa, Taco~S. Cohen, and Max Welling.
\newblock A data and compute efficient design for limited-resources deep
  learning, 2020.

\bibitem[Olah et~al.(2020)Olah, Cammarata, Voss, Schubert, and
  Goh]{olah2020naturally}
Chris Olah, Nick Cammarata, Chelsea Voss, Ludwig Schubert, and Gabriel Goh.
\newblock Naturally occurring equivariance in neural networks.
\newblock \emph{Distill}, 2020.
\newblock \doi{10.23915/distill.00024.004}.
\newblock https://distill.pub/2020/circuits/equivariance.

\bibitem[Romero et~al.(2020)Romero, Bekkers, Tomczak, and
  Hoogendoorn]{romero2020attentive}
David~W. Romero, Erik~J. Bekkers, Jakub~M. Tomczak, and Mark Hoogendoorn.
\newblock Attentive group equivariant convolutional networks.
\newblock In \emph{Proceedings of the 37th International Conference on Machine
  Learning}, ICML'20. JMLR.org, 2020.

\bibitem[Sch\"{u}tt et~al.(2017)Sch\"{u}tt, Kindermans, Sauceda~Felix, Chmiela,
  Tkatchenko, and M\"{u}ller]{schnet}
Kristof Sch\"{u}tt, Pieter-Jan Kindermans, Huziel~Enoc Sauceda~Felix, Stefan
  Chmiela, Alexandre Tkatchenko, and Klaus-Robert M\"{u}ller.
\newblock Schnet: A continuous-filter convolutional neural network for modeling
  quantum interactions.
\newblock In I.~Guyon, U.~Von Luxburg, S.~Bengio, H.~Wallach, R.~Fergus,
  S.~Vishwanathan, and R.~Garnett, editors, \emph{Advances in Neural
  Information Processing Systems}, volume~30. Curran Associates, Inc., 2017.
\newblock URL
  \url{https://proceedings.neurips.cc/paper_files/paper/2017/file/303ed4c69846ab36c2904d3ba8573050-Paper.pdf}.

\bibitem[Shakerinava et~al.(2022)Shakerinava, Mondal, and
  Ravanbakhsh]{NEURIPS2022_dcd29769}
Mehran Shakerinava, Arnab~Kumar Mondal, and Siamak Ravanbakhsh.
\newblock Structuring representations using group invariants.
\newblock In \emph{Advances in Neural Information Processing Systems},
  volume~35, pages 34162--34174. Curran Associates, Inc., 2022.

\bibitem[Simonyan and Zisserman(2015)]{simonyanVeryDeepConvolutional2015}
Karen Simonyan and Andrew Zisserman.
\newblock Very deep convolutional networks for large-scale image recognition.
\newblock In \emph{International Conference on Learning Representations}, 2015.

\bibitem[Thomas et~al.(2018)Thomas, Smidt, Kearnes, Yang, Li, Kohlhoff, and
  Riley]{tensorfield}
Nathaniel Thomas, Tess Smidt, Steven Kearnes, Lusann Yang, Li~Li, Kai Kohlhoff,
  and Patrick Riley.
\newblock Tensor field networks: {Rotation}- and translation-equivariant neural
  networks for {3D} point clouds.
\newblock \emph{arXiv:1802.08219 [cs]}, May 2018.
\newblock URL \url{http://arxiv.org/abs/1802.08219}.

\bibitem[Torchvision(2023{\natexlab{a}})]{torchvision-new}
Torchvision.
\newblock Improve the accuracy of {{Classification}} models by using {{SOTA}}
  recipes and primitives {$\cdot$} {{Issue}} \#3995 {$\cdot$} pytorch/vision.
\newblock \url{https://github.com/pytorch/vision/issues/3995},
  2023{\natexlab{a}}.
\newblock (accessed 2023-05-16).

\bibitem[Torchvision(2023{\natexlab{b}})]{torchvision-old}
Torchvision.
\newblock Vision/references/classification at main {$\cdot$} pytorch/vision.
\newblock \url{https://github.com/pytorch/vision}, 2023{\natexlab{b}}.
\newblock (accessed 2023-05-16).

\bibitem[Villar et~al.(2021)Villar, Hogg, Storey-Fisher, Yao, and
  Blum-Smith]{villar_scalars_2021}
Soledad Villar, David~W Hogg, Kate Storey-Fisher, Weichi Yao, and Ben
  Blum-Smith.
\newblock Scalars are universal: Equivariant machine learning, structured like
  classical physics.
\newblock In \emph{Thirty-Fifth Conference on Neural Information Processing
  Systems}, 2021.
\newblock URL \url{https://openreview.net/forum?id=ba27-RzNaIv}.

\bibitem[Weiler and Cesa(2019)]{weiler_e2_2019}
Maurice Weiler and Gabriele Cesa.
\newblock General e(2)-equivariant steerable cnns.
\newblock In H.~Wallach, H.~Larochelle, A.~Beygelzimer, F.~d\textquotesingle
  Alch\'{e}-Buc, E.~Fox, and R.~Garnett, editors, \emph{Advances in Neural
  Information Processing Systems}, volume~32. Curran Associates, Inc., 2019.
\newblock URL
  \url{https://proceedings.neurips.cc/paper/2019/file/45d6637b718d0f24a237069fe41b0db4-Paper.pdf}.

\bibitem[Weiler et~al.(2018{\natexlab{a}})Weiler, Geiger, Welling, Boomsma, and
  Cohen]{weiler3DSteerableCNNs2018a}
Maurice Weiler, Mario Geiger, Max Welling, Wouter Boomsma, and Taco~S Cohen.
\newblock {{3D Steerable CNNs}}: {{Learning Rotationally Equivariant Features}}
  in {{Volumetric Data}}.
\newblock In \emph{Advances in {{Neural Information Processing Systems}}},
  volume~31. {Curran Associates, Inc.}, 2018{\natexlab{a}}.

\bibitem[Weiler et~al.(2018{\natexlab{b}})Weiler, Hamprecht, and
  Storath]{weilerLearningSteerableFilters2018}
Maurice Weiler, Fred~A. Hamprecht, and Martin Storath.
\newblock Learning {{Steerable Filters}} for {{Rotation Equivariant CNNs}}.
\newblock In \emph{2018 {{IEEE}}/{{CVF Conference}} on {{Computer Vision}} and
  {{Pattern Recognition}}}, pages 849--858, {Salt Lake City, UT}, June
  2018{\natexlab{b}}. {IEEE}.
\newblock ISBN 978-1-5386-6420-9.
\newblock \doi{10.1109/CVPR.2018.00095}.

\bibitem[Wood and Shawe-Taylor(1996)]{shawe-taylor}
Jeffrey Wood and John Shawe-Taylor.
\newblock Representation theory and invariant neural networks.
\newblock \emph{Discrete Applied Mathematics}, 69\penalty0 (1-2):\penalty0
  33--60, August 1996.
\newblock ISSN 0166218X.
\newblock \doi{10/c3qmr6}.
\newblock URL
  \url{https://linkinghub.elsevier.com/retrieve/pii/0166218X95000753}.

\bibitem[Worrall et~al.(2017)Worrall, Garbin, Turmukhambetov, and
  Brostow]{worrallHarmonicNetworksDeep2017a}
Daniel~E. Worrall, Stephan~J. Garbin, Daniyar Turmukhambetov, and Gabriel~J.
  Brostow.
\newblock Harmonic {{Networks}}: {{Deep Translation}} and {{Rotation
  Equivariance}}.
\newblock In \emph{Proceedings of the {{IEEE Conference}} on {{Computer
  Vision}} and {{Pattern Recognition}}}, pages 5028--5037, 2017.

\bibitem[Zaheer et~al.(2017)Zaheer, Kottur, Ravanbakhsh, Poczos, Salakhutdinov,
  and Smola]{zaheerDeepSets2017}
Manzil Zaheer, Satwik Kottur, Siamak Ravanbakhsh, Barnabas Poczos, Russ~R
  Salakhutdinov, and Alexander~J Smola.
\newblock Deep {{Sets}}.
\newblock In \emph{Advances in {{Neural Information Processing Systems}}},
  volume~30. {Curran Associates, Inc.}, 2017.

\end{thebibliography}
\bibliographystyle{plainnat}

\newpage
\appendix

\section{Experimental details and more results}
\label{app:experiments}

\subsection{VGG11 on CIFAR10}
We present the distribution of various channels types in Figure~\ref{fig:dist_of_type}.
We present statistics for merging two different nets in Table~\ref{tab:vgg-twomodel-merge-stats}.

In Figures~\ref{fig:vgg_conv_filters_2},~\ref{fig:vgg_conv_filters_3} we show the convolution kernels of
the second convolution layers of the same nets as in Figure~\ref{fig:vgg_conv_filters}.
Figures~\ref{fig:vgg_conv_filters_4},~\ref{fig:vgg_conv_filters_5} show the filters of a GCNN-VGG11, which has hardcoded equivariance in each layer.

There are some interesting takeaways from these filter visualizations.
First of all, the number of zero-filters in Figure~\ref{fig:vgg_conv_filters_3} -- second layer of an ordinary VGG11 -- and Figure~\ref{fig:vgg_conv_filters_5} -- second layer of a GCNN-VGG11 -- 
is quite high and it is lower for the VGG11 trained
using invariance loss -- Figure~\ref{fig:vgg_conv_filters_2}.
Also, the GCNN has more zero-filters than the ordinary net.
It would be valuable to investigate why these zero-filters
appear and if there is some training method that doesn't lead
to such apparent under-utilization of the network parameters.

Second, naturally when enforcing maximally equivariant layers
using a lifting convolution in the first layer (recall Section~\ref{sec:gcnn}), the network is forced to learn
duplicated horizontal filters -- see Figure~\ref{fig:vgg_conv_filters_4} e.g. the all green filter.
This could explain the worse performance of GCNN:s relative to
ordinary VGG11:s.
The choice of representations to use in each layer of a GCNN remains a difficult problem, and as we mentioned in the main text,
in our initial experiments it did not seem to work to set the
representations similar to those observed in Figure~\ref{fig:dist_of_type}.
Further research on the optimal choice of representations
would be very valuable.
However, as discussed in Section~\ref{app:cooccur-equiv}, it
may be the case that no layerwise equivariant net outperforms
nets which are not restricted to be equivariant in late layers.

\begin{figure}
    \centering
    \includeinkscape[width=.45\textwidth]{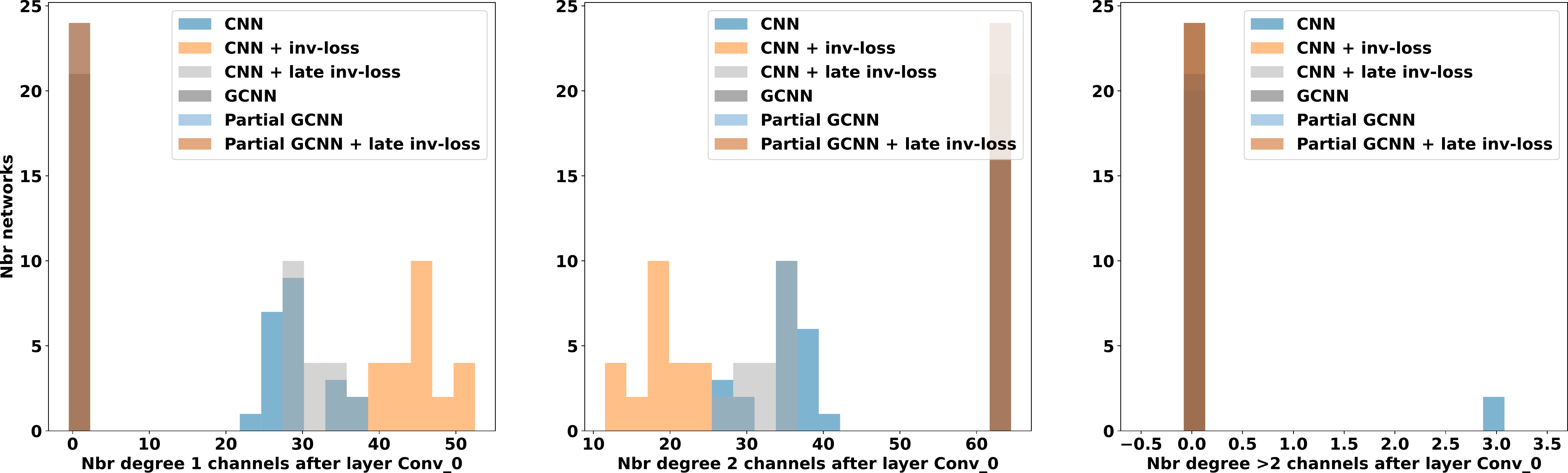_tex}
    \hfill
    \includeinkscape[width=.45\textwidth]{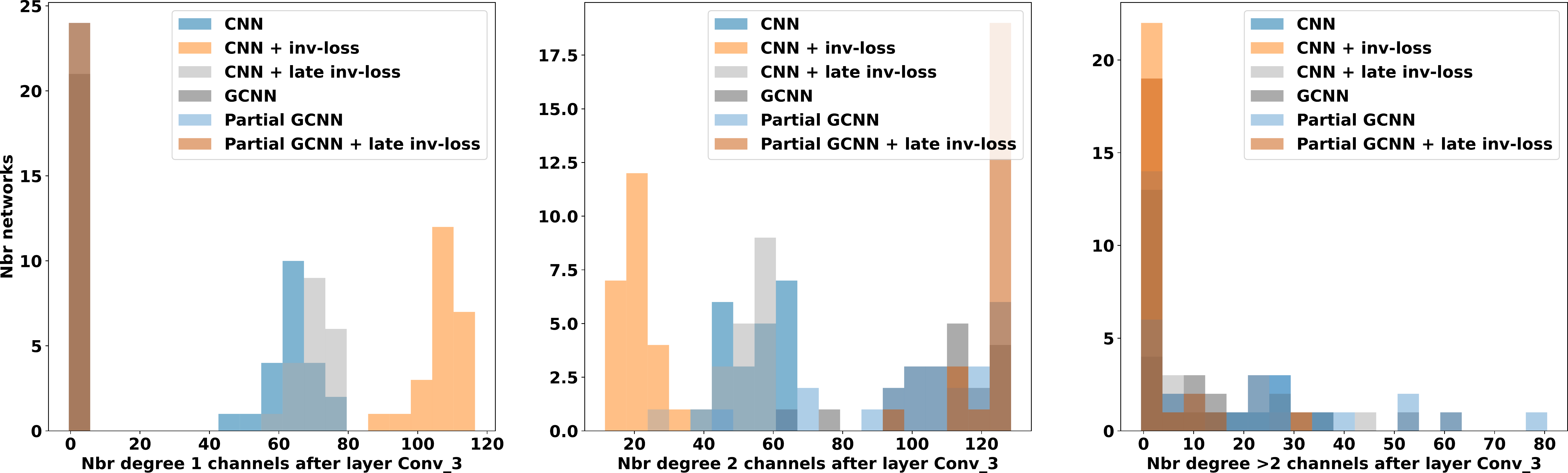_tex}\\
    \vspace{.4em}
    \includeinkscape[width=.45\textwidth]{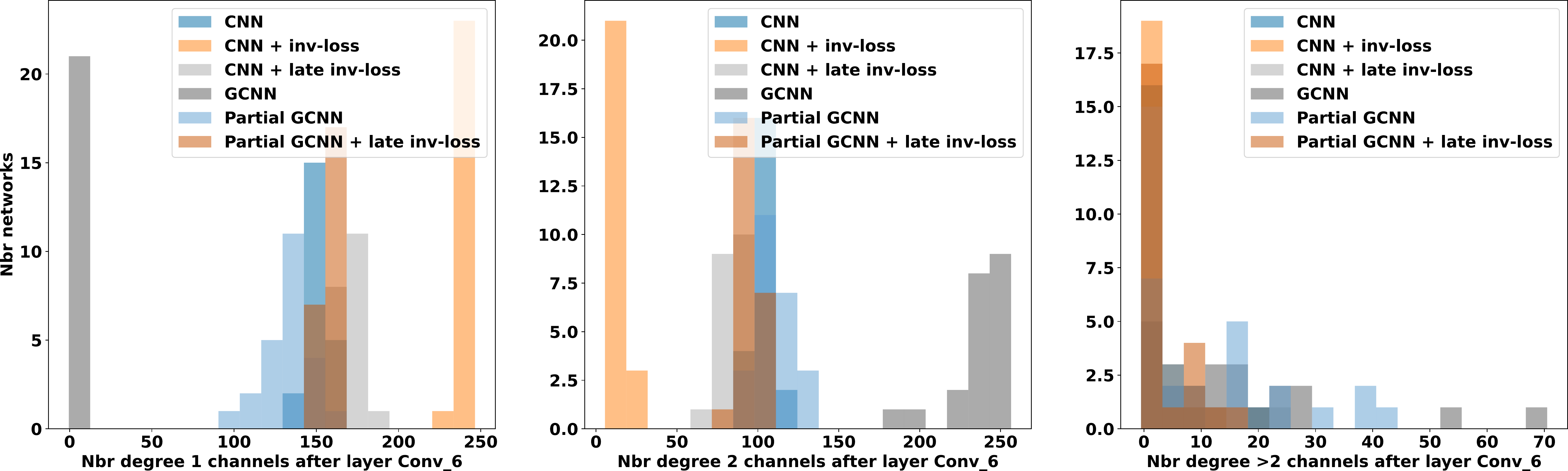_tex}
    \hfill
    \includeinkscape[width=.45\textwidth]{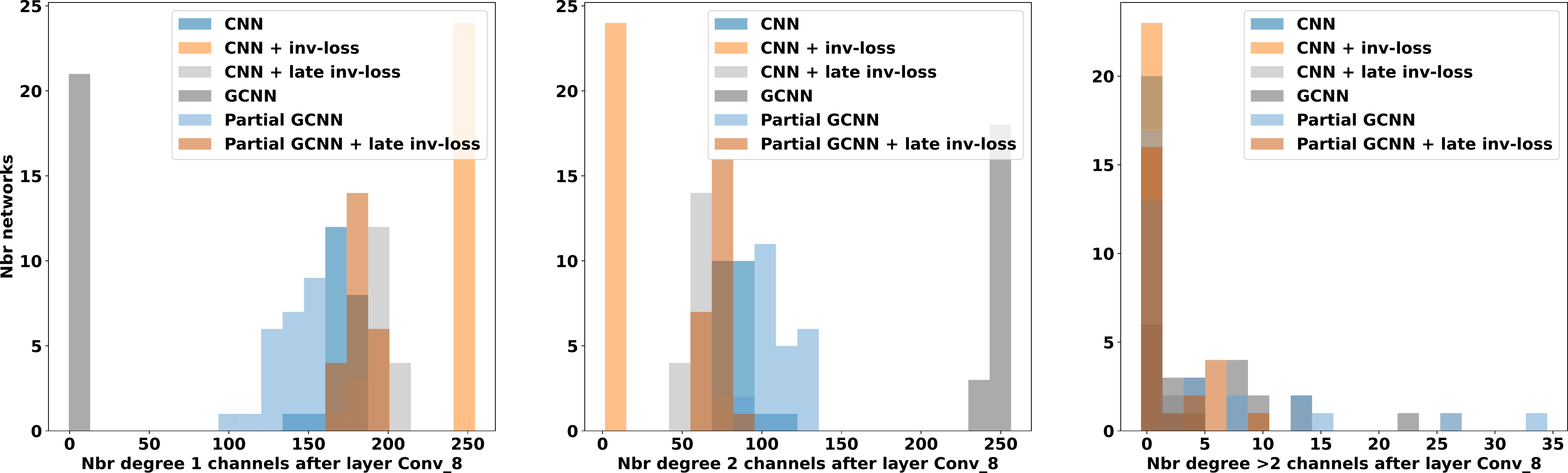_tex}\\
    \vspace{.4em}
    \includeinkscape[width=.45\textwidth]{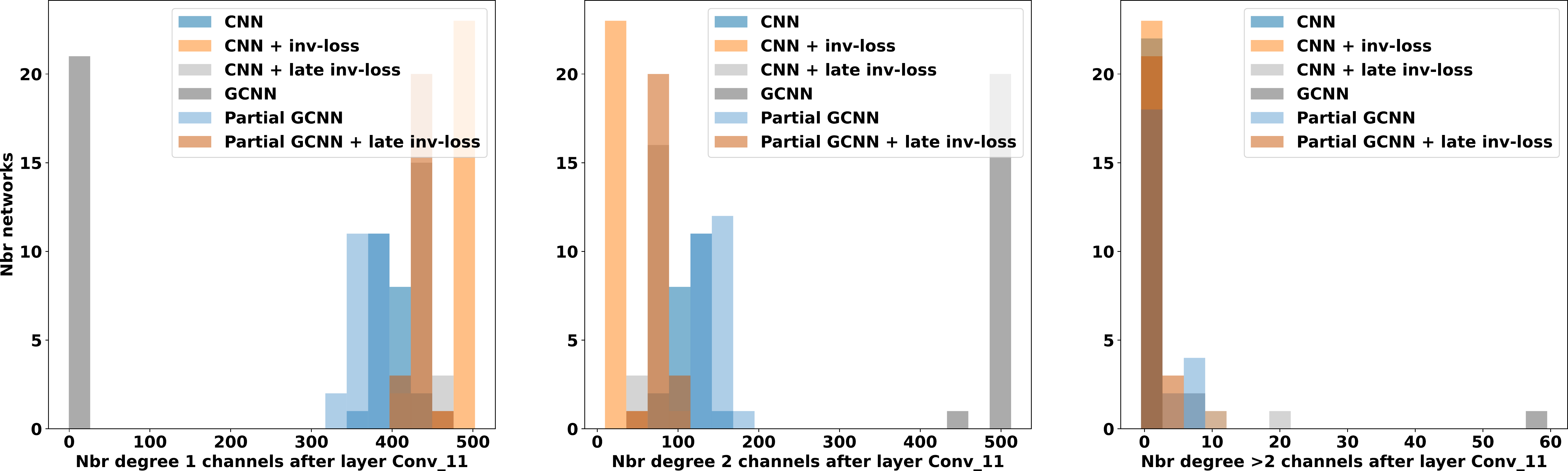_tex}
    \hfill
    \includeinkscape[width=.45\textwidth]{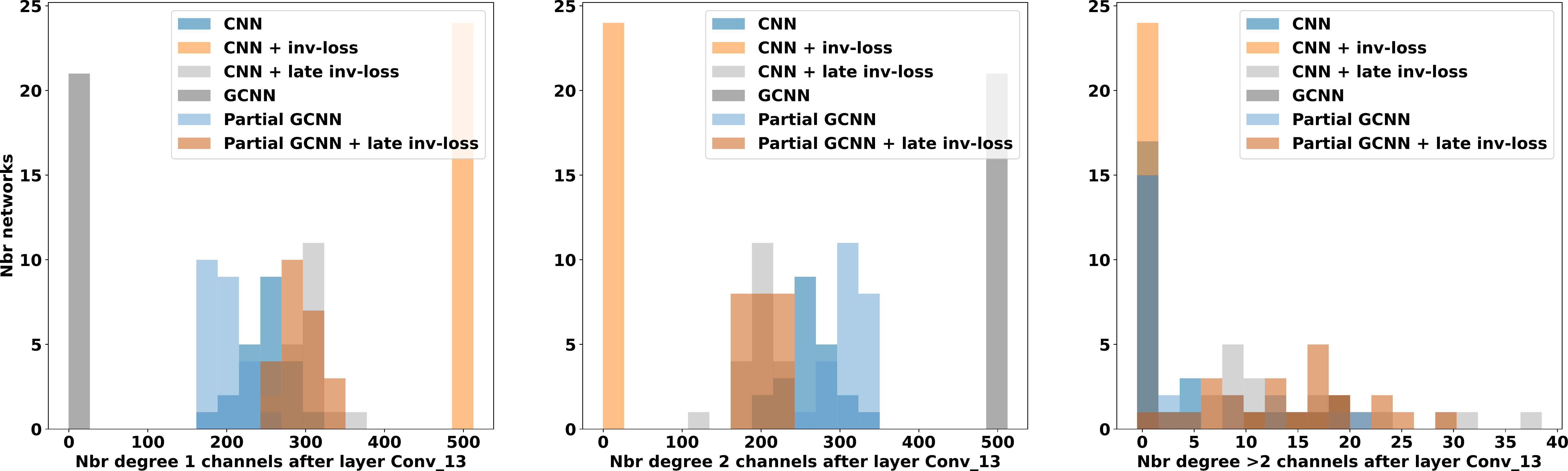_tex}\\
    \vspace{.4em}
    \includeinkscape[width=.45\textwidth]{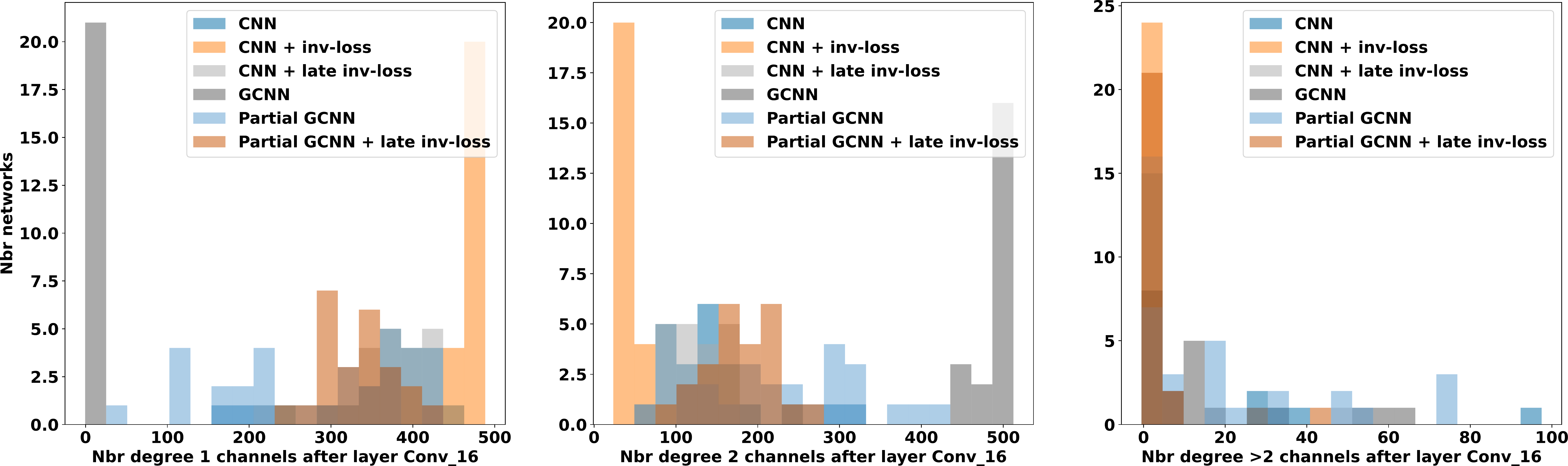_tex}
    \hfill
    \includeinkscape[width=.45\textwidth]{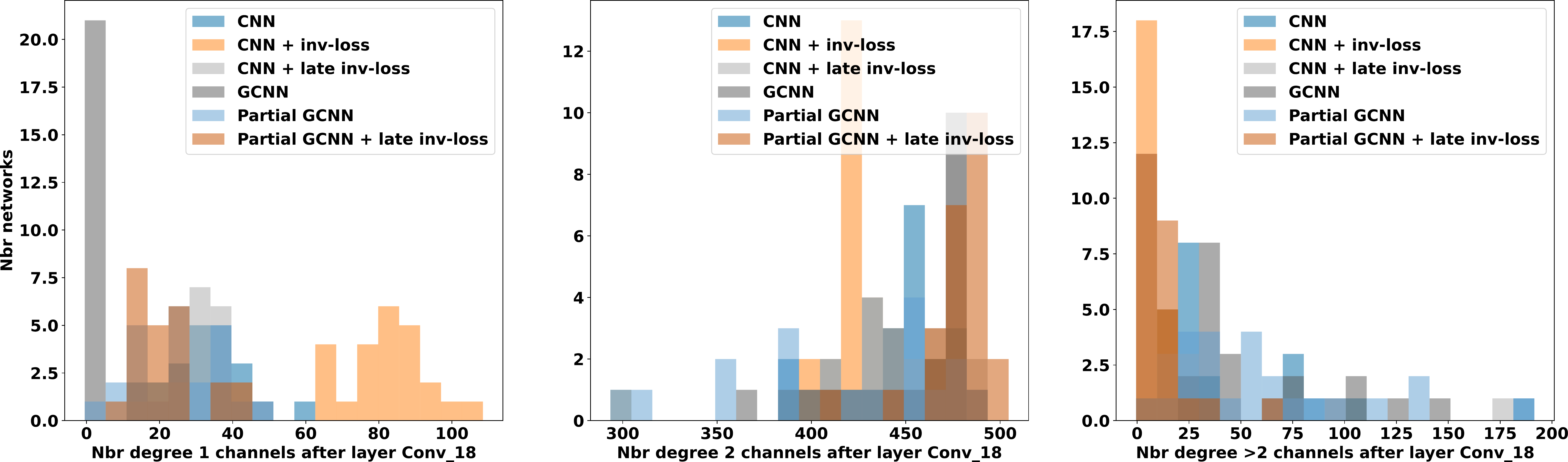_tex}
    \caption{
    Number of channels of different permutation order that are found by activation matching in different VGG11 versions.
    In the ideal case, for GCNNs all channels should have order $\leq 2$.
    The matching method fails this, but not extremely badly.
    The main reason for failure that we have observed is that layers contain many zero-filters.
    }
    \label{fig:dist_of_type}
\end{figure}

\begin{table}[]
    \centering
  \caption{Statistics for merging VGG11 nets that where trained in different manners.}
  \label{tab:vgg-twomodel-merge-stats}%
      \footnotesize
    \begin{tabular}{lll}
    \hline
    \textbf{Model 1} & \textbf{Model 2} & \textbf{Barrier} \bigstrut\\
    \hline
    \rowcolor[rgb]{ .92,  .92,  .92} CNN   & CNN   & $5.08\cdot 10^{-2}\pm 5.7\cdot 10^{-3}$ \bigstrut[t]\\
    CNN   & CNN + inv-loss & $4.87\cdot 10^{-2}\pm 4.5\cdot 10^{-3}$ \\
    \rowcolor[rgb]{ .92,  .92,  .92} CNN   & GCNN  & $6.50\cdot 10^{-2}\pm 8.9\cdot 10^{-3}$ \\
    CNN + inv-loss & CNN + inv-loss & $3.73\cdot 10^{-2}\pm 4.1\cdot 10^{-3}$ \\
    \rowcolor[rgb]{ .92,  .92,  .92} CNN + inv-loss & GCNN  & $7.53\cdot 10^{-2}\pm 9.6\cdot 10^{-3}$ \\
    GCNN  & GCNN  & $6.59\cdot 10^{-2}\pm 7.7\cdot 10^{-3}$ \bigstrut[b]\\
    \hline
    \end{tabular}%
\vspace{.5em}
\end{table}%

\begin{figure}
    \centering
    \includeinkscape[width=.07\textwidth]{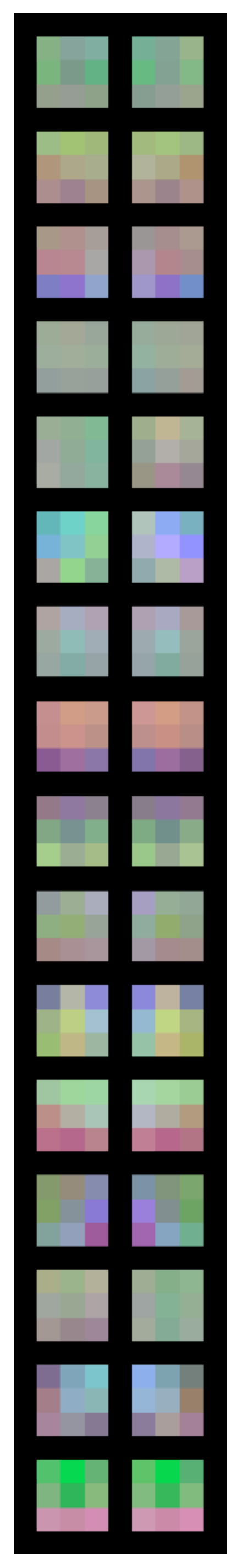_tex}
    \includeinkscape[width=.07\textwidth]{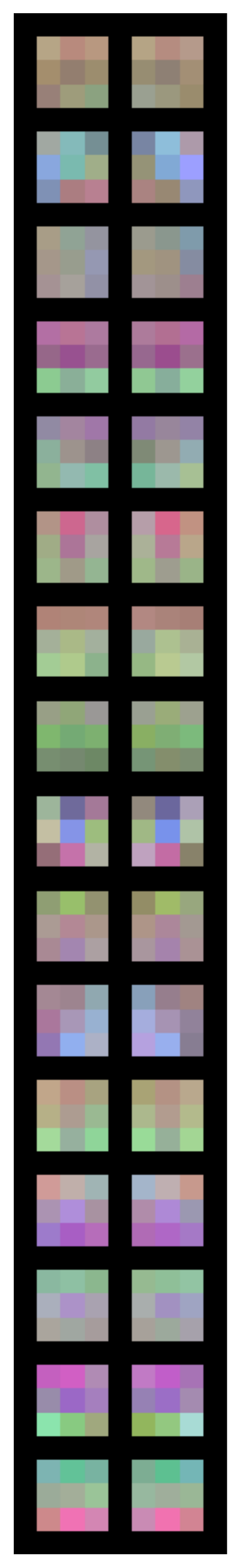_tex}
    \includeinkscape[width=.07\textwidth]{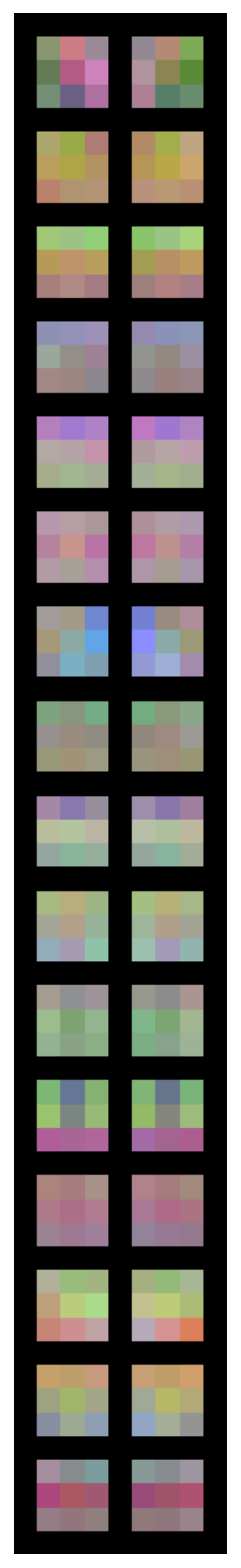_tex}
    \includeinkscape[width=.07\textwidth]{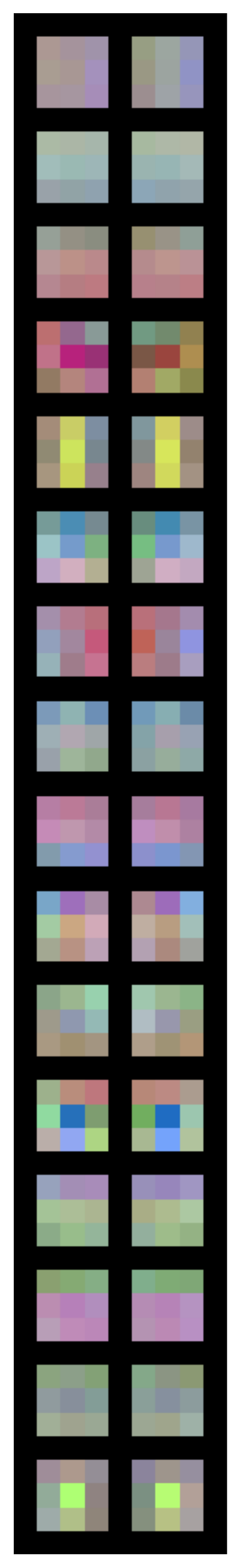_tex}
    \includeinkscape[width=.07\textwidth]{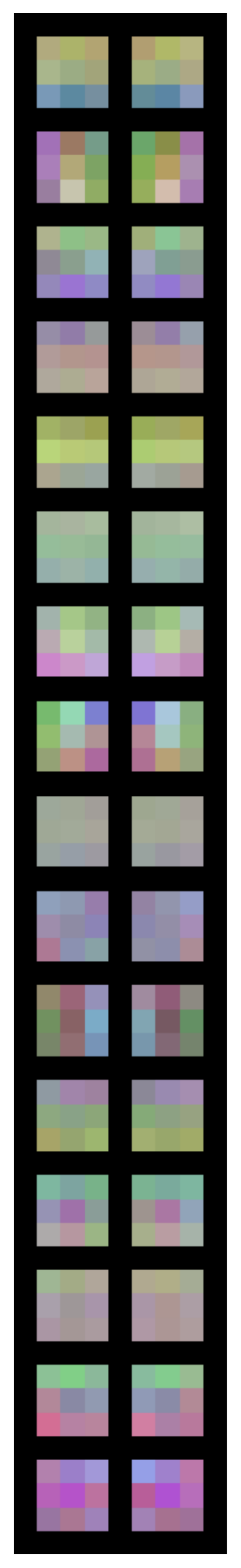_tex}
    \includeinkscape[width=.07\textwidth]{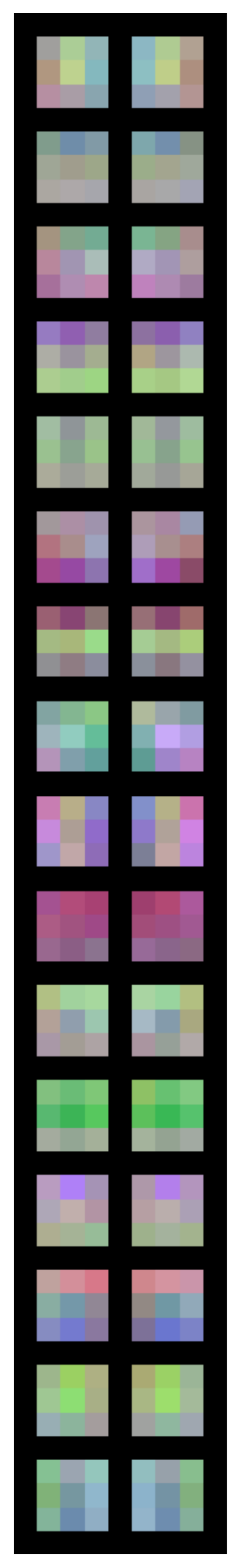_tex}
    \includeinkscape[width=.07\textwidth]{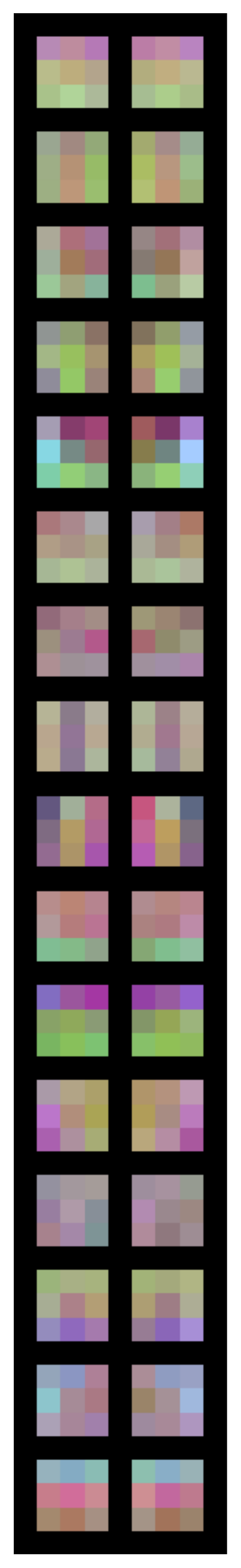_tex}
    \includeinkscape[width=.07\textwidth]{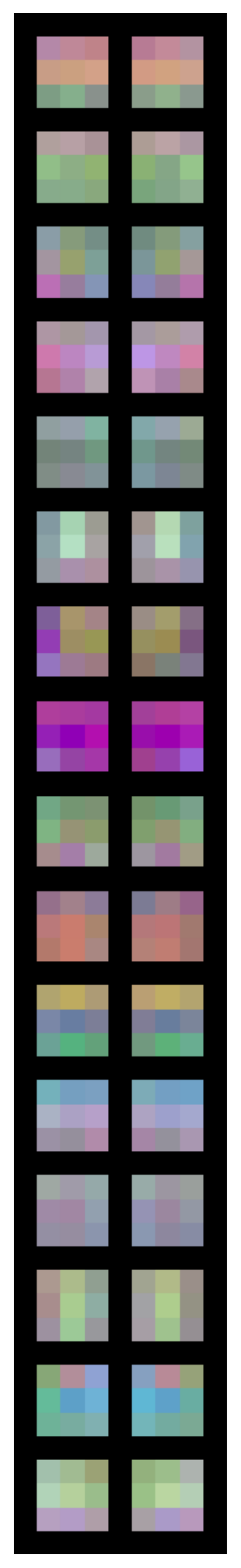_tex}
    \caption{
        Illustration of how a VGG11 encodes the horizontal flipping symmetry.
        The 128 filters in the second convolutional layer of a VGG11-net trained on CIFAR10 are shown, where
        each filter is next to a filter in a permuted version of the horizontally flipped convolution kernel. %
        Both the input and output channels of the
        flipped convolution kernel are permuted,
        as described in Section~\ref{sec:entezari}.
        The 64 input channels are projected by a random projection matrix to 3 dimensions for visualization as RGB.
        The net is trained with an invariance loss to output the same logits for horizontally flipped images. It has learnt very close to a GCNN structure where the horizontally flipped filters can be permuted close to the original.
    }
    \label{fig:vgg_conv_filters_2}
\end{figure}

\begin{figure}
    \centering
    \includeinkscape[width=.07\textwidth]{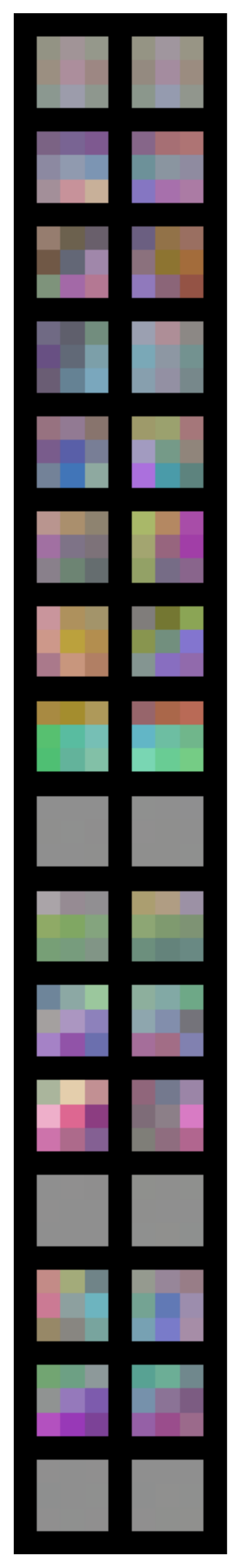_tex}
    \includeinkscape[width=.07\textwidth]{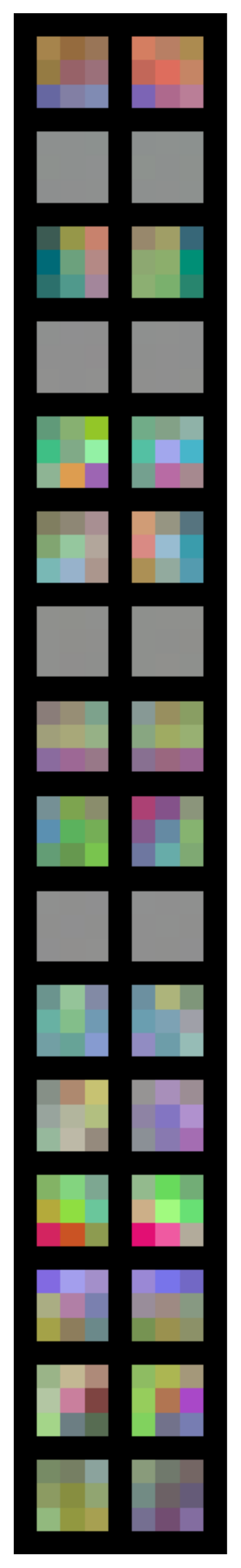_tex}
    \includeinkscape[width=.07\textwidth]{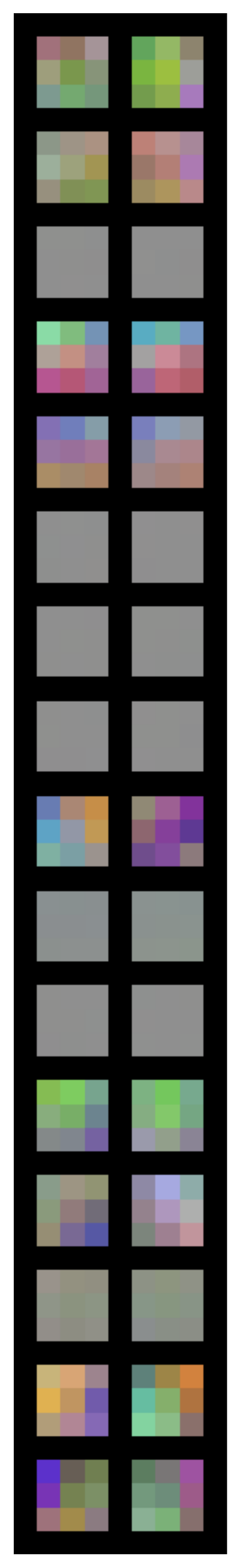_tex}
    \includeinkscape[width=.07\textwidth]{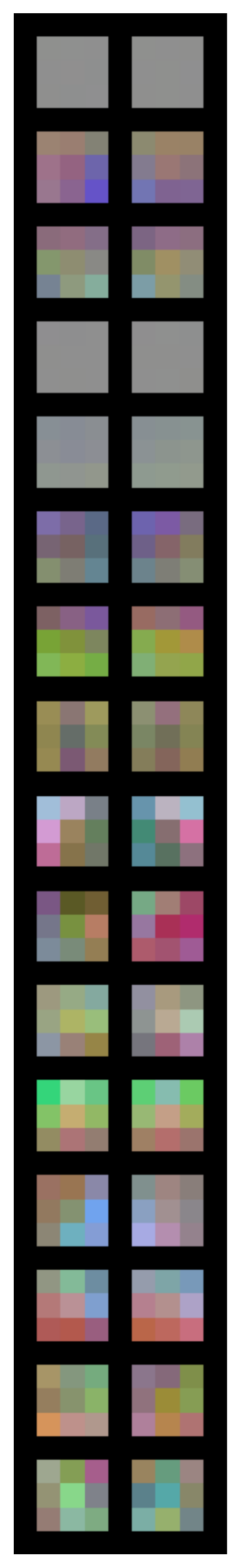_tex}
    \includeinkscape[width=.07\textwidth]{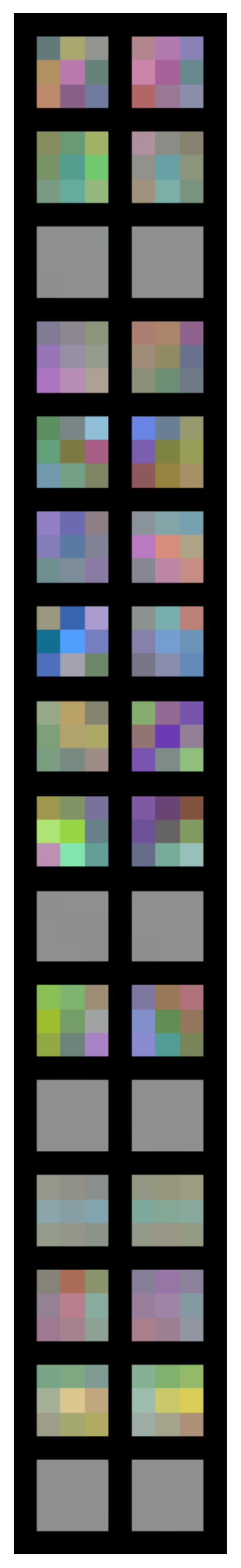_tex}
    \includeinkscape[width=.07\textwidth]{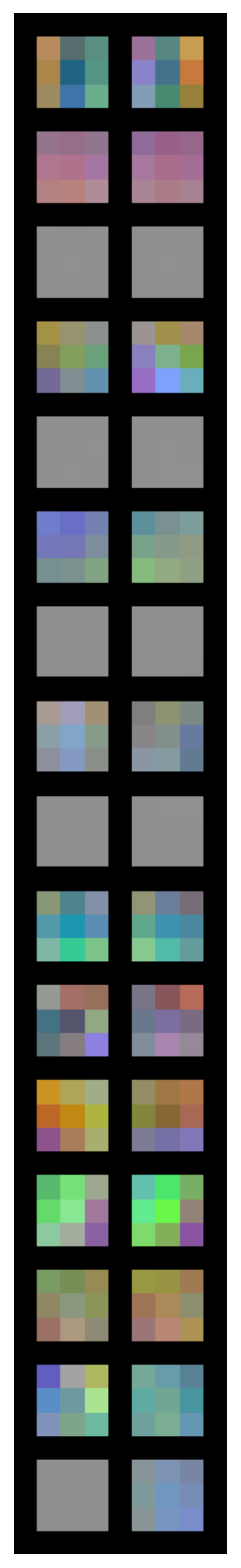_tex}
    \includeinkscape[width=.07\textwidth]{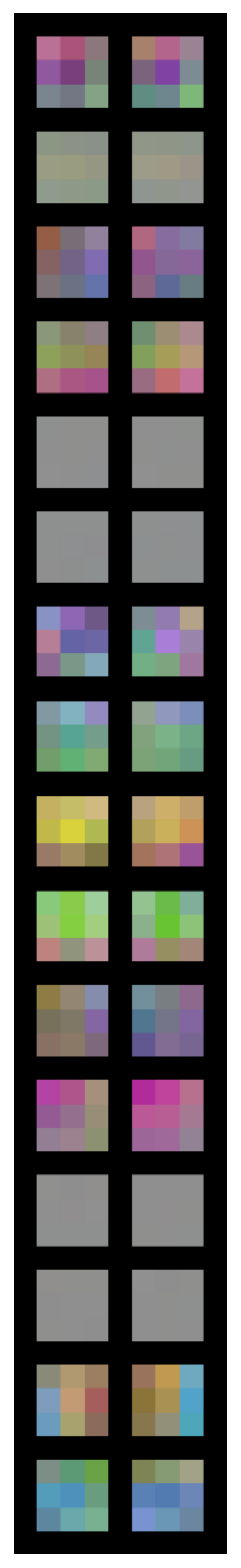_tex}
    \includeinkscape[width=.07\textwidth]{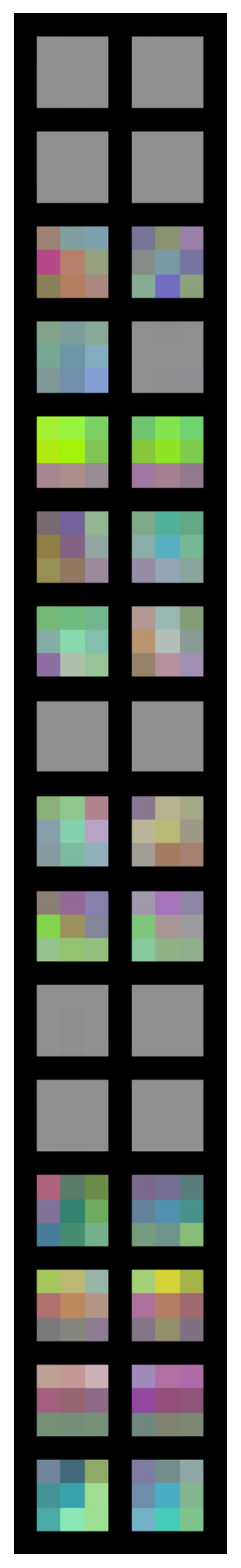_tex}
    \caption{
        Illustration of how a VGG11 encodes the horizontal flipping symmetry.
        The 128 filters in the second convolutional layer of a VGG11-net trained on CIFAR10 are shown, where
        each filter is next to a filter in a permuted version of the horizontally flipped convolution kernel. %
        Both the input and output channels of the
        flipped convolution kernel are permuted,
        as described in Section~\ref{sec:entezari}.
        The 64 input channels are projected by a random projection matrix to 3 dimensions for visualization as RGB.
        The net is trained with horizontal flipping data augmentation.
        This net is quite close to a GCNN structure, but the permuted filters are less close to each other than in Figure~\ref{fig:vgg_conv_filters_2}
    }
    \label{fig:vgg_conv_filters_3}
\end{figure}

\begin{figure}
    \centering
    \includeinkscape[width=.07\textwidth]{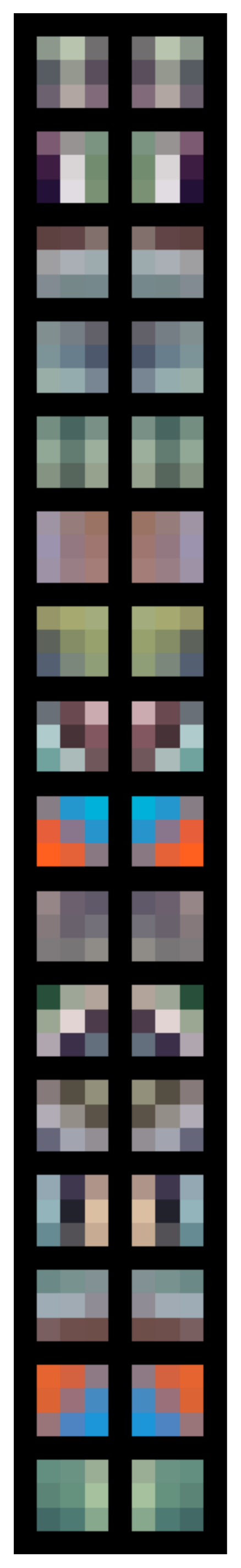_tex}
    \includeinkscape[width=.07\textwidth]{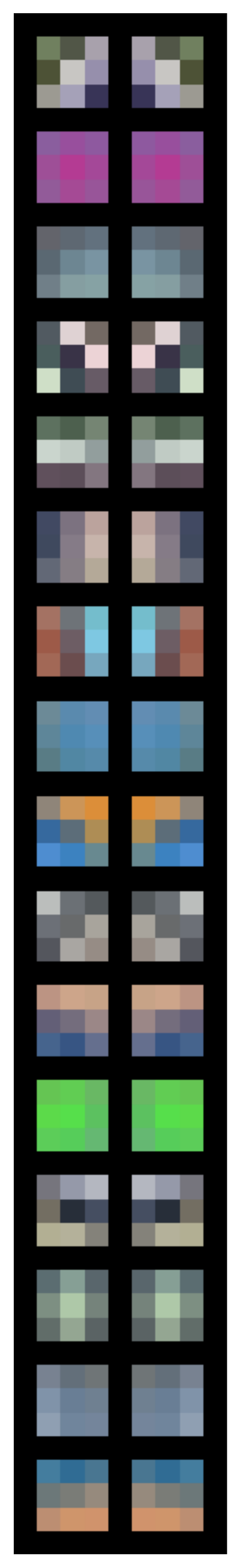_tex}
    \includeinkscape[width=.07\textwidth]{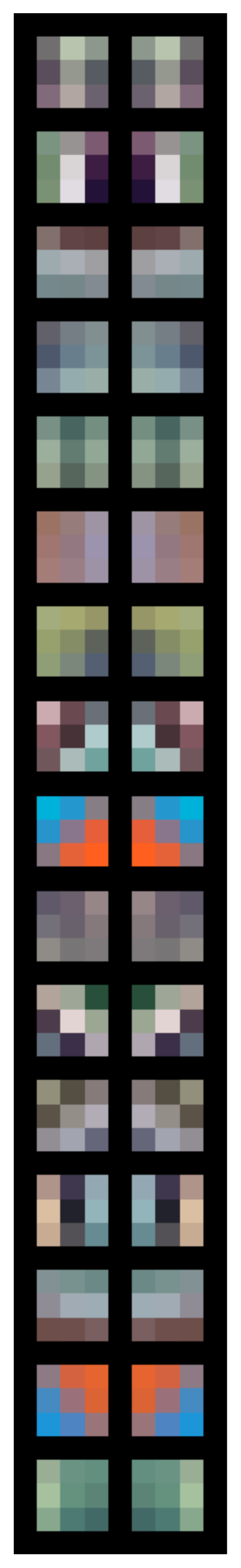_tex}
    \includeinkscape[width=.07\textwidth]{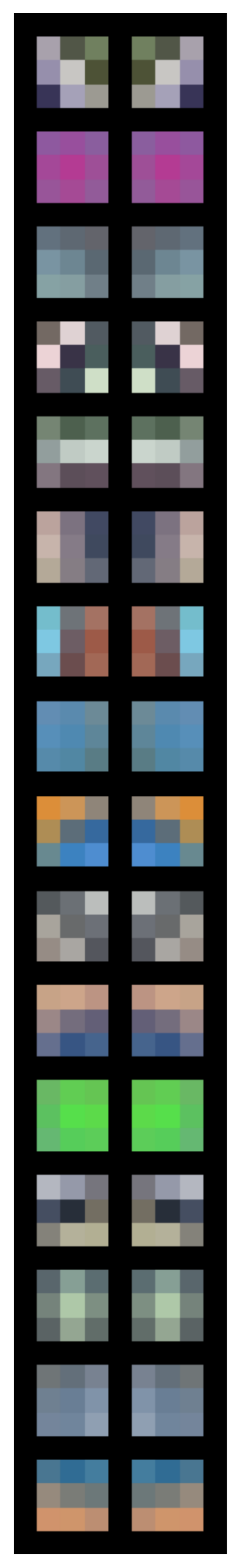_tex}
    \caption{
        Illustration of how a GCNN-VGG11 encodes the horizontal flipping symmetry.
        The 64 filters in the first convolutional layer of a GCNN-VGG11-net trained on CIFAR10 are shown, where
        each filter is next to a filter in a permuted version of the horizontally flipped convolution kernel. %
    }
    \label{fig:vgg_conv_filters_4}
\end{figure}

\begin{figure}
    \centering
    \includeinkscape[width=.07\textwidth]{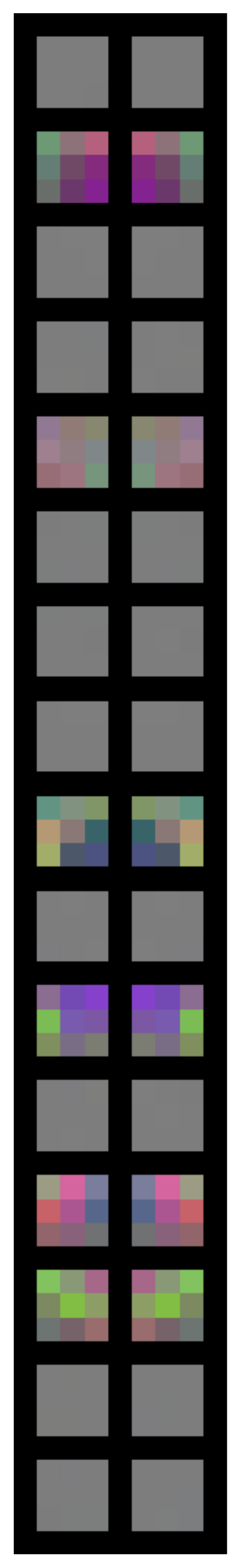_tex}
    \includeinkscape[width=.07\textwidth]{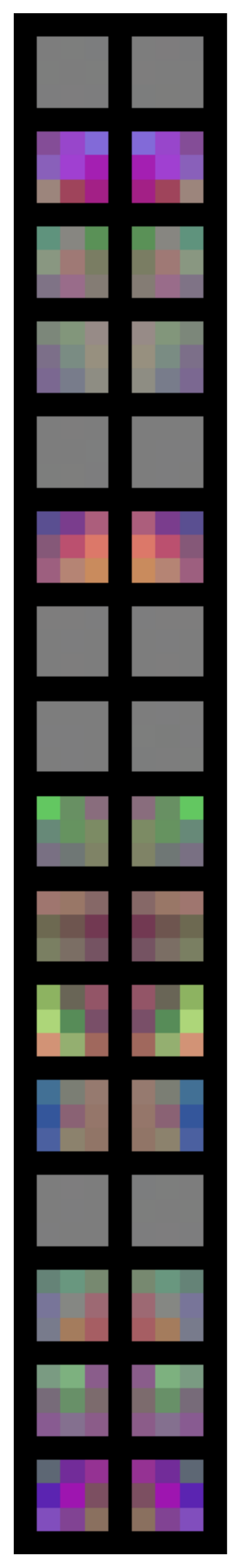_tex}
    \includeinkscape[width=.07\textwidth]{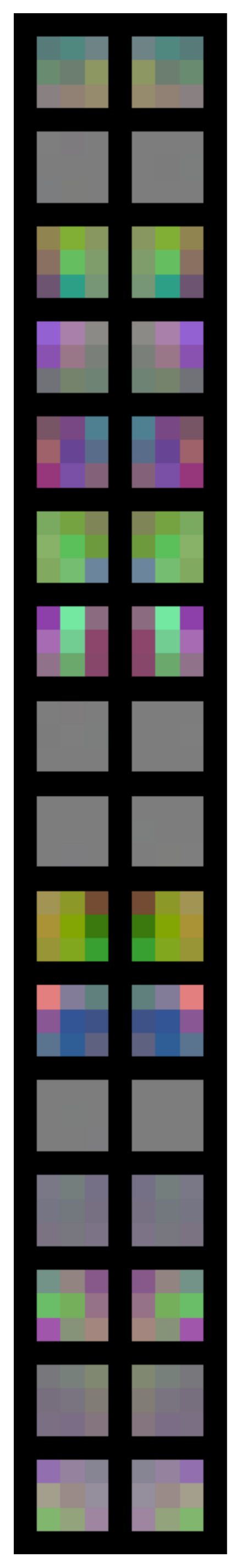_tex}
    \includeinkscape[width=.07\textwidth]{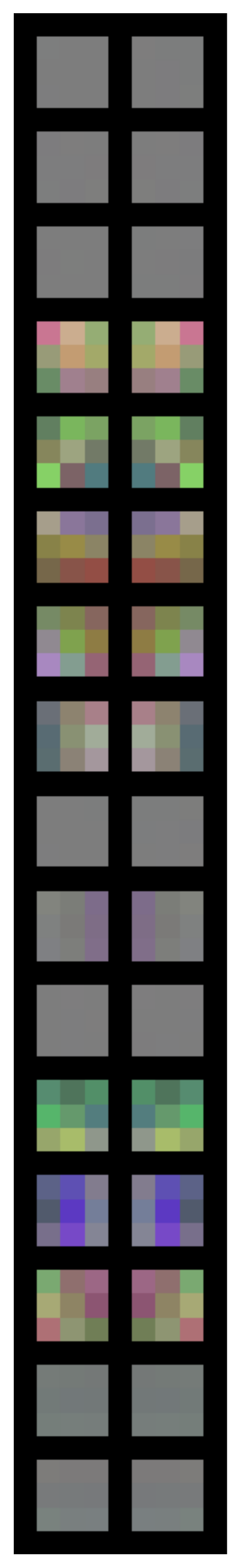_tex}
    \includeinkscape[width=.07\textwidth]{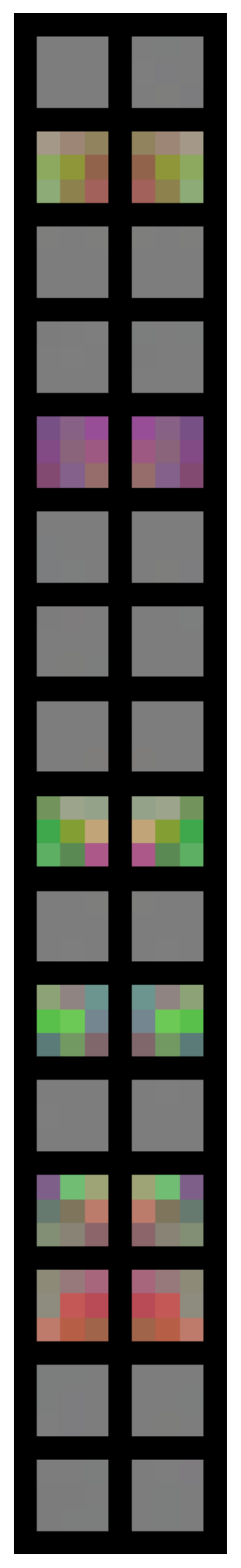_tex}
    \includeinkscape[width=.07\textwidth]{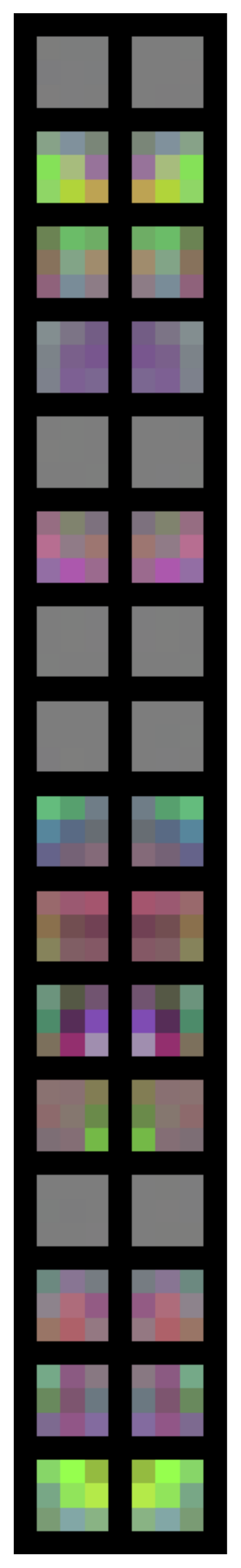_tex}
    \includeinkscape[width=.07\textwidth]{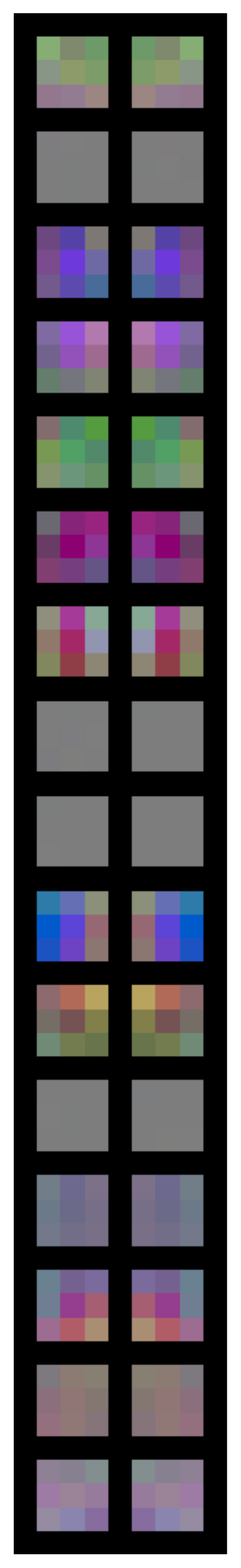_tex}
    \includeinkscape[width=.07\textwidth]{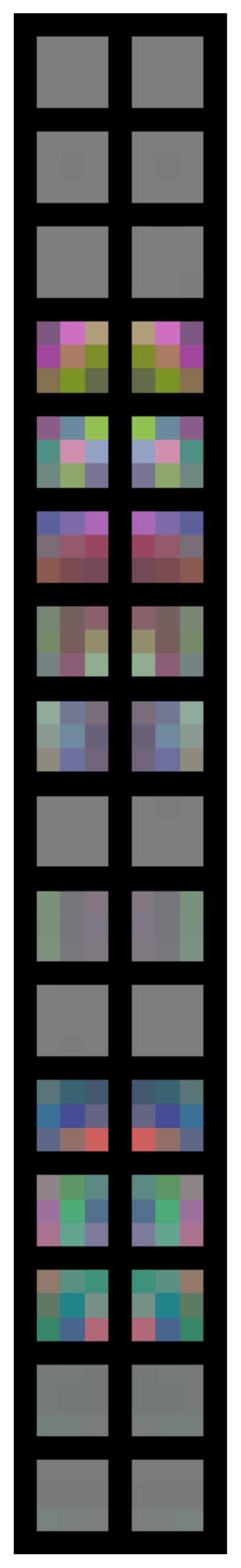_tex}
    \caption{
        Illustration of how a GCNN-VGG11 encodes the horizontal flipping symmetry.
        The 128 filters in the second convolutional layer of a GCNN-VGG11-net trained on CIFAR10 are shown, where
        each filter is next to a filter in a permuted version of the horizontally flipped convolution kernel. %
        Both the input and output channels of the
        flipped convolution kernel are permuted,
        as described in Section~\ref{sec:entezari}.
        The 64 input channels are projected by a random projection matrix to 3 dimensions for visualization as RGB.
    }
    \label{fig:vgg_conv_filters_5}
\end{figure}

\subsubsection{Training details}
The nets were trained on NVIDIA T4 GPUs on a computing cluster.
We train 4 nets in parallel on each GPU, each run taking between 1 and 2 hours, depending on net type.
This yields less than $(24/4)\cdot 6 \cdot 2 = 72$ compute hours in total.

\subsection{ResNet50 on ImageNet}
We show the first layer filters of the torchvision new recipe
ResNet50 in Figure~\ref{fig:resnet_filters}.

\begin{figure}
    \centering
    \includegraphics[width=.4\textwidth]{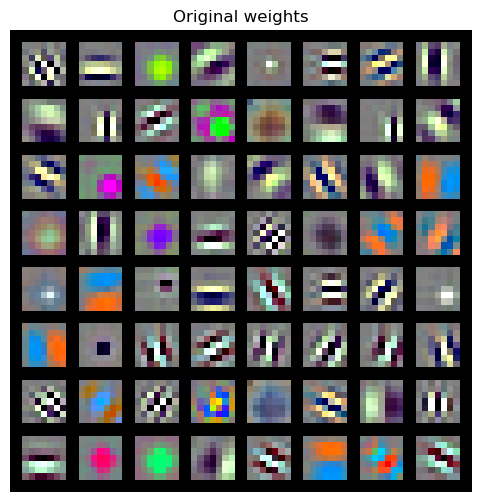}
    \quad
    \includegraphics[width=.4\textwidth]{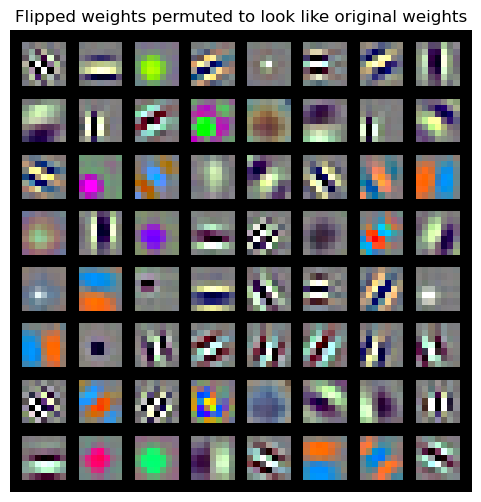}
    \caption{Left: Visualization of the filters in the first layer of a ResNet50. Right: The same filters but horizontally flipped and permuted to align with the original filters to the right.}
    \label{fig:resnet_filters}
\end{figure}

\subsubsection{The peculiarities of image size 224}
\label{sec:resnet-225}
As mentioned in footnote~\ref{foot:stride2}, a convolution layer with stride 2 can not be equivariant to horizontal flipping if the input has even width.
This problem has been discussed previously by \cite{mohamed2020data, romero2020attentive}.
All the methods in Table~\ref{tab:resnet-results} except for the new Torchvision recipe use image size 224 for training and all use it for testing.
It so happens that 224 can be halved 5 times without remainder meaning that all feature spaces in a ResNet50 will have even width.
As luck would have it, if we use image size 225, then all feature spaces have odd width instead.
Therefore we report results for a few methods with image size 225 in Table~\ref{tab:resnet-results-225}, this table contains all numbers from Table~\ref{tab:resnet-results} as well for convenience.
We see that the simple change of image size lowers the invariance score and GCNN barrier.
However it does not seem to influence the accuracy much, meaning that the networks seem to learn to compensate for the non-equivariance with image size 224.
\begin{table}[]
  \centering
  \caption{Results for ResNet50's trained using various methods on ImageNet. The models with an asterisk are trained by us. The last five are self-supervised methods. Flip-Accuracy is the accuracy of a net with all conv-filters horizontally flipped. For the accuracy and barrier values w/o REPAIR we still reset batch norm statistics after merging by running over a subset of the training data.}
  \label{tab:resnet-results-225}%
  \footnotesize
    \begin{tabular}{b{.15\textwidth}b{.08\textwidth}b{.08\textwidth}b{.08\textwidth}b{.08\textwidth}b{.08\textwidth}b{.08\textwidth}b{.08\textwidth}}
    \hline
    \textbf{Training Method} & \textbf{Invariance Error} & \textbf{Accuracy} & \textbf{Flip-Accuracy} & \textbf{Halfway Accuracy  w/o REPAIR} & \textbf{Halfway Accuracy} & \textbf{GCNN Barrier w/o REPAIR} & \textbf{GCNN Barrier} \bigstrut\\
    \hline
    \rowcolor[rgb]{ .92,  .92,  .92} Torchvision new & $0.284$ & $0.803$ & $0.803$ & $0.731$ & $0.759$ & $0.0893$ & $0.0545$ \\
    Torchvision old & $0.228$ & $0.761$ & $0.746$ & $0.718$ & $0.725$ & $0.0467$ & $0.0384$ \\
    \rowcolor[rgb]{ .92,  .92,  .92} *Torchvision old + inv-loss & $0.0695$ & $0.754$ & $0.745$ & $0.745$ & $0.745$ & $0.00636$ & $0.0054$ \\
    *Torchvision old + img size 225 & $0.162$ & $0.764$ & $0.764$ & $0.714$ & $0.722$ & $0.0657$ & $0.0558$ \\
    \rowcolor[rgb]{ .92,  .92,  .92} *Torchvision old + inv-loss + img size 225 & $0.00235$ & $0.744$ & $0.745$ & $0.744$ & $0.744$ & $9.4\cdot10^{-5}$ & $6.72\cdot10^{-5}$ \\
    BYOL  & $0.292$ & $0.704$ & $0.712$ & $0.573$ & $0.624$ & $0.19$  & $0.118$ \\
    \rowcolor[rgb]{ .92,  .92,  .92} DINO  & $0.169$ & $0.753$ & $0.744$ & $0.611$ & $0.624$ & $0.184$ & $0.166$ \\
    Moco-v3 & $0.16$  & $0.746$ & $0.735$ & $0.64$  & $0.681$ & $0.136$ & $0.0805$ \\
    \rowcolor[rgb]{ .92,  .92,  .92} Simsiam & $0.174$ & $0.683$ & $0.667$ & $0.505$ & $0.593$ & $0.252$ & $0.121$ \\
    *Simsiam + img size 225 & $0.113$ & $0.68$  & $0.68$  & $0.535$ & $0.611$ & $0.213$ & $0.103$ \\
    \hline
    \end{tabular}%
    \vspace{.5em}
\end{table}%

\subsubsection{Training details}
We train on NVIDIA A100 GPUs on a computing cluster.
Each run was on either 4 or 8 parallel GPUs.
For the supervised net, training
took around between 9 and 13 hours depending on image size and whether invariance loss was applied.
For the Simsiam training, the self-supervised pre-training took 23 hours and training the supervised linear head took 15 hours.
In total this gives an upper bound of
$13\cdot 8\cdot 3 + (15+23)\cdot 8 = 616$ GPU hours.

\newpage
\section{Layerwise equivariance of ReLU-networks -- proofs and further results}
\subsection{Proof of Lemma~\ref{lem:relu}}
\label{app:proof-relu}
\relulemma*
We will use the following notation. For a scalar $a$, we can uniquely write it as $a=a^+-a^-$, where $a^+\ge 0$ and $a^-\ge 0$, and either $a^+=0$ or $a^-=0$. Similarly for matrices (elementwise), $A=A^+-A^-$.

\begin{proof}
  As the equation $\ReLU(Ax) = B\ReLU(x)$ should hold for all $x$, we can start by setting $x=-e_k$ where $e_k$ is the canonical basis. Then, as $\ReLU(-e_k)=0$, we get that $\ReLU(-Ae_k)=A^-_k=0$. Hence, $A^-=0$ and consequently $A=A^+$. Inserting $x=e_k$ into the equation, we can conclude that $A=B=A^+$.

  Now suppose that two elements on the same row are non-zero, say $A_{ik}>0$ and $A_{ik'}>0$ on row $i$ for some $k\not=k'$. Let's see what happens when we set $x= A_{ik}e_{k'}-A_{ik'}e_{k}$. On the left hand side, the $i$:th element simply becomes $\ReLU(A_{ik}A_{ik'}-A_{ik'}A_{ik})=0$ whereas on the right hand side we get $A\ReLU(A_{ik}e_{k'}-A_{ik'}e_{k})=A\ReLU(A_{ik}e_{k'})=A_{ik} A_{k'}$, that is, the $i$:th element is $A_{ik}A_{ik'} \not = 0$, which is a contradiction. Hence, we can conclude that each row has at most one positive element and since the matrices $A$ and $B$ are non-singular, they will have exactly one positive element in each row and we can conclude that they must be scaled permutation matrices.

\end{proof}

\subsection{A result on layerwise equivariance given an invariant network}
Next we are going to look at two-layer networks $f: \R^m\to\R$, which are invariant, that is, $G$-equivariant with $\rho_2(g)=I$. Now, if $f(x) = W_2 \ReLU(W_1 x)$, one can without loss of generality assume that it is only the signs of the elements in $W_2$ that matter, and move the magnitudes into $W_1$ and write $f(x) = \begin{bmatrix} \pm 1 & \ldots & \pm 1\end{bmatrix}
\ReLU(\tilde{W}_1 x)$. Also, one can sort the indices such that $W_2=\begin{bmatrix} 1 & \ldots & 1 & \text{-}1 & \ldots & \text{-}1 \end{bmatrix}$. This leads to a convenient characterization of invariant two-layer networks.

\begin{restatable}{proposition}{layerwisetwodtwo}
\label{prop:2d2layer2}
Consider the case of a two-layer network $f: \R^m\to\R$,
\[
    f(x) = W_2 \ReLU(W_1 x),
\]
where $\ReLU$ is applied point-wise. Assume that the matrix 
$W_1\in\R^{m\times m}$ is non-singular and that $W_2\in\R^{m\times 1}$ has the form $\begin{bmatrix} 1 & \ldots & 1 & \text{-}1 & \ldots & \text{-}1 \end{bmatrix}$ with $m_+$ positive elements and $m_-$ negative elements ($m=m_++m_-$).
Further, assume $\rho_2(g)=I$ for the output, that is, the trivial representation. Then, $f$ is $G$-equivariant with  $\rho_0:G\to\mathrm{GL}(\R^{m})$ if and only if 
\[
\rho_0(g) = W_1^{-1}\begin{bmatrix} P_+(g) & 0 \\ 0 & P_-(g) \end{bmatrix} W_1,
\]
where $P_+(g)$ is an $m_+\times m_+$ permutation matrix and $P_-(g)$ an $m_- \times m_-$ permutation matrix.  Furthermore, the network $f$ is layerwise equivariant with $\rho_1(g)=\begin{bmatrix} P_+(g) & 0 \\ 0 & P_-(g) \end{bmatrix}$.
\end{restatable}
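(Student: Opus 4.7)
The plan is to reduce the statement to an invariance property of a simpler scalar function and then exploit its piecewise linear structure. Set $\rho_1(g) := W_1 \rho_0(g) W_1^{-1}$ and substitute $z = W_1 x$. Since $W_1$ is invertible, $z$ ranges over all of $\R^m$ as $x$ does, so the $G$-invariance condition $f(\rho_0(g) x) = f(x)$ is equivalent to
\[
\phi(\rho_1(g) z) = \phi(z) \quad \text{for all } z \in \R^m,\; g\in G, \text{ where } \phi(z) := W_2 \ReLU(z).
\]
The ``if'' direction is then a direct computation: when $\rho_1(g) = \begin{bmatrix} P_+(g) & 0 \\ 0 & P_-(g)\end{bmatrix}$ is block-diagonal with permutation blocks, $\ReLU$ commutes with $\rho_1(g)$ entry-by-entry, and because $W_2$ assigns the same coefficient $+1$ (resp.\ $-1$) to every index in the first (resp.\ second) block, $W_2\ReLU(\rho_1(g) z) = W_2\ReLU(z)$ immediately. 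The work concentrates on the ``only if'' direction.

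For the forward direction I would first show that invariance of $\phi$ alone forces $\rho_1(g)$ to be a monomial (signed permutation) matrix $P(g)D(g)$ with $P(g)$ a permutation and $D(g)$ an invertible diagonal matrix. On the interior of each orthant the gradient is $[\nabla \phi(z)]_j = (W_2)_j\, \one[z_j > 0]$; since every $(W_2)_j \in \{\pm 1\}$ is nonzero, $\nabla \phi$ jumps by $(W_2)_j e_j \ne 0$ across the coordinate hyperplane $\{z_j = 0\}$ and is continuous away from these hyperplanes, so the singular locus of $\nabla\phi$ is exactly $\bigcup_{j=1}^m \{z_j = 0\}$. Applying the same description to $z \mapsto \phi(\rho_1(g) z)$ produces singular locus $\bigcup_{j=1}^m \{r_j \cdot z = 0\}$, where $r_j$ is the $j$-th row of $\rho_1(g)$. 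The invariance equates these two arrangements of $m$ pairwise distinct hyperplanes (they are distinct because the rows of $\rho_1(g)$ are linearly independent), so each $r_j$ must be a nonzero scalar multiple of some standard basis vector, yielding $\rho_1(g) = P(g)D(g)$.

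Finally I would pin down $D(g)$ and the block structure of $P(g)$ by evaluating the invariance equation on the standard basis vectors. Let $\sigma$ denote the permutation with $P(g) e_k = e_{\sigma(k)}$ and write $d_k$ for the $k$-th diagonal entry of $D(g)$. Substituting $z = e_k$ yields $\phi(e_k) = (W_2)_k$ on the right-hand side, while the left-hand side equals $d_k (W_2)_{\sigma(k)}$ if $d_k > 0$ and $0$ if $d_k < 0$. Nonvanishing of $(W_2)_k = \pm 1$ forces $d_k > 0$, after which the identity $d_k (W_2)_{\sigma(k)} = (W_2)_k$ with both $(W_2)_k, (W_2)_{\sigma(k)} \in \{\pm 1\}$ forces $d_k = 1$ and $(W_2)_{\sigma(k)} = (W_2)_k$. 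Hence $D(g) = I$ and $\sigma$ preserves the partition into the $m_+$ positive and $m_-$ negative slots of $W_2$, producing the claimed block form of $\rho_1(g)$ and, via $\rho_0(g) = W_1^{-1}\rho_1(g) W_1$, the claimed structure of $\rho_0(g)$; layerwise equivariance with $\rho_1(g)$ on the hidden layer is then immediate (the $W_2$ layer is invariant because $W_2\rho_1(g) = W_2$ by construction). The main technical hurdle is the singular-locus step, which relies essentially on every entry of $W_2$ being nonzero of unit magnitude, so that no coordinate hyperplane is hidden by cancellation when reading off the kinks of $\phi$.
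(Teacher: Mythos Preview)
Your proof is correct and takes a genuinely different route from the paper's argument. Both proofs begin with the same change of variables $z=W_1x$, reducing the question to the invariance of $\phi(z)=W_2\ReLU(z)$ under $\rho_1(g)=W_1\rho_0(g)W_1^{-1}$, and both finish by testing $z=e_k$ to pin down the permutation and kill the diagonal scaling. The divergence is in how each shows that $\rho_1(g)$ is a monomial matrix in the first place. The paper plugs in test vectors $y=e_k-\lambda e_{k'}$ with $\lambda>0$, notes that the left-hand side equals $\pm1$ independently of $\lambda$, and then argues entry-by-entry (ruling out negative entries by letting $\lambda\to\infty$, and ruling out two positive entries in a row by letting $\lambda\to0^+$) that $A=\rho_1(g)$ must be a permutation. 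Your argument is instead geometric: you identify the non-differentiability locus of $\phi$ as the coordinate hyperplane arrangement, observe that invariance forces this arrangement to coincide with the arrangement $\{r_j\cdot z=0\}$ determined by the rows of $\rho_1(g)$, and conclude that each $r_j$ is a scalar multiple of a basis vector. Your approach is cleaner conceptually and makes explicit why the hypothesis that every entry of $W_2$ is nonzero matters (so that no coordinate hyperplane is missing from the singular locus); the paper's approach is more elementary in that it never invokes gradients or hyperplane arrangements, only positivity of $\ReLU$. The one step you might flesh out in a final write-up is the standard fact that two unions of $m$ pairwise distinct hyperplanes coincide only if the underlying sets of hyperplanes coincide, but this is routine.
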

\begin{proof}
The condition for invariance is that the following equation should hold for all $g \in G$ and all $x\in \R^{m}$,
\[
    W_2 \ReLU(W_1 x) = W_2 \ReLU(W_1 \rho_0(g)x).
\]
Now, let $y = W_1x$ and $A=W_1\rho_0(g)W_1^{-1}$, we get that the following equivalent condition 
\[
    W_2 \ReLU(y) = W_2 \ReLU(Ay)
\]
should hold for all $A=A(g)=W_1\rho_0(g)W_1^{-1}$. In particular, it should hold for $y=e_k-\lambda e_{k'}$ for $k\not =k'$ and $\lambda>0$. For this choice, the left hand side is equal to $\pm 1$ and hence the right hand side should also be independent of $\lambda$. The right hand side has $m_+$ positive terms and $m_{-}$ negative terms,
\[
\sum_{i=1}^{m_+}(A_{ik}-\lambda A_{ik'})^+
- \sum_{i=m_+ +1}^{m}(A_{ik}-\lambda A_{ik'})^+.
\]
One possibility is that one of the positive terms cancels out a negative one. However, this would imply that $A$ would contain two identical rows, making $A$ singular, which is not feasible. Now, if $A_{ik'}<0$, then for sufficiently large $\lambda$, it would follow that $(A_{ik}-\lambda A_{ik'})^+=A_{ik}-\lambda A_{ik'}$ which makes the right hand side dependent on $\lambda$, a contradiction. Hence, all the elements of $A$ must be positive. Now, if both $A_{ik}>$ and $A_{ik'}>0$, then for sufficiently small $\lambda$, we get
$(A_{ik}-\lambda A_{ik'})^+=A_{ik}-\lambda A_{ik'}$, again a contradiction. Therefore, at most one element can be positive on each row. By also checking $y=e_k$, one can draw the conclusion that $A$ must be a permutation matrix of the required form and the result follows.
\end{proof}

\section{Answer to an open question by Elesedy and Zaidi}
\label{app:elesedy}
A question which was proposed by \citet[Sec. 7.4]{elesedy21a} is whether non-layerwise equivariant nets can ever be better at equivariant tasks than layerwise equivariant nets.
We saw in Example~\ref{ex:zero-net} that they can be equally good, but not whether they can be better.

We formulate the question as follows.
\begin{enumerate}
    \myitem{(\thesection.Q)} \label{q:elesedy}
Given a $G$-invariant data distribution $\mu$,
and a ground truth equivariant function $s:\R^{m_0}\to\R^{m_L}$,
can we ever have for neural networks of the form \eqref{eq:net} that
\[
\inf_{W_j \in \R^{m_{j-1}\times m_j}} \E_{x\sim \mu} \| f_{\{W_j\}}(x) - s(x) \| < 
\inf_{\substack{W_j \in \R^{m_{j-1}\times m_j},\\ \text{$W_j$ equivariant}}} \E_{x\sim \mu} \| f_{\{W_j\}}(x) - s(x) \|?
\]
Here we assume that the layer dimensions $m_j$ are fixed and equal on both hand sides. We write $f_{\{W_j\}}$ to make the parameterisation of $f$ in terms of the linear layers explicit.
\end{enumerate}
We will now give a simple example showing that the answer to the question is yes.

\begin{example}
\label{ex:equiv-bad}
    Let $\mu$ be a distribution on $\R$ that is symmetric about the origin and let $s:\R\to\R$ be given by $s(x) = |x|$. $s$ is invariant to changing the sign of the input,
    i.e., equivariant to $S_2 = \{i, h\}$ with representation choice $\rho_0(h) = -1$ on the input and $\rho_2(h) = 1$ on the output.
    Now, we consider approximating $s$ with a neural network $f$:
    \[
        f(x) = W_2\ReLU(W_1 x),
    \]
    where $W_1$ and $W_2$ are both scalars.
    Recall (from Section~\ref{sec:relu}) that for $\ReLU$ to be equivariant,
    the representation acting on the output of $W_1$ has to be a scaled permutation representation.
    The only one-dimensional permutation representation is the trivial representation,
    but this means that $W_1 = 0$ and so $f$ is constant $0$.
    Also consider the non-equivariant net $\tilde f$ with $\tilde W_1 = 1, \tilde W_2 = 1$.
    Since $\tilde f(x) = f(x)$ for $x \leq 0$ and $\tilde f(x)=s(x)$ for $x > 0$, it is clear that
    \[
        \E_{x\sim \mu} \|\tilde f(x) - s(x) \| = \frac{1}{2} \E_{x\sim \mu} \| f(x)-s(x) \|,
    \]
    showing that in this example, non-equivariant nets outperform the layerwise equivariant $f$.
\end{example}   

The reason that non-equivariant nets outperform equivariant ones in Example~\ref{ex:equiv-bad}
is that the capacity of the net is so low that requiring equivariance leads to a degenerate net.
If we increase the intermediate feature dimension from $1$ to $2$, then equivariant nets can perfectly
fit $s(x)=|x|$ (by taking $W_1 = \begin{pmatrix} 1 & -1 \end{pmatrix}^T$, $W_2 = \begin{pmatrix} 1 & 1 \end{pmatrix}$).
In the next section we take this idea of insufficient capacity for equivariance a step further and 
give a less rigorous but more practically relevant example of a scenario
where non-equivariant nets outperform equivariant nets on an equivariant task.
The idea is that when features cooccur with group transformed versions of themselves, it can
suffice for a network to recognize a feature in a single orientation.
The network can then still be equivariant on data which has such cooccurring features, but
it can be smaller than a layerwise equivariant network which would be forced to be able to recognize features in all group transformed orientations.

\subsection{Cooccurence of equivariant features}
\label{app:cooccur-equiv}

Here we give a more practical example to answer the question \ref{q:elesedy} by \citet{elesedy21a}.

\begin{example}
\label{ex:equiv-cooccur}
This example is illustrated in Figure~\ref{fig:equivariant_cooccurence}.
We consider image classification using a CNN.
For simplicity we will consider invariance to horizontal flipping of the images, but the
argument works equally well for other transformations.
For the sake of argument, assume that our CNN has very limited capacity and can only
pick up the existence of a single local shape in the input image.
The CNN might consist of a single convolution layer with one filter, followed by $\ReLU$,
and spatial average pooling.
As shown in Figure~\ref{fig:equivariant_cooccurence}, if the identifiable features in the images
always cooccur in an image with horizontally flipped versions of themselves, then it suffices
to recognise a feature in a single orientation for the feature detector to be invariant to horizontal flips \emph{on the data}.
We refer to this type of data as having \emph{cooccurence of equivariant features}.
Since a layerwise equivariant CNN with a single filter has to have a horizontally symmetric filter,
it can not be as good as the non-equivariant CNN on this task.
\end{example}

The reader might consider Example~\ref{ex:equiv-cooccur} to be somewhat artificial.
However, we note that it has been demonstrated \citep{geirhos2018imagenettrained} that CNN image classifiers often look at mostly textures to identify classes.
Textures such as the skin of an elephant will often occur in multiple orientations in any given image of an elephant.
A network might hence make better use of its limited capacity by being able to identify multiple
textures than by being able to identify a single texture in multiple orientations.

\begin{figure}
    \centering
    \begin{minipage}{.3\textwidth}
        \includeinkscape[width=.99\textwidth]{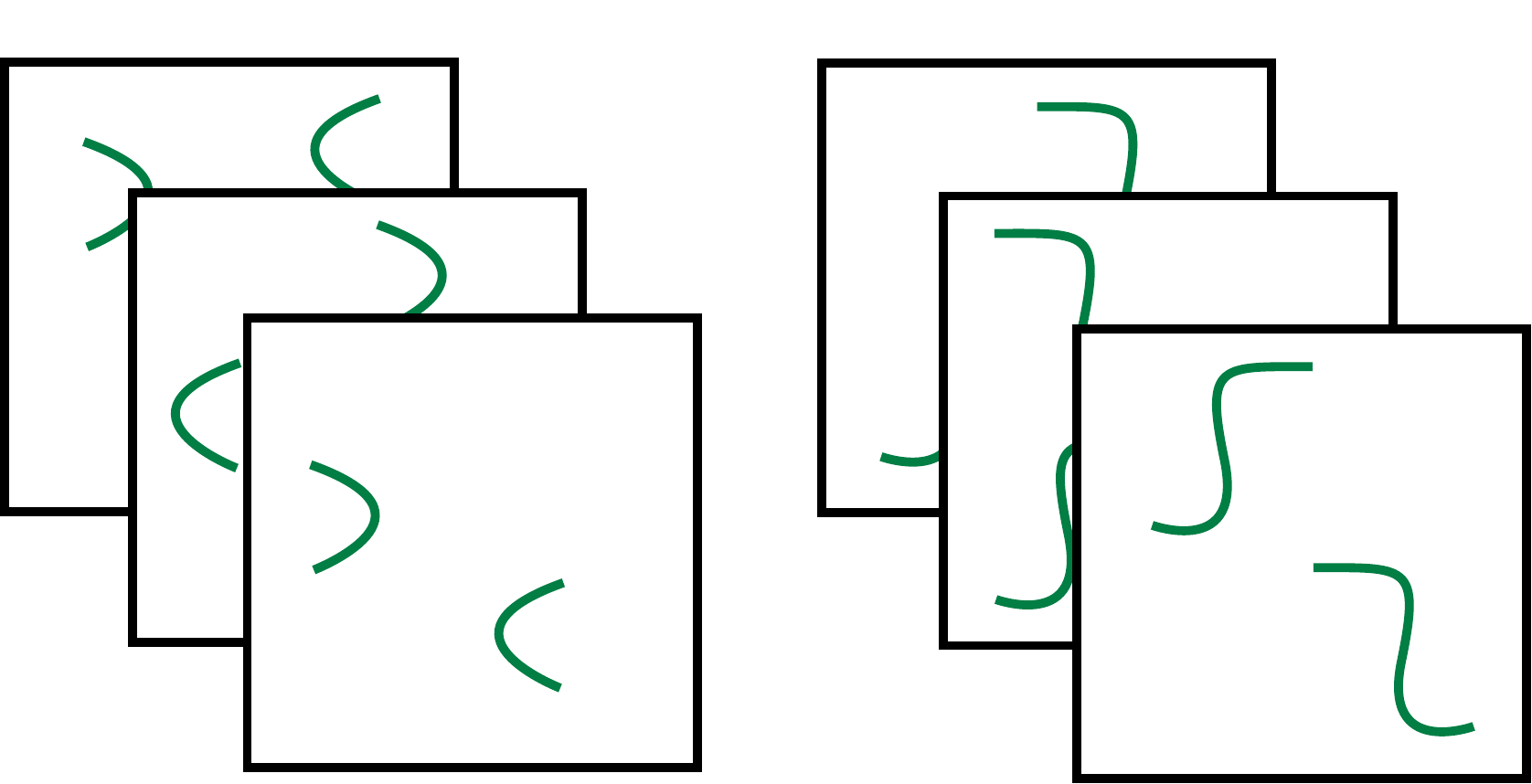_tex}
    \end{minipage}
    \hfill
    {\begin{minipage}{.65\textwidth}
        \caption{
            Illustration of what we term \emph{cooccurence of equivariant features}.
            We imagine training a network to classify the two depicted classes.
            Both classes consist of images containing a shape and horizontally flipped copies of the same shape -- in particular, every image contains the shape in both orientations.
            On this data, a network only recognising the existence of the shape 
            \raisebox{-\mydepth}{\includeinkscape[height=\myheight]{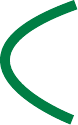_tex}}
            (in this orientation)
            will be invariant to horizontal flips of the input.
            However, such a network is not invariant to horizontal flips of general data and can hence not be written as a layerwise equivariant network.
        }
        \label{fig:equivariant_cooccurence}
    \end{minipage}}
\end{figure}

\section{Rewriting networks to be layerwise equivariant}
\label{app:layerwise}
The aim of this section is to investigate when we can, given an equivariant net, rewrite
the net to be layerwise equivariant. I.e., given an equivariant find a functionally equivalent
layerwise equivariant net
with the same layer dimensions as the original.
We first give a result for small two-layer networks showing that for them we can always do this.
Then we look at the problem more generally in Section~\ref{app:equiv-comp}.
Unfortunately the conclusion there will not be as clear. While we can 
view an equivariant network as a composition of equivariant functions in a certain sense,
it remains unclear whether we can rewrite it
as a layerwise network with the same layer structure as the original.

\begin{restatable}{proposition}{layerwisetwoddegenerate}
\label{prop:2d2layer_degenerate}
Consider the case of two-layer networks
\[
    f(x) = W_2 \ReLU(W_1 x),
\]
where $\ReLU$ is applied point-wise, the input dimension is $2$, the output dimension $1$
and the intermediate dimension $2$ so that
$W_1\in\R^{2\times 2}$, $W_2\in\R^{1\times 2}$.
Any such $f$ that is invariant to permutations of the two input dimensions must be equivalent to a layerwise equivariant network with the same layer dimensions as $f$.
\end{restatable}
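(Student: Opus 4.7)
The plan is to split on $\mathrm{rank}(W_1)$ and reduce the nondegenerate full-rank case to Proposition~\ref{prop:2d2layer2}. If $W_1$ is invertible and both entries of $W_2 = (w_1, w_2)$ are nonzero, positive homogeneity of $\ReLU$ lets me absorb magnitudes by replacing $(W_1, W_2)$ with $(\diag(|w_1|, |w_2|) W_1,\, (\sgn w_1,\, \sgn w_2))$. This preserves invertibility of $W_1$ and brings $W_2$ into the $\pm 1$-form required by Proposition~\ref{prop:2d2layer2} (after reordering rows so the positive signs come first, which preserves layer dimensions). Proposition~\ref{prop:2d2layer2} then delivers layerwise equivariance directly, with $\rho_1$ of the block-permutation form stated there.

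The degenerate cases where $W_2$ has a zero entry or $W_1$ is rank-deficient must be handled separately, since Proposition~\ref{prop:2d2layer2} does not apply. If $w_2 = 0$ (symmetrically for $w_1 = 0$), then $f(x) = w_1 \ReLU(r^\top x)$ where $r$ is the first row of $W_1$; invariance forces $r_1 = r_2$, because otherwise $r^\top x$ and $r^\top \rho_0(h) x$ are linearly independent linear functionals which $\ReLU$ easily distinguishes. Then $\tilde W_1 = \begin{psmallmatrix} r_1 & r_1 \\ 0 & 0 \end{psmallmatrix}$, $\tilde W_2 = (w_1, 0)$ computes the same $f$ and is manifestly layerwise equivariant with $\rho_1 = I$ (both columns of $\tilde W_1$ coincide, so $\tilde W_1 \rho_0(h) = \tilde W_1$). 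If $W_2 = 0$ or $W_1 = 0$, then $f \equiv 0$ and the zero network trivially works.

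For the remaining rank-$1$ subcase $W_1 = ab^\top$ with $a, b \ne 0$, decomposing $a = a^+ - a^-$ componentwise and using $\ReLU(a_i c) = a_i^+ \ReLU(c) + a_i^- \ReLU(-c)$ yields
\[
f(x) = (W_2 a^+)\, \ReLU(b^\top x) + (W_2 a^-)\, \ReLU(-b^\top x).
\]
If $b_1 = b_2$ then $W_1$ already has identical columns and is layerwise equivariant with $\rho_1 = I$. If $b_1 \neq b_2$ then $b^\top x$ and $b^\top \rho_0(h) x$ are linearly independent linear functionals of $x$, so pointwise invariance $f(x) = f(\rho_0(h) x)$ forces $W_2 a^+ = W_2 a^- = 0$; a brief case analysis over the four sign patterns of the entries of $a$ then yields $f \equiv 0$, again handled by the zero network. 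I expect this last sign analysis to be the main obstacle: obtaining $W_2 a^\pm = 0$ from invariance is immediate, but deducing $f \equiv 0$ from it requires exploiting the sign structure of $a$, since $W_2$ is only $1 \times 2$ and the two linear constraints alone do not otherwise pin down $f$.
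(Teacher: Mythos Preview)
Your reduction of the full-rank case to Proposition~\ref{prop:2d2layer2} is clean and in fact shorter than the paper's direct argument, which plugs specific vectors into $W_2\ReLU(Ax)=W_2\ReLU(x)$ to pin down $A=W_1JW_1^{-1}$. That part is fine.

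The rank-$1$ analysis has a genuine gap. You assert that if $b_1\neq b_2$ then $b^\top x$ and $(Jb)^\top x$ are linearly independent functionals. This fails precisely when $b_2=-b_1$, since then $Jb=-b$ and the two functionals are scalar multiples. In that case invariance only gives
\[
\alpha\,\ReLU(c)+\beta\,\ReLU(-c)=\alpha\,\ReLU(-c)+\beta\,\ReLU(c)\quad\text{for all }c\in\R,
\]
which forces $\alpha=\beta$ (equivalently $W_2a=0$), not $\alpha=\beta=0$. The network need not vanish: take $a=b=(1,-1)^\top$ and $W_2=(1,1)$, giving $f(x)=|x_1-x_2|$, which is invariant, has $b_1\neq b_2$, and has $\alpha=\beta=1$. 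The paper's proof treats exactly this situation as its Case~II of the rank-$1$ analysis, deducing $u_2=-u_1$ and $v_2=-v_1$ and observing that the original $W_1$ is then already equivariant with $\rho_1(h)=J$. In your framework the fix is equally short: when $b_2=-b_1$ and $\alpha=\beta$, the network equals $\alpha|b^\top x|$, which is realized by $\tilde W_1=\begin{psmallmatrix}b^\top\\-b^\top\end{psmallmatrix}$, $\tilde W_2=(\alpha,\alpha)$, layerwise equivariant with $\rho_1(h)=J$. (The same linear-independence caveat technically applies in your $w_2=0$ case at $r_1=-r_2$, but there the conclusion $r_1=r_2$ survives because a single $\ReLU$ term cannot satisfy $\ReLU(c)=\ReLU(-c)$.)

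Separately, the ``main obstacle'' you anticipate is not one: once $W_2a^+=W_2a^-=0$ you have $\alpha=\beta=0$, and your own displayed formula for $f$ gives $f\equiv 0$ immediately, with no sign analysis on $a$ required.
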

\begin{proof}
Note that non-negative constants factor through $\ReLU$ so that
\[
    W_2 \begin{pmatrix} \alpha & 0 \\ 0 & \beta \end{pmatrix} \ReLU(W_1 x)
    = W_2 \ReLU\left(\begin{pmatrix} \alpha & 0 \\ 0 & \beta \end{pmatrix} W_1 x\right),
\]
for any $\alpha \geq0,~\beta\geq0$.
Hence we can assume that $W_2$ only contains elements $1, -1$ by factoring the non-negative constants into $W_1$. So (WLOG) we will only have to consider
$W_2= \begin{pmatrix} 1 & \pm 1\end{pmatrix}$.

Let 
\[
    J=\begin{pmatrix} 0 & 1 \\ 1 & 0 \end{pmatrix}
\]
be the permutation matrix that acts on the input.
It will also be convenient to introduce the notation $f_2(x) = W_2\ReLU(x)$ for the last part of the network so that we can write the invariance condition $\forall x\in\R^2:\,f(Jx)=f(x)$ as $\forall x\in\R^2: f_2(W_1Jx)=f_2(W_1x)$.

We first consider the case when $W_1$ has full rank.
In this case $\forall x\in\R^2: f_2(W_1 Jx)=f_2(W_1 x)$ is equivalent to
\begin{equation} \label{eq:f2condition}
\forall x\in\R^2: f_2(W_1 J W_1^{-1} x) = f_2(x).
\end{equation}
For convenience we set
\[
    W_1 J W_1^{-1} = A = \begin{pmatrix} a & b \\ c & d \end{pmatrix}
\]
which satisfies $A^2 = I$ ($A$ is an involution)\footnote{Here we note that since $(W_1 J W_1^{-1})^2 = I$, the map $\rho(J) = W_1 J W_1^{-1}$ actually defines a representation of $S_2$ on $\R^2$.
We are hence now trying to find for which representations $\rho$ of $S_2$, $f_2$ is invariant.}.
It is easy to show that this means that either
\[
    A = \begin{pmatrix} a & b \\ c & -a\end{pmatrix}, 
\]
where
\begin{equation} \label{eq:involution}
a^2 + bc = 1,
\end{equation}
or $A = \pm I$.
However, the $A = \pm I$ case is easy to rule out since it implies
\[
    \pm W_1 = W_1 J,
\]
meaning that the columns of $W_1$ are linearly dependent, but we assumed full rank.
Explicitly writing out \eqref{eq:f2condition} with $W_1 J W_1^{-1} = A$, we get that for all $x\in\R^2$ we must have
\begin{equation} \label{eq:f2condition2}
    W_2 \ReLU(A x) = W_2 \ReLU(x).
\end{equation}
Now we divide into two cases depending on the form of $W_2$.
The strategy will be to plug in the columns of $A$ as $x$ and use the fact that $A^2=I$.

\begin{itemize}
 \item Case I: $W_2 = \begin{pmatrix} 1 & -1\end{pmatrix}$. By plugging $x = \begin{pmatrix}a & c \end{pmatrix}^T$ into \eqref{eq:f2condition2} we see that 
 \[
    1 = \ReLU(a) - \ReLU(c),
 \]
 meaning that $a\geq 1$. Similarly, plugging in $x = \begin{pmatrix}b & -a \end{pmatrix}^T$ we see that
 \[
    -1 = \ReLU(b) - \ReLU(-a),
 \]
 implying that $a\leq -1$. We have reached a contradiction and can conclude
 $W_2 \neq \begin{pmatrix} 1 & -1\end{pmatrix}$.
 \item Case II: $W_2 = \begin{pmatrix} 1 & 1\end{pmatrix}$.
 This time we will need four plug-and-plays with \eqref{eq:f2condition2}.
 \begin{enumerate}
     \item $x = \begin{pmatrix}-a & -c \end{pmatrix}^T$ yields
     \[
        0 = \ReLU(-a) + \ReLU(-c),
     \]
     so $a \geq 0$ and $c\geq 0$.
     \item $x = \begin{pmatrix}-b & a \end{pmatrix}^T$ yields
     \[
        0 = \ReLU(-b) + \ReLU(a), 
     \]
     so $a \leq 0$ and $b\geq 0$. From this and the previous we conclude $a=0$.
     \item $x = \begin{pmatrix}a & c \end{pmatrix}^T$ yields
     \[
        1 = \ReLU(a) + \ReLU(c), 
     \]
     which implies $c \leq 1$.
     \item $x = \begin{pmatrix}b & -a \end{pmatrix}^T$ yields
     \[
        1 = \ReLU(b) + \ReLU(-a), 
     \]
     which implies $b \leq 1$.
 \end{enumerate}
 From $a=0$ and $(b, c) \in [0, 1]^2$ we conclude from \eqref{eq:involution} that $b = c = 1$.
 Thus, if $W_2 = \begin{pmatrix} 1 & 1\end{pmatrix}$, then $A = J$, which means that
 \[
    W_1 J W_1^{-1} = J. 
 \]
 But this is precisely the equivariance condition on $W_1$, and since $W_2$ is invariant we have a layer-wise equivariant network.
\end{itemize}

Next, we consider the case when $W_1$ has rank $1$. We can then write $W_1 = u v^T$ for some vectors $u$, $v$ in $\R^2$.
If $v$ contains two equal values, then $W_1$ is row-wise constant, so permutation invariant, and thus the network is layer-wise equivariant.
If $u$ contains a zero, then a row of $W_1$ is zero and the statement to prove 
reduces to the case of a network
with $W_1\in\R^{1\times 2}$, $W_2\in \{-1, 1\}$ in which case it is easy to show that $W_1$ has to be permutation invariant for the network to be invariant.

Finally if $v$ contains two different values and $u$ does not contain a zero, let WLOG $v_1\neq 0$ and consider the following two cases for the invariance equation below
\[
    W_2 \ReLU( uv^T x) = W_2 \ReLU( uv^T J x).
\]
\begin{itemize}
    \item Case I: $u$ contains two values with the same sign. We plug in $x_1=v_1 - v_2$, $x_2=v_2 - v_1$ and find that $uv^T x = (v_1 - v_2)^2 u$, while $uv^T Jx = -(v_1-v_2)^2 u$.
    One of those is mapped to $0$ by $\ReLU$ and the other one isn't, implying that $W_2$ has to map it to $0$, so $W_2 = \begin{pmatrix} 1 & -1\end{pmatrix}$ and the two values in $u$ are the same.
    This leads to a zero network which can be equivalently written layer-wise equivariant.
    \item Case II: $u_1$ and $u_2$ have different signs. WLOG let $u_1 > 0$. Plugging in the same values as above, we find that $W_1=\begin{pmatrix} 1 & 1\end{pmatrix}$ and $u_2 = -u_1$.
    Next, we plug in $x_1=\sign(v_1)$, $x_2=0$ yielding $uv^T x = |v_1| u$ and $uv^T Jx=\sign(v_1)v_2 u$.
    Since $v_2 \neq v_1$ we must now have that $v_2 = -v_1$.
    However, $u_2 = -u_1$ and $v_2 = -v_1$ in fact yields an equivariant 
    \[
        W_1 = \begin{pmatrix}u_1 v_1 & -u_1 v_1 \\ -u_1 v_1 & u_1 v_1 \end{pmatrix}
    \]
    and since $W_2$ is invariant, we have a layer-wise equivariant network.
\end{itemize}
\end{proof}

\subsection{Equivariant composition of functions}
\label{app:equiv-comp}

In this section we switch to a more abstract formulation of layerwise equivariance to
see what it means for individual functions $\phi$ and $\psi$ if their composition $\psi\circ\phi$ is equivariant.
For a related discussion of equivariance under general group actions we refer the reader to~\citep{NEURIPS2022_dcd29769}.

\begin{definition}
    Given a group $G$ and a set $X$, a \emph{group action} $\alpha$ of $G$ on $X$ is a function 
    \[
        \alpha : G\times X\to X
    \]
    such that for the identity element $i$ of $G$ and any $x\in X$ we have
    \begin{equation}
        \alpha(i, x) = x.
    \end{equation}
    and for any two $g, h\in G$ and any $x\in X$ we have
    \begin{equation}
        \alpha(hg, x) = \alpha(h, \alpha(g, x)).
    \end{equation}
\end{definition}

First of all it is practical to recall that invertible functions transfer
group actions from domain to codomain and vice versa.
\begin{restatable}{lemma}{lemmainv}
\label{lem:invert}
    Given a group $G$ and an invertible function $\phi: X\to Y$.
    \begin{enumerate}
        \item 
        If there is a group action $\alpha_X$ on $X$, then we can define a group action
        $\alpha_Y$ on $Y$ by
        \[
            \alpha_Y(g, y) = \phi(\alpha_X(g, \phi^{-1}(y)))
        \]
        and $\phi$ is equivariant with respect to this group action.
        Furthermore, this is the unique group action on $Y$ that makes $\phi$ equivariant.
        \item
        Similarly, if there is a group action $\alpha_Y$ on $Y$, then we can define a group action
        $\alpha_X$ on $X$ by
        \[
            \alpha_X(g, x) = \phi^{-1}(\alpha_Y(g, \phi(x)))
        \]
        and $\phi$ is equivariant with respect to this group action.
        Furthermore, this is the unique group action on $X$ that makes $\phi$ equivariant.
    \end{enumerate}
\end{restatable}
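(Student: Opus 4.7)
The plan is to treat both parts of the lemma symmetrically: part 2 will follow from part 1 applied to $\phi^{-1}:Y\to X$ (which is itself invertible), so the real work is just to prove part 1. That proof has three routine verifications: (a) $\alpha_Y$ as defined is actually a group action, (b) $\phi$ is equivariant for this action, (c) it is the unique such action.

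For (a), I would plug in directly. The identity axiom is immediate:
\[
\alpha_Y(i, y) = \phi(\alpha_X(i, \phi^{-1}(y))) = \phi(\phi^{-1}(y)) = y.
\]
The composition axiom is a short chain: expand $\alpha_Y(h, \alpha_Y(g, y))$ by the definition, then collapse the inner $\phi^{-1}\circ\phi$, then use that $\alpha_X$ is a group action to combine $\alpha_X(h, \alpha_X(g, \cdot)) = \alpha_X(hg, \cdot)$, and finally recognise the result as $\alpha_Y(hg, y)$. For (b), unfolding the definition of $\alpha_Y$ at $y=\phi(x)$ gives $\alpha_Y(g,\phi(x)) = \phi(\alpha_X(g, \phi^{-1}(\phi(x)))) = \phi(\alpha_X(g,x))$, which is exactly the equivariance condition.

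For (c), suppose $\alpha'_Y$ is any group action on $Y$ making $\phi$ equivariant. Then for every $y\in Y$, writing $y = \phi(\phi^{-1}(y))$ and applying equivariance gives
\[
\alpha'_Y(g, y) = \alpha'_Y(g, \phi(\phi^{-1}(y))) = \phi(\alpha_X(g, \phi^{-1}(y))) = \alpha_Y(g, y),
\]
so $\alpha'_Y = \alpha_Y$. Part 2 is then obtained at no extra cost: $\phi^{-1}$ is invertible with inverse $\phi$, so applying part 1 to $\phi^{-1}$ with the group action $\alpha_Y$ yields the formula $\alpha_X(g,x) = \phi^{-1}(\alpha_Y(g,\phi(x)))$, and the equivariance and uniqueness statements transfer automatically.

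I do not expect any real obstacle here; the lemma is a bookkeeping statement about transporting structure across a bijection, and every step reduces to substituting definitions and cancelling $\phi^{-1}\circ\phi$ or $\phi\circ\phi^{-1}$. The only point to be slightly careful about is convention: the statement uses the left-action convention $\alpha(hg,x)=\alpha(h,\alpha(g,x))$, and the composition check in (a) needs to be written in that order rather than the opposite one, so I would make sure to carry out the substitution in the order $\alpha_Y(h, \alpha_Y(g, y))$ rather than the reverse.
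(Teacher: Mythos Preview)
Your proposal is correct and follows essentially the same approach as the paper: verify the group action axioms by direct substitution, read off equivariance/uniqueness from the defining formula, and handle part~2 by symmetry. The only cosmetic differences are that the paper proves uniqueness first and then the action axioms, and says part~2 is ``analogous'' rather than invoking part~1 for $\phi^{-1}$ as you do.
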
 
\begin{proof}
    We prove 1., the proof of 2. is analogous.
    If there is a group action $\alpha_Y$ on $Y$ making $\phi$ equivariant, then we must have
    \[
        \alpha_Y(g,\phi(x)) = \phi(\alpha_X(g, x)),
    \]
    for all $x\in X$.
    So by a change of variable $x \rightsquigarrow \phi^{-1}(t)$,
    \[
        \alpha_Y(g, t) = \phi(\alpha_X(g, \phi^{-1}(t))),
    \]
    for all $t\in X$. This proves uniqueness.
    Finally, $\alpha_Y$ is a group action as
    \[
        \alpha_Y(i, y) = \phi(\alpha_X(i, \phi^{-1}(y))) = \phi(\phi^{-1}(y)) = y
    \]
    and 
    \[
    \begin{split}
        \alpha_Y(hg, y) &= \phi(\alpha_X(hg, \phi^{-1}(y))) \\
        &= \phi(\alpha_X(h, \alpha_X(g, \phi^{-1}(y)))) \\
        &= \phi(\alpha_X(h, \phi^{-1}\circ\phi(\alpha_X(g, \phi^{-1}(y))))) \\
        &= \alpha_Y(h, \phi(\alpha_X(g, \phi^{-1}(y)))) \\
        &= \alpha_Y(h, \alpha_Y(g, y)).
    \end{split}
    \]
\end{proof}

We can now state a proposition telling us that if a composite function is equivariant, then it must be a composite of equivariant functions.

\begin{restatable}{proposition}{propequivcomp}
\label{prop:equivcomp}
    Assume that we are given a group $G$ and two sets $X$ and $Z$ with group actions
    $\alpha_X$ and $\alpha_Z$.
    Assume also that we are given a further set $Y$ and two functions
    $\phi: X\to Y$ and $\psi: Y\to Z$ such that their
    composition $\psi\circ\phi: X\to Z$ is $G$-equivariant w.r.t. $\alpha_X$ and $\alpha_Z$.
    \begin{enumerate}
    \item If $\psi$ is invertible we can define a group action $\alpha_Y$ on $Y$
    such that $\phi$ and $\psi$ are $G$-equivariant with respect to this action.
    \item If $\psi$ is not invertible, introduce $Y'=\phi(X)/\sim$
where $\sim$ is the equivalence relation identifying elements of $Y$ that map to the same element of $Z$, i.e., 
\[
y_1\sim y_2 :\Longleftrightarrow \psi(y_1) = \psi(y_2).
\]
    Then we can define a group action $\alpha_{Y'}$ on $Y'$ such that $\phi$ and $\psi$ are equivariant when seen as functions to/from $Y'$.
    \end{enumerate}
\end{restatable}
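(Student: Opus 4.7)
The plan is to treat the two cases separately, as in each case the construction of the intermediate action is essentially forced. For Case 1 (invertible $\psi$), I would apply part 2 of Lemma~\ref{lem:invert} to $\psi$, which transports $\alpha_Z$ back through $\psi^{-1}$ to a unique action $\alpha_Y(g, y) = \psi^{-1}(\alpha_Z(g, \psi(y)))$ on $Y$ making $\psi$ equivariant by construction. It then remains only to verify that $\phi$ is equivariant with respect to $\alpha_X$ and this $\alpha_Y$, which follows from a short calculation: $\alpha_Y(g, \phi(x)) = \psi^{-1}(\alpha_Z(g, (\psi \circ \phi)(x))) = \psi^{-1}((\psi \circ \phi)(\alpha_X(g, x))) = \phi(\alpha_X(g, x))$, where the middle equality uses the hypothesized equivariance of $\psi \circ \phi$.

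For Case 2, the natural candidate is to define $\alpha_{Y'}(g, [\phi(x)]) := [\phi(\alpha_X(g, x))]$, so that we push the action through a chosen preimage under $\phi$. The main work is to check well-definedness: if $\phi(x_1) \sim \phi(x_2)$, i.e., $\psi(\phi(x_1)) = \psi(\phi(x_2))$, then I need $[\phi(\alpha_X(g, x_1))] = [\phi(\alpha_X(g, x_2))]$, which by the definition of $\sim$ unpacks to $\psi(\phi(\alpha_X(g, x_1))) = \psi(\phi(\alpha_X(g, x_2)))$. Using the equivariance of $\psi \circ \phi$, both sides become $\alpha_Z(g, \psi(\phi(x_i)))$, which are equal by hypothesis. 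The group action axioms for $\alpha_{Y'}$ then transport term-by-term from those of $\alpha_X$.

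Finally, since $\sim$ is by construction the kernel relation of $\psi$, the restriction of $\psi$ to $\phi(X)$ descends to a well-defined (even injective) map $Y' \to Z$. Equivariance of $\phi$ (post-composed with the quotient projection, regarded as a map $X \to Y'$) is immediate from the definition of $\alpha_{Y'}$. Equivariance of the descended $\psi$ follows from the same kind of short calculation as in Case~1: $\psi(\alpha_{Y'}(g, [\phi(x)])) = \psi(\phi(\alpha_X(g, x))) = \alpha_Z(g, (\psi \circ \phi)(x)) = \alpha_Z(g, \psi([\phi(x)]))$.

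The main obstacle I expect is bookkeeping around the quotient structure in Case~2---specifically the fact that the action must be defined via a preimage in $X$ under $\phi$, not via an element of $Y$ itself, which is precisely why the definition $Y' = \phi(X)/\sim$ (rather than $Y/\sim$) is essential. The equivalence relation is tailored exactly to the kernel of $\psi$, so that both well-definedness and the two equivariance statements fall out automatically from the equivariance of $\psi \circ \phi$; no deeper input is required.
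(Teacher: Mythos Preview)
Your proof is correct. Case~1 matches the paper's argument exactly: you invoke Lemma~\ref{lem:invert} to pull $\alpha_Z$ back through $\psi^{-1}$ and then verify equivariance of $\phi$ by the same one-line calculation.

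For Case~2 you take a genuinely different (though equivalent) route. The paper observes that the induced map $\psi:Y'\to\psi\circ\phi(X)$ is a bijection, checks that $\alpha_Z$ restricts to an action on the image $\psi\circ\phi(X)$ (using equivariance of $\psi\circ\phi$), and then simply reapplies the Case~1 construction to this bijection. You instead \emph{push forward} the action from $X$: you define $\alpha_{Y'}(g,[\phi(x)]):=[\phi(\alpha_X(g,x))]$ directly and check well-definedness via the kernel relation. Both constructions yield the same action, as one sees from $\psi^{-1}\bigl(\alpha_Z(g,\psi(\phi(x)))\bigr)=\psi^{-1}\bigl(\psi(\phi(\alpha_X(g,x)))\bigr)=[\phi(\alpha_X(g,x))]$. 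The paper's reduction is slightly slicker in that it avoids re-verifying the group-action axioms, while your approach is more self-contained and makes the role of the quotient (and why $Y'=\phi(X)/\!\sim$ rather than $Y/\!\sim$) more transparent. Either way, no idea is missing.
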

\begin{proof}
Assume first that $\psi$ is invertible.
From Lemma~\ref{lem:invert} we get a group action $\alpha_Y$ on $Y$
defined by
\[
    \alpha_Y(g, y) = \psi^{-1}(\alpha_Z(g, \psi(y))).
\]
Lemma~\ref{lem:invert} also states that $\psi$ is equivariant with this choice of $\alpha_Y$.
Next we show that $\phi$ is equivariant, which follows from the fact
that $\psi\circ\phi$ is equivariant:
\[
\begin{split}
    \alpha_Y(g, \phi(x)) &= \psi^{-1}(\alpha_Z(g, \psi\circ\phi(x))) \\
    &= \psi^{-1}(\psi\circ\phi(\alpha_X(g, x))) \\
    &= \phi(\alpha_X(g, x)).
\end{split}
\]

If $\psi$ is not invertible, then $\psi$ is anyway invertible as a map $\psi: Y'\to\psi\circ\phi(X)$.
Note that the action $\alpha_Z$ is well defined even when restricting from $Z$ to $\psi\circ\phi(X)$,
because by the equivariance of $\psi\circ\phi$,
$\alpha_Z$ can't move an element
from $\psi\circ\phi(X)$ to $Z\setminus\psi\circ\phi(X)$:
\[
\alpha_Z(g, \psi\circ\phi(x)) = \psi\circ\phi(\alpha_X(g, x))\in \psi\circ\phi(X).
\]
Hence the earlier argument for constructing $\alpha_Y$ works for
constructing a group action $\alpha_{Y'}$ on $Y'$ such that the maps $\phi$ and $\psi$ (appropriately restricted) become equivariant.
\end{proof}

Looking back at Example~\ref{ex:zero-net}, with $\phi=W_1$, $\psi=W_2\ReLU(\cdot)$, we see that $Y'$ in that case would be a single point space since $W_2$ maps everything to $0$.
The action $\alpha_{Y'}$ would hence be trivial.

As mentioned before, the main problem with changing $Y$ to $Y'$ as above is that
if $X, Y, Z$ are feature spaces in a neural network $\psi\circ\phi$, where $\phi$ and $\psi$ themselves might be decomposable into multiple linear layers and activation functions, there is no guarantee that $Y'$ will be a vector space or that $\phi$ and $\psi$ can still be expressed as decompositions into linear layers and activation functions when changing from $Y$ to $Y'$.
Still, the intuition that the equivariance should be preserved layerwise holds in the interpretation provided by Proposition~\ref{prop:equivcomp}.

\section{Network layers with bias} \label{app:biaslayers}
In this section we generalize Proposition~\ref{prop:2d2layer} to work for layers with bias, i.e. $x\mapsto Wx + t$ for some bias vector $t$.
The discussion up to Section~\ref{sec:relu} works with biases as well, as does Lemma~\ref{lem:relu} (as it regards ReLU and not the layers themselves). For Proposition~\ref{prop:2d2layer} it gets more complicated.
What makes the proof of easy is that given an invertible linear layer $\ell(x)= Wx$, a group representation $\rho(g)$ on $x$ is transferred to a group representation $\alpha(g, x) =\ell(\rho(g)\ell^{-1}(x))=W\rho(g)W^{-1}x$ on $\ell(x)$ w.r.t. which $W$ is equivariant.
If we consider an affine layer $\ell(x) = Wx + t$ with bias vector $t$, then we can play the same game, but $\rho(g)$ is not transferred to a group representation (linear action) but instead to the affine action $\alpha(g, x) = W\rho(g)W^{-1}x + (I - W\rho(g)W^{-1})t$ on $\ell(x)$.
This means that we can not apply Lemma~\ref{lem:relu}. We can however find a generalization as follows:

\begin{lemma} If $A$, $B$ are $n\times n$ invertible matrices and $a$ and $b$ are $n$-vectors, such that $\mathtt{ReLU}(Ax + a) = B\mathtt{ReLU}(x) + b$ for all $x\in\mathbb{R}$, then $a=b=0$.
\end{lemma}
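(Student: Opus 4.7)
The plan is to exploit the piecewise-linear nature of $\mathtt{ReLU}$ by restricting the identity to the negative and positive orthants separately, and then extract sign/structural constraints on $A$, $B$, $a$, $b$ componentwise.

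First I would show $b=0$. On the negative orthant $\{x : x_i \le 0 \text{ for all } i\}$, we have $\mathtt{ReLU}(x)=0$, so the identity reduces to $\mathtt{ReLU}(Ax+a)=b$ for every such $x$. Writing $x=-y$ with $y\ge 0$ and looking at the $i$-th coordinate, $\mathtt{ReLU}\bigl(a_i-(Ay)_i\bigr)=b_i$ must hold as an identity in $y\ge 0$. If $b_i>0$ then the argument must equal $b_i$ for all $y\ge 0$, which forces $A_{ik}=0$ for every $k$ and so the $i$-th row of $A$ vanishes, contradicting invertibility of $A$. Hence $b_i=0$ for all $i$, so $b=0$. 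As a byproduct, the same identity on the negative orthant gives $a_i-(Ay)_i\le 0$ for all $y\ge 0$; taking $y=t e_k$ and letting $t\to\infty$ forces $A_{ik}\ge 0$, and taking $y=0$ gives $a_i\le 0$. Thus $A\ge 0$ entrywise and $a\le 0$ entrywise.

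Next I would show $a=0$ by restricting to the positive orthant. There $\mathtt{ReLU}(x)=x$ and, using $b=0$, the identity becomes $\mathtt{ReLU}(Ax+a)=Bx$ for all $x\ge 0$. Setting $x=te_k$ with $t\ge 0$ and looking at the $i$-th coordinate yields
\begin{equation*}
\mathtt{ReLU}(tA_{ik}+a_i)=tB_{ik}\qquad\text{for all }t\ge 0.
\end{equation*}
Since the left-hand side is nonnegative, $B_{ik}\ge 0$. If $B_{ik}>0$, then for all sufficiently large $t$ the argument of $\mathtt{ReLU}$ must be positive, so $tA_{ik}+a_i=tB_{ik}$ for all such $t$, which forces $A_{ik}=B_{ik}$ and $a_i=0$. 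Because $B$ is invertible, each row of $B$ contains at least one nonzero, hence positive, entry; so for every $i$ there exists $k$ with $B_{ik}>0$, giving $a_i=0$. Therefore $a=0$, completing the proof.

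The only subtlety I anticipate is making the componentwise case analysis airtight, in particular invoking invertibility of $A$ in Step 1 and of $B$ in Step 2 at precisely the right moment to rule out the degenerate cases; the limit arguments as $t\to\infty$ and $t\to 0^+$ are routine once the correct test vectors $x=\pm te_k$ are chosen.
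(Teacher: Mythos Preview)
Your proof is correct, but the route is genuinely different from the paper's. The paper plugs in four carefully chosen points, $x=0$, $x=A^{-1}a$, $x=-A^{-1}a$, and $x=-2A^{-1}a$, and combines the resulting identities with the positive homogeneity of $\ReLU$ and the invertibility of $B$ to force first $b=0$ and then $a=0$; invertibility of $A$ is used only implicitly (to form $A^{-1}a$). You instead restrict to the nonpositive and nonnegative orthants and probe along coordinate rays $x=\pm t e_k$, extracting componentwise constraints and invoking invertibility of $A$ in the first step and of $B$ in the second. Your argument is longer but yields structural byproducts the paper's does not ($A\ge 0$ entrywise, $a\le 0$, $B\ge 0$, and $A_{ik}=B_{ik}$ whenever $B_{ik}>0$), which dovetail nicely with the conclusion of Lemma~\ref{lem:relu}. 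The paper's argument is slicker and coordinate-free; yours is more elementary and makes the mechanism visible row by row. Both are complete.
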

\begin{proof}
Inserting $x=0$ yields $\mathtt{ReLU}(a)=b$. Inserting $x=A^{-1}a$ yields $\mathtt{ReLU}(2a)=B\mathtt{ReLU}(A^{-1}a)+b$ so that $b=B\mathtt{ReLU}(A^{-1}a)$. Inserting $x=-A^{-1}a$ yields $0=B\mathtt{ReLU}(-A^{-1} a) + b$ so $b=-B\mathtt{ReLU}(-A^{-1}a)$. Combined we have that $B^{-1}b=\mathtt{ReLU}(A^{-1}a)=-\mathtt{ReLU}(-A^{-1}a)$ so that $B^{-1}b$ must be zero and hence so must $b$. Finally, inserting $x=-2A^{-1}a$ yields $\mathtt{ReLU}(-a)=B\mathtt{ReLU}(-2A^{-1}a) + b$ so that $\mathtt{ReLU}(-a)=-b=0$ which combined with $\mathtt{ReLU}(a)=b=0$ gives $a=0$.
\end{proof}

This lemma shows that if $\mathtt{ReLU}$ commutes with affine actions, then the affine actions must in fact be linear and so Lemma~\ref{lem:relu} applies.
This shows that Proposition~\ref{prop:2d2layer} holds with affine layers as well. We thank the anonymous reviewer who prompted this generalization.

\end{document}